\pdfoutput=1
\documentclass[opre]{informs4}

\RequirePackage{tgtermes}
\RequirePackage{newtxtext}
\RequirePackage{newtxmath}
\RequirePackage{bm}
\makeatletter
\DeclareFontEncoding{LS1}{}{}
\DeclareFontSubstitution{LS1}{stix}{m}{n}
\DeclareSymbolFont{STIXlargesymbols}{LS1}{stix}{m}{it}
\renewcommand{\widetilde}{\mathaccent"0\hexnumber@\symSTIXlargesymbols 9B }
\makeatother

\newlength{\DSnat}
\newlength{\DSstr}
\newlength{\DSshr}
\newlength{\DSbelow}
\renewcommand{\Dskips}[4]{%
  \setlength{\DSnat}{#1em}\setlength{\DSnat}{0.8\DSnat}%
  \setlength{\DSstr}{#2em}\setlength{\DSstr}{0.8\DSstr}%
  \setlength{\DSshr}{#3em}\setlength{\DSshr}{0.8\DSshr}%
  \setlength{\DSbelow}{#4em}\setlength{\DSbelow}{0.8\DSbelow}%
  \setlength{\abovedisplayskip}{1\DSnat plus 1\DSstr minus 1\DSshr}%
  \setlength{\abovedisplayshortskip}{0pt plus 1\DSstr}%
  \setlength{\belowdisplayshortskip}{1\DSbelow plus 1\DSstr minus 1\DSstr}%
  \setlength{\belowdisplayskip}{\abovedisplayskip}}
\OneAndAHalfSpacedXI

\usepackage{microtype}
\usepackage{mathtools}
\usepackage{enumitem}
\usepackage{algorithm}
\usepackage[noend]{algorithmic}
\usepackage{booktabs}
\usepackage{siunitx}
\usepackage{xcolor}
\usepackage{natbib}
 \bibpunct[, ]{(}{)}{,}{a}{}{,}%
 \def\bibfont{\small}%
\usepackage{xurl}
\usepackage[hidelinks]{hyperref}
\usepackage[capitalize,noabbrev]{cleveref}

\EquationsNumberedThrough
\TheoremsNumberedThrough
\def\ExampleTextFont{\thit}
\def\theARTICLETOPLEFT{}
\def\theARTICLETOPRIGHT{}
\makeatletter
\def\theARTICLEABSTRACT{%
  \vspace*{18pt}%
  \begin{minipage}[t]{\textwidth}\parindent1em\ABSfont
    \noindent\theABSTRACT\endgraf
    \vskip5pt
    \theKEYWORDS
    \noindent\hrulefill
  \end{minipage}%
  \vspace*{0pt}%
}
\makeatother

\let\AppendixHeadSize=\normalsize
\let\AppendixFontSize=\normalsize
\let\AppendixSectionHeadSize=\normalsize

\makeatletter
\def\theRRHFirstLine{}%
\def\theLRHFirstLine{}%
\def\theRRHSecondLine{}%
\def\theLRHSecondLine{}%
\makeatother

\AddToHook{env/algorithm/begin}{\SingleSpacedXI}
\AddToHook{env/algorithmic/begin}{\footnotesize}

\crefname{assumption}{Assumption}{Assumptions}
\crefformat{section}{\S#2#1#3}
\crefformat{subsection}{\S#2#1#3}
\crefname{appendix}{Appendix}{Appendices}

\allowdisplaybreaks
\setlist{itemsep=2pt,topsep=4pt}

\let\origendproof\endproof
\def\endproof{\Halmos\origendproof}

\newcommand{\R}{\mathbb R}
\newcommand{\E}{\mathbb E}
\newcommand{\Pp}{\mathbb P}
\newcommand{\cH}{\mathcal H}
\newcommand{\cL}{\mathcal L}
\newcommand{\cJ}{\mathcal J}
\newcommand{\cY}{\mathcal{Y}}

\newcommand{\Var}{\mathbb V}
\newcommand{\Cov}{\operatorname{Cov}}
\newcommand{\FamilyRow}[2]{\mathmakebox[0.36\linewidth][l]{#1}\mathmakebox[0.62\linewidth][l]{#2}}
\newlength{\CaseWidth}
\AtBeginDocument{\settowidth{\CaseWidth}{$z\,I_{z/r}(a_i,b_i)-\mu_i\,I_{z/r}(a_i+1,b_i),$}}
\newcommand{\CaseFirst}[1]{\mathmakebox[\CaseWidth][l]{#1}}

\newsavebox{\FDbox}
\newlength{\FDhalf}
\newlength{\FDshift}
\newlength{\FDup}
\newlength{\FDdown}
\NewDocumentCommand{\FitDelims}{mmm}{\mathinner{\mathpalette\FitDelimsBody{{#1}{#2}{#3}}}}
\NewDocumentCommand{\FitParens}{m}{\FitDelims{(}{)}{#1}}
\newcommand{\FDmidplain}[1]{\mathord{#1}}
\let\FitMid\FDmidplain
\newcommand{\FitDelimsBody}[2]{\FitDelimsDo#1#2}
\newcommand{\FitDelimsDo}[4]{%
  \sbox{\FDbox}{$\mathsurround=0pt \let\FitMid\FDmidplain #1#4$}%
  \setlength{\FDhalf}{\ht\FDbox}%
  \addtolength{\FDhalf}{\dp\FDbox}%
  \divide\FDhalf by 2
  \setlength{\FDup}{\FDhalf}%
  \addtolength{\FDup}{\fontdimen22\textfont2}%
  \setlength{\FDdown}{\FDhalf}%
  \addtolength{\FDdown}{-\fontdimen22\textfont2}%
  \setlength{\FDshift}{\ht\FDbox}%
  \addtolength{\FDshift}{-\FDup}%
  \renewcommand{\FitMid}[1]{\FDdelim{\left##1}{\right.}}%
  \mathopen{\FDdelim{\left#2}{\right.}}#4\mathclose{\FDdelim{\left.}{\right#3}}}
\newcommand{\FDdelim}[2]{%
  \raisebox{\FDshift}{$\delimiterfactor=1000 \delimitershortfall=0pt \nulldelimiterspace=0pt #1\vrule height\FDup depth\FDdown width 0pt#2$}}

\MANUSCRIPTNO{}

\begin{document}

\RUNAUTHOR{Goldberger and Rudi} \RUNTITLE{A Simple and General Approach to top-$k$ Selection}

\TITLE{A Simple and General Approach to top-$k$ Selection}

\ARTICLEAUTHORS{%
\AUTHOR{Motti Goldberger} \AFF{Yale School of Management, \EMAIL{motti.goldberger@yale.edu}} \AUTHOR{Nils Rudi} \AFF{Yale School of Management, \EMAIL{nils.rudi@yale.edu}} }

\ABSTRACT{ This paper considers the Bayesian top-$k$ identification problem, wherein a decision maker is tasked with selecting $k$ items from a pool of items of unknown quality. To assess the items, the decision maker adaptively allocates a fixed sampling budget, where each sample of an item yields a noisy observation of its quality, with the objective of minimizing the expected opportunity cost of the selected set. We propose an approach that allocates each batch of samples by minimizing a surrogate objective derived from an upper bound on the expected opportunity cost. The approach is general: samples can be allocated in batches, the selection size $k$ is arbitrary, and it applies to many sampling distributions. It is also simple and intuitive. We prove that the method's posterior expected opportunity cost decays at an exponential rate of at least half the best rate achievable by any sampling rule. In simulations across a wide range of settings, the method is competitive with, and often improves upon, the best existing algorithm in each setting. }

\KEYWORDS{Ranking-and-Selection, Best-Arm Identification, Top-$k$ Identification}

\maketitle

\section{Introduction}

This paper studies the problem of a decision maker who faces a large pool of items with unknown quality, can assess them through noisy and costly samples, and wants to identify and select the best item(s). This is commonly referred to as a \emph{ranking and selection} task in the operations literature \citep{ChenLin2000,chenLee2011,HongFanLuo2021}, and as a \emph{best-arm(s) identification} task in the multi-armed bandit literature \citep{kalyanakrishnan2012,kaufmann2016jmlr,russo2020}. This problem arises in many domains, including clinical trials \citep{villar2015,chickGansYapar2022}, A/B/n testing \citep{scott2015,juHuHendersonHong2019}, machine-learning model selection and hyperparameter optimization \citep{snoek2012,jamiesonTalwalkar2016,li2018hyperband}, and simulation optimization \citep{fuGloverApril2005}. In each setting, the key question is how to adaptively allocate a fixed sampling budget to maximize the quality of the final selection.

A central goal of this paper is to treat the problem with generality so that the method is useful across a range of settings. Two aspects of this generality are worth discussing up front. The first concerns how samples are collected. The most adaptive option is to allocate one sample at a time and wait to observe the outcome before choosing the next. However, this can be impractical to implement. For example, in clinical trials, one may not be able to wait to observe an outcome before assigning the next treatment \citep{robertson2023rar}, and in simulation optimization, many simulations may be run in parallel \citep{HongFanLuo2021}. We therefore consider the partially adaptive case, in which \emph{batches} of samples are chosen in rounds, and outcomes are observed between rounds. The second aspect is the target number of items to select. Choosing the single best one ($k=1$) is the most common and most studied case, but some of the applications above instead call for selecting several items. Our method can handle any target selection size $k$, giving it a wider range of applicability.

There are two common metrics for how ``good'' a sample allocation procedure is. One is its probability of correct selection (PCS), the probability that the selected items are the true best items \citep{ChenLin2000,glynnJuneja2004,li2025greedy}. The other is its expected opportunity cost (EOC), sometimes referred to as simple regret, which is the expected shortfall in value between the selected set and the true best set \citep{chickInoue2001,GaoChen2015}. PCS treats every incorrect selection equally, while EOC makes bigger mistakes more costly. For many tasks, EOC, which we adopt as our objective, is the more natural fit since a big mistake can be far worse than a marginal one. That said, the two are closely related. In simulations, a method that performs well on one metric tends to perform well on the other, and our theoretical guarantees apply to both objectives.

Given the objective, the decision maker's problem can be formulated as a finite-horizon dynamic program. In each round, given the current state (the posterior distribution over item values) and the available budget, one chooses the optimal batch of samples by accounting for all possible outcomes and future states. However, the complexity of this program grows rapidly with the budget size, so solving it exactly is out of reach \citep{FrazierPowell2008,HongFanLuo2021,ZhangPengZhangZhou2023}. This has motivated a large body of literature, which we group into two broad categories. The first set of methods allocates samples by solving a sequence of static optimization problems, one per round, in place of the dynamic program. The second group constructs allocation rules directly from the current state without solving an optimization problem. We take the first approach and construct a per-round surrogate objective whose minimizer gives the sample allocation.

There are several ways to form a sequence of one-round static objectives. In a Bayesian formulation, one common approach is to maximize the expected value of information (EVI), the improvement in the quality of the final selection that collecting the next batch would produce \citep{chickInoue2001,ChickBrankeSchmidt2010}. When samples follow a Gaussian distribution and are chosen one at a time, \citet{FrazierPowell2008,frazier2009} show that this can be computed exactly for the top-$1$ EOC objective, yielding the knowledge gradient (KG) policy. In general, however, even the one-step look-ahead objective is intractable, so the literature resorts to approximations. Examples targeting PCS include AOAP for top-$1$ identification \citep{peng2018}, and AOAm \citep{ZhangPengZhangZhou2023} and EMIp \citep{Liu2024} for top-$k$, all designed for Gaussian samples. For AOAm, \citet{ZhangPengZhangZhou2023} also prove that the sampling proportions converge to the proportions that maximize the asymptotic error-decay rate, assuming their necessary sample-proportion conditions have a unique solution.

A second approach in this category, referred to as optimal computing budget allocation (OCBA) \citep{chenLee2011}, assumes that in each round the current estimates of the means are the true means and finds the sample allocation that would be optimal under this assumption \citep{ChenLin2000}. However, even with this assumption, the problem is generally intractable. It is therefore commonly approached by constructing a bound on the objective, a lower bound for PCS or an upper bound for EOC, and deriving allocations that optimize the bound as the budget goes to infinity. Most of this work concerns Gaussian samples and top-$1$ selection under PCS, with extensions to top-$k$ \citep{ChenHeFuLee2008,gaoChen2016}, and to top-$1$ and top-$k$ selection under EOC \citep{HeChick2007,GaoChen2015}.

An alternative approach targets learning the allocation that maximizes the exponential rate at which the probability of incorrect selection decays as the budget grows, the large-deviation rate \citep{glynnJuneja2004}. Algorithms of this kind run sequentially, again treating the current mean estimates as the truth, and choose allocations using the optimality conditions for this rate. \citet{chenRyzhov2023} establish such an algorithm that applies to general distributions in the $k=1$ setting and prove that the sampling proportions converge to the optimal proportions as the budget approaches infinity.

Next, we review methods that construct sampling rules directly from the current state. Methods similar in spirit to our approach are confidence and gap-based frequentist approaches, which allocate samples based on how close items are to the current estimated top-$k$ cutoff and how uncertain their estimated values are, such as gap-e \citep{gabillon2011,bubeck2013} and KL-LUCB \citep{kaufmann2013}. Our allocation algorithm is based on the same two quantities, but differs in how the top-$k$ cutoff is defined, how the distance from it is measured, and how uncertainty is used.

Elimination algorithms \citep{audibert2010,karnin2013} divide sampling into rounds, allocate samples uniformly among the remaining items within a round, and eliminate the empirically weakest items according to an elimination schedule. The successive accepts and rejects (SAR) approach \citep{bubeck2013} also ``accepts'' empirically strong items, avoiding spending samples on items already determined to likely be in the top-$k$.

Algorithms based on the expected improvement (EI) rule, which originates from the Bayesian optimization literature \citep{jonesSchonlauWelch1998}, have been designed for top-$1$ selection problems \citep{ryzhov2016,QinKlabjan2017,chenRyzhov2019}. For each item, the algorithm computes the expected amount by which the item's value exceeds that of the current best item, and samples the item with the largest improvement score. In the Gaussian case, \citet{chenRyzhov2019} show that a variant of EI achieves the optimal sampling proportions asymptotically.

A family of Bayesian algorithms referred to as \emph{top two} algorithms allocates a sample in each round by identifying a ``leader'' item and a ``challenger'' item, then flipping a weighted coin to decide which of the two to sample \citep{russo2020}. Various methods have been proposed for choosing the leader and challenger \citep{Jourdan2022}, for example, based on Thompson sampling \citep{russo2020} or expected improvement scores \citep{QinKlabjan2017}. For $k=1$ and a broad class of sampling distributions, \citet{russo2020} proves that with an unbiased coin, the posterior probability of an incorrect selection decays at an exponential rate of at least half the best achievable rate. \citet{russo2020} further shows that with suitable tuning of the coin flip probability, it attains the optimal decay rate. \citet{QinKlabjan2017} establish the corresponding results for top two expected improvement with Gaussian samples. \citet{you2023} and \citet{QinYou2026} extend top two methods to the top-$k$ setting and give a rigorous way to weight the coin such that in the top-$1$ Gaussian setting, the posterior probability of an incorrect selection decays at the optimal exponential rate without a tuning parameter, though their general top-$k$ sampling rules do not come with posterior error decay rate guarantees. Establishing optimality for general $k$ is more difficult because the conditions governing how samples must be allocated among items have a more complicated structure \citep{ZhangPengZhangZhou2023,QinYou2026}. \citet{russo2020} conjectures that his top-$k$ extension of top two Thompson sampling achieves a posterior error decay rate of at least half the optimum.

Given the rich literature on this problem, our motivation for this work stems from a related project. In \citet{GoldbergerRudi2026}, we study the top-$k$ identification problem in a \emph{different} setting with three key components that existing methods do not address in combination. First, the decision maker can choose among different binary sample forms (in particular, a binary rating of an item or a comparison of two items), each with a different cost and signal quality. Methods that allocate directly from the current state have no clear way to handle this, whereas methods that solve a static problem may, because the costs and signal quality of each sample form can be incorporated into the objective. Second, that paper considers ``small'' budget settings, so analysis in the asymptotic regime is not of primary importance. Third, samples may be collected in batches. Among the methods above, OCBA-style and elimination-based algorithms support batching \citep{jun2016}. However, OCBA algorithms generally rely on asymptotic analysis, and elimination-based algorithms are often outperformed in simulation by other methods such as gap-e and KL-LUCB \citep{bubeck2013,kaufmann2013,ZhangPengZhangZhou2023}. This motivated us to develop a new method for that specific problem. In this paper, building on insights from that work, we develop a simpler, far more general method for Bayesian top-$k$ identification.

\textbf{Our approach.} We take the approach of solving a sequence of tractable static optimization problems, which we develop in two steps. First, we give an upper bound on EOC, structurally different from those in the OCBA literature on EOC, and discuss why this structure has the potential for an improved algorithm. Second, to address the intractability of minimizing this bound as a one-round static problem, we approximate the future posterior distribution after a batch via expected moment matching rather than relying on asymptotic analysis. This idea originates in the EVI and OCBA literature with Gaussian samples by \citet{ChenChen1997}, \citet{HeChick2007}, and \citet{ZhangPengZhangZhou2023}, and we generalize it to other distributions. These two steps yield a simple, interpretable per-round static surrogate objective. A key property of the surrogate is that it is separable across items, enabling efficient algorithms for solving the batch problem without relying on heuristics.

\textbf{Results and theoretical guarantees.} Similar to the top two and modified expected-improvement methods for $k=1$, our method is conceptually simple and intuitive, yet remarkably comes with strong theoretical performance guarantees for general $k$. For a broad class of sampling distributions, we prove that posterior EOC (as well as the posterior probability of an incorrect selection) decays at an exponential rate of at least half the best achievable rate. This establishes the type of general-$k$ guarantee conjectured by \citet{russo2020}, but for our sampling rule.

In a range of simulations, we demonstrate that our method is competitive with and often improves on the current state-of-the-art approaches, including those specialized to $k = 1$ as well as those for general $k > 1$, those specialized to Gaussian as well as other distributions, and both at small and large budgets. These comparisons are conducted in the non-batched setting to include as many existing methods as possible. In the batched setting, we show that our method significantly outperforms existing batch algorithms and its performance degrades less as the batch size grows.

We believe this method can be valuable to both researchers and practitioners. It is flexible across a wide range of practical settings, is competitive with the best algorithms in each of these settings, and extends naturally to richer settings, such as those with varying sample forms and sampling costs.

The rest of the paper is organized as follows: \Cref{sec:model} formulates the model and objective, \Cref{sec:bound} derives the surrogate objective and the batch allocation algorithm, \Cref{sec:asymptotic} establishes the theoretical guarantees, \Cref{sec:experiments} presents the numerical experiments, and \cref{sec:discussion} concludes.

\section{Model and objective}\label{sec:model}
We consider $I$ items, indexed $i=1,\ldots,I$, each with an unknown mean $\theta_i\in\cJ$, where $\cJ\subseteq\R$ is an open interval. The goal is to select the $k<I$ items with the largest means. Each item $i$ has a prior distribution $\Pi_{i,0}$ on $\theta_i$ with density $\pi_{i,0}$. For item $i$, let $\cY_i$ be the outcome space, and let $\{P_i(\cdot\mid\theta):\theta\in\cJ\}$ be a known mean-parametrized family of distributions on $\cY_i$ with density or mass function $p_i(\cdot\mid\theta)$. Conditional on $\theta_i$, the $n$th observation $Y_{i,n}$ from item $i$ is drawn i.i.d.\ from $P_i(\cdot\mid\theta_i)$, with $\E[Y_{i,n}\mid\theta_i]=\theta_i$. We make the following regularity assumption.

\begin{assumption}[Regular one-parameter Bayesian model]\label{ass:model}
For each $i=1,\ldots,I$, the prior $\Pi_{i,0}$ and sampling family $\{P_i(\cdot\mid\theta):\theta\in\cJ\}$ are such that for every $n\ge 0$ and every $(y_1,\ldots,y_n)\in\cY_i^n$ the posterior of $\theta_i$ conditioned on $\{Y_{i,1}=y_1,\ldots,Y_{i,n}=y_n\}$ has continuous positive density on $\cJ$ and finite second moment, and the corresponding posterior predictive distribution of $Y_{i,n+1}$ has finite second moment. In addition, the priors and observations are independent across items.
\end{assumption}
\cref{rem:examples} gives examples of standard conjugate pairs (ESCP) satisfying \cref{ass:model} that we will refer to throughout the paper.
\begin{remark}[ESCP]\label{rem:examples}
ESCP covered by \cref{ass:model}:
\begin{itemize}[leftmargin=*]
\item[--] Gaussian: $Y_{i,n}\mid\theta_i\sim N(\theta_i,\sigma_i^2)$ with $\sigma_i^2$ known, \; prior $\theta_i\sim N(m_i(0),s_i^2(0))$, \; $\cJ=\R$.
\item[--] Bernoulli: $Y_{i,n}\mid\theta_i\sim\mathrm{Bern}(\theta_i)$, \; prior $\theta_i\sim\mathrm{Beta}(a_i(0),b_i(0))$, \; $\cJ=(0,1)$.
\item[--] Poisson: $Y_{i,n}\mid\theta_i\sim\mathrm{Pois}(\theta_i)$,\; prior $\theta_i\sim\mathrm{Gamma}(a_i(0),b_i(0))$ with rate $b_i(0)$, \; $\cJ=(0,\infty)$.
  \item[--] Gamma with known shape: $Y_{i,n}\mid\theta_i\sim\mathrm{Gamma}(\kappa_i,\theta_i/\kappa_i)$ with scale $\theta_i/\kappa_i$ and $\kappa_i$ known ($\kappa_i = 1$ gives the exponential distribution), \; prior $\theta_i\sim\mathrm{InvGamma}(a_i(0),b_i(0))$ with $a_i(0)>2,\,b_i(0)>0$, \; $\cJ=(0,\infty)$.

\item[--] Lognormal with known log-variance: $Y_{i,n}\mid\theta_i\sim\mathrm{Lognormal}(\log\theta_i\!-\!\sigma_i^2/2,\,\sigma_i^2)$ with $\sigma_i^2$ known, \; prior $\log\theta_i\sim N(m_i(0),s_i^2(0))$, \; $\cJ=(0,\infty)$.
\end{itemize}
\end{remark}
Sampling proceeds in $T\!+\!1$ rounds, an optional warm-up round indexed $t=0$ followed by $T$ adaptive rounds. In the warm-up round, each item is sampled $n_0\in\mathbb{Z}_{\ge0}$ times. In rounds $t=1,\ldots,T$, we choose a batch of $B_t\in\mathbb{Z}_+$ samples, where the per-round batch sizes $\{B_t\}_{t=1}^T$ are pre-specified. The choice of samples in round $t$ is denoted by the vector $\boldsymbol\Delta^{t}=(\Delta_1^{t},\ldots,\Delta_I^{t})$,  where $\Delta_i^{t}\in\mathbb Z_{\ge0}$ is the number of samples assigned to item $i$. Let $N_i(t)$ denote the total samples allocated to item $i$ up to but not including round $t$, so that  $N_i(0)=0$, $N_i(1)=n_0$, and
\begin{equation*}
    N_i(t+1)=N_i(t)+\Delta_i^{t}, \qquad t=1,\ldots,T.
\end{equation*}
Let $\cH_t$ denote the history at time $t$, which is the collection of priors $\{ \Pi_{i,0}\}_{i=1}^I$ and observations $\{Y_{i,n}:1\le i\le I,\,1\le n\le N_i(t)\}$. We write $\E_t[\cdot]$ and $\Pp_t(\cdot)$ for conditional expectation and probability given $\cH_t$. Let $\Pi_{i,t}$ denote the posterior distribution of $\theta_i$ given $\cH_t$ and define $\mu_i(t):=\E_t[\theta_i], v_i(t):=\Var_t(\theta_i), F_{i,t}(x):=\Pp_t(\theta_i\le x).$  In round $t$, we refer to $\Pi_{i,t}$ as the current prior.

In any round $t$, if we were to select items based on the current prior, we would pick the $k$ items with the largest posterior means. Thus, it is convenient to define the following: for any vector $\mathbf{x}\in\R^I$, let $S_k(\mathbf{x})$ denote the set of indices of the $k$ largest elements of $\mathbf{x}$. Write $\boldsymbol\mu(t)=(\mu_1(t),\ldots,\mu_I(t))$ and $\boldsymbol\theta=(\theta_1,\ldots,\theta_I)$. For any $\boldsymbol\theta\in\R^I$, the opportunity cost of the current selection $S_k(\boldsymbol\mu(t))$ is
\begin{equation*}
    \mathrm{OC}(S_k(\boldsymbol\mu(t)),\boldsymbol\theta)
    :=
    \sum_{i\in S_k(\boldsymbol\theta)}\theta_i
    -
    \sum_{i\in S_k(\boldsymbol\mu(t))}\theta_i.
\end{equation*}
Thus the Bayesian expected opportunity cost given $\cH_t$ is
\begin{equation}\label{eq:EOC}
    \mathrm{EOC}(\cH_t)
    =
    \E_{\boldsymbol\theta\mid \cH_t}
    \bigl[\mathrm{OC}(S_k(\boldsymbol\mu(t)),\boldsymbol\theta)\bigr].
\end{equation}
The decision maker's problem is to sequentially choose the sample batches $\{\boldsymbol\Delta^{t}\}_{t=1}^T$ that minimize the terminal expected opportunity cost $\mathrm{EOC}(\cH_{T+1})$. This can be formulated as a dynamic program, but it is intractable to solve, as discussed in the introduction. We next formulate a \emph{static} surrogate objective that guides the choice of batch in each round.

\section{Surrogate objective via EOC bound}\label{sec:bound}
$\mathrm{EOC}(\cH_t)$ does not generally admit an expression amenable to analysis, so \eqref{eq:EOC} is not immediately useful for choosing which batch of samples to collect. Thus, we construct the per-round surrogate objective in two steps. First, \cref{prop:eoc-bound} gives an upper bound on $\mathrm{EOC}(\cH_t)$ that is amenable to analysis. Second, in \cref{subsec:forward}, we make this bound a function of the round-$t$ allocation to guide the choice of future samples. We then discuss how the resulting per-round objective differs from those in prior work in \cref{subsec:related-bounds} and give efficient algorithms for finding the batch that minimizes it in \cref{subsec:batch}. \Cref{subsec:sequential} presents the sequential sampling procedure, and \cref{sec:examples} gives explicit expressions of the sampling rule for the ESCP. We begin with the EOC upper bound, which is based on comparing each item's $\theta_i$ to a threshold.

\begin{lemma}[Threshold upper bound on EOC]\label{prop:eoc-bound}
\begin{enumerate}[label=\textnormal{(\roman*)}]
\item For every threshold $q\in\R$,
\begin{equation*}
    \mathrm{EOC}(\cH_t)
    \le
    \E_t\FitDelims{[}{]}{
        \sum_{i\in S_k(\boldsymbol\mu(t))}(q-\theta_i)^+
        +
        \sum_{i\notin S_k(\boldsymbol\mu(t))}(\theta_i-q)^+
    }.
\end{equation*}
\item The right-hand side of \textnormal{(i)} is convex in $q$ and has a unique minimizer $q^*(t)$ characterized by
\begin{equation}\label{eq:q-star}
    \E_t\bigl[|\{i:\theta_i\ge q^*(t)\}|\bigr]
    =
    \sum_{i=1}^I \{1-F_{i,t}(q^*(t))\}
    =k.
\end{equation}
\end{enumerate}
\end{lemma}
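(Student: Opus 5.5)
Fix $t$ and write $S=S_k(\boldsymbol\mu(t))$ and $S^*=S_k(\boldsymbol\theta)$. The plan for (i) is to prove the bound pathwise, i.e., to show that for every realization $\boldsymbol\theta$ and every $q\in\R$,
\begin{equation*}
\mathrm{OC}(S,\boldsymbol\theta)\le \sum_{i\in S}(q-\theta_i)^+ + \sum_{i\notin S}(\theta_i-q)^+ ,
\end{equation*}
and then to take $\E_t$ of both sides.

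Since $|S|=|S^*|=k$, the opportunity cost telescopes as
\begin{equation*}
\mathrm{OC}(S,\boldsymbol\theta)=\sum_{i\in S^*\setminus S}\theta_i-\sum_{i\in S\setminus S^*}\theta_i=\sum_{i\in S^*\setminus S}(\theta_i-q)+\sum_{i\in S\setminus S^*}(q-\theta_i).
\end{equation*}
The second equality holds because $|S^*\setminus S|=|S\setminus S^*|$, so the added $q$'s cancel. Each summand is at most its positive part. The sets $S^*\setminus S\subseteq S^c$ and $S\setminus S^*\subseteq S$ only omit terms that are nonnegative. This gives the pathwise inequality. Integrability follows from the finite posterior second (hence first) moments in \cref{ass:model}.

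For (ii), let $G(q)$ denote the right-hand side. Each map $q\mapsto (q-\theta_i)^+$ and $q\mapsto(\theta_i-q)^+$ is convex, and expectation preserves convexity, so $G$ is convex. For differentiability, use $\E_t[(q-\theta_i)^+]=\int_{-\infty}^q F_{i,t}(x)\,dx$ and $\E_t[(\theta_i-q)^+]=\int_q^\infty(1-F_{i,t}(x))\,dx$, both finite by integrability. The posteriors have continuous densities, so $F_{i,t}$ is continuous and $G$ is differentiable with
\begin{equation*}
G'(q)=\sum_{i\in S}F_{i,t}(q)-\sum_{i\notin S}\{1-F_{i,t}(q)\}=k-\sum_{i=1}^I\{1-F_{i,t}(q)\}.
\end{equation*}
The second equality adds and subtracts $\sum_{i\in S}1=k$. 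Because $G$ is convex and differentiable, the minimizers are exactly the zeros of $G'$, which is \eqref{eq:q-star}. The identity $\E_t|\{i:\theta_i\ge q\}|=\sum_i\Pp_t(\theta_i\ge q)$ is linearity of expectation, and atomlessness lets us ignore the distinction between $\ge$ and $>$.

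The remaining step, and the main point of care, is existence and uniqueness of the root. The function $h(q)=\sum_i\{1-F_{i,t}(q)\}$ is continuous. As $q\downarrow\inf\cJ$ it tends to $I$, and as $q\uparrow\sup\cJ$ it tends to $0$, because the posteriors are supported on $\cJ$. Since $0<k<I$, the intermediate value theorem gives a root in $\cJ$.

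For uniqueness, note that on $\cJ$ each posterior density is strictly positive, so $h$ is strictly decreasing there and the root in $\cJ$ is unique. Outside $\cJ$, $h$ equals $I$ or $0$, so it never equals $k$ there. Hence $G'$ has exactly one zero, and $q^*(t)$ is unique. Equivalently, $G$ is strictly convex on $\cJ$, with $G'<0$ to the left of $\cJ$ and $G'>0$ to the right of $\cJ$.
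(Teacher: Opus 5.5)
Your proposal is correct and follows the paper's proof essentially step for step. In (i) it uses the same pathwise argument: cancel the common indices, insert $q$ using $|S_k(\boldsymbol\mu(t))\setminus S_k(\boldsymbol\theta)|=|S_k(\boldsymbol\theta)\setminus S_k(\boldsymbol\mu(t))|$, and bound each sum by positive parts. In (ii) it uses the same derivative $\sum_iF_{i,t}(q)-(I-k)$, strict monotonicity on $\cJ$, and the intermediate value theorem. Your layer-cake justification of differentiability and your explicit treatment of $q$ outside $\cJ$ (where the derivative is the constant $-(I-k)$ or $k$) are, if anything, slightly more careful than the paper's appeal to limits as $q\to\pm\infty$.
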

\proof{Proof}All proofs are provided in the Appendix.\origendproof

\noindent Part (i) gives a family of upper bounds on EOC, each separable across items. For an item in the current top-$k$ set, its corresponding term in the bound is the expected shortfall of its true mean below threshold $q$. For an item outside this set, its corresponding term is the expected amount its true mean exceeds the threshold. Part (ii) states that the tightest bound in this family is given by $q^*(t)$, the threshold such that, in expectation, exactly $k$ item means are larger than it. This makes $q^*(t)$ a natural threshold choice when constructing the surrogate objective. We also consider another top-$k$ threshold motivated by the theoretical analysis in \cref{sec:asymptotic}. Write $d_i(x,y):=\mathrm{KL}(P_i(\cdot\mid x)\,\|\,P_i(\cdot\mid y))$ for the Kullback--Leibler (KL) divergence between the sampling distributions of item $i$ at means $x$ and $y$, and let $\mu_{[1]}(t)\ge\cdots\ge\mu_{[I]}(t)$ denote the ordered posterior means. We define the threshold
\begin{equation}\label{eq:q-dagger}
q^\dagger(t):=
\begin{cases}
\displaystyle
\argmin_{\mu_{[k+1]}(t)<q<\mu_{[k]}(t)}
\sum_{i=1}^I\frac{1}{d_i(\mu_i(t),q)},
& \mu_{[k]}(t)>\mu_{[k+1]}(t),\\[10pt]
\mu_{[k]}(t),
& \mu_{[k]}(t)=\mu_{[k+1]}(t).
\end{cases}
\end{equation}
\Cref{sec:asymptotic} explains why this choice of top-$k$ threshold is ``good''. In what follows, let $q(t)$ denote the top-$k$ threshold used in round $t$, which can be either $q^*(t)$ or $q^\dagger(t)$.

\subsection{Forward-looking objective}\label{subsec:forward}

Given $q(t)$, the threshold bound at state $\cH_t$ is
\begin{equation}\label{eq:upperq}
    \E_t\FitDelims{[}{]}{
        \sum_{i\in S_k(\boldsymbol\mu(t))}(q(t)-\theta_i)^+
        +
        \sum_{i\notin S_k(\boldsymbol\mu(t))}(\theta_i-q(t))^+
    }.
\end{equation}
To make this a function of a proposed round-$t$ allocation $\boldsymbol\Delta=(\Delta_1,\ldots,\Delta_I)$, we take the expectation with respect to an approximation of the random posterior of $\theta_i$ \emph{after} allocating $\boldsymbol\Delta$ and observing its outcomes. This approximation preserves the shape of $\Pi_{i,t}$ and has mean and variance equal to the expected posterior mean and variance. We say two distributions have the same \emph{shape} if they are equivalent after centering by their mean and scaling by their standard deviation. Put together, these conditions say that we replace the random future posterior with a deterministic, ``shrunken” copy of the current prior, as it has the same mean as the prior, keeps the same shape, and changes only through a contraction around its mean by the expected amount of uncertainty that the proposed samples will remove. \Cref{fig:surrogate} gives an example.

\begin{figure}[t]
\centering
\includegraphics[width=0.95\textwidth]{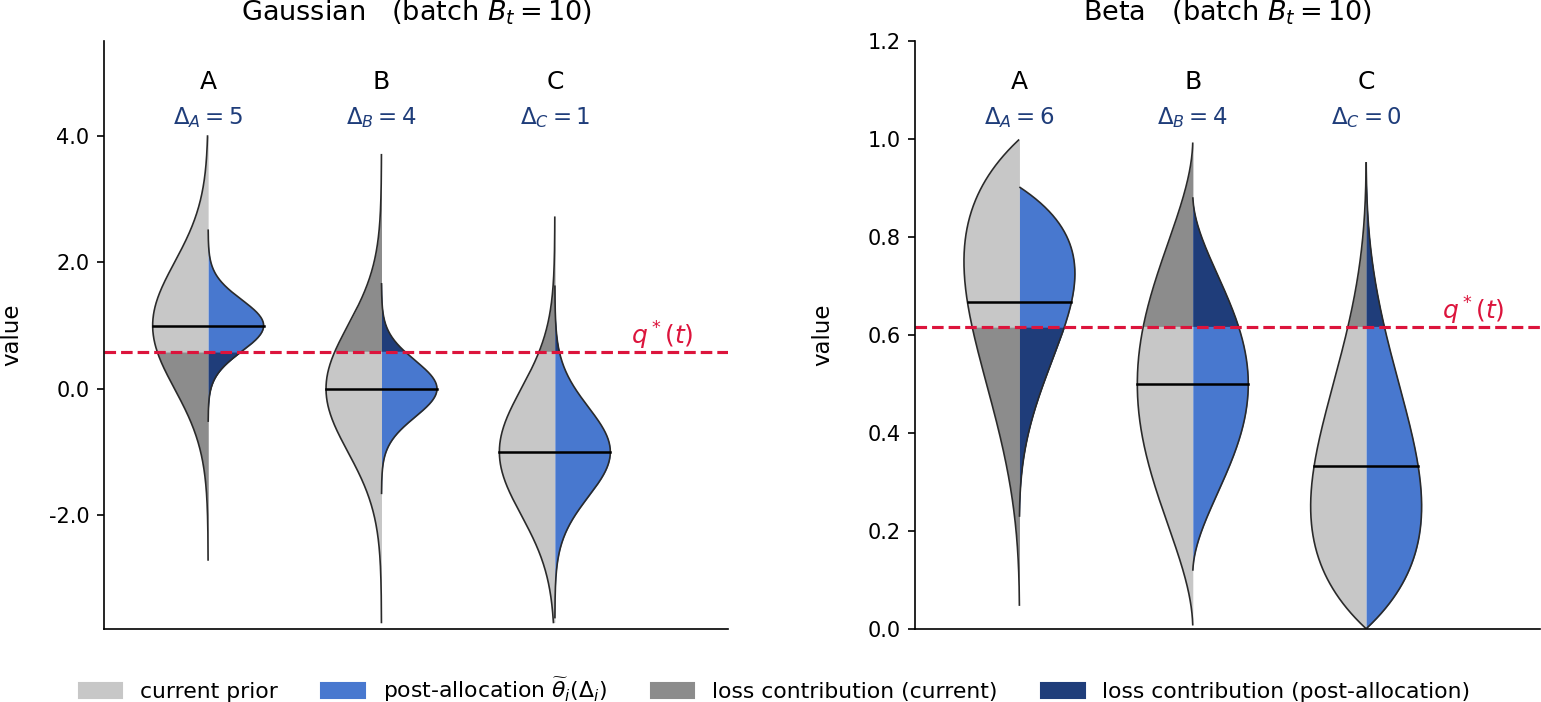}
\caption{Structure of the surrogate loss $\cL_t(\cdot)$ and allocation for $I=3$, $k=1$ using $q(t)=q^*(t)$. For each item $i\in\{A,B,C\}$, the left distribution is its current prior for $\theta_i$, and the right is the posterior moment-matched distribution $\widetilde\theta_i(\Delta_i)$. The allocation shown minimizes \eqref{eq:Pt} with $B_t=10$. The shaded area is the item's contribution to the loss.}
\label{fig:surrogate}
\vspace{-5pt}
\end{figure}
Formally, we replace each $\theta_i$ in \eqref{eq:upperq} by the random variable $\widetilde\theta_i(\Delta_i)$ that satisfies
\begin{enumerate}[label=\textnormal{(\roman*)}]
\item $\frac{\widetilde\theta_i(\Delta_i)-\E_t[\widetilde\theta_i(\Delta_i)]}{\sqrt{\Var_t(\widetilde\theta_i(\Delta_i))}}\overset{d}{=}\frac{\theta_i-\mu_i(t)}{\sqrt{v_i(t)}}$,
\item $\E_t[\widetilde\theta_i(\Delta_i)]=\E[\mu_i(t\!+\!1)\mid\boldsymbol\Delta,\cH_t]=\mu_i(t)$,
\item $\Var_t(\widetilde\theta_i(\Delta_i))=\E[\Var(\theta_i\mid\cH_{t+1})\mid\boldsymbol\Delta,\cH_t]=:\bar v_i(t,\Delta_i)$.
\end{enumerate}
For a given $\cH_t$, this distribution is unique and is given by
\begin{equation}\label{eq:affine}
    \widetilde\theta_i(\Delta_i)
    =\mu_i(t)+c_i(t,\Delta_i)(\theta_i-\mu_i(t)),
    \qquad
    c_i(t,\Delta_i):=\sqrt{\frac{\bar v_i(t,\Delta_i)}{v_i(t)}}.
\end{equation}
We then define the surrogate loss induced by the allocation $\boldsymbol\Delta$ as
\begin{align}
    \cL_t(\boldsymbol\Delta)
    &:=\E_t\!\FitDelims{[}{]}{
        \sum_{i\in S_k(\boldsymbol\mu(t))}(q(t)-\widetilde\theta_i(\Delta_i))^+
        +
        \sum_{i\notin S_k(\boldsymbol\mu(t))}(\widetilde\theta_i(\Delta_i)-q(t))^+
    }\nonumber\\
    &=\sum_{i=1}^I\E_t[(q(t)-\widetilde\theta_i(\Delta_i))^+]
    +\sum_{i\notin S_k(\boldsymbol\mu(t))}(\mu_i(t)-q(t)),\label{eq:Lt-general}
\end{align}
where the second equality follows from $(x\!-\!q)^+=(x\!-\!q)\!+\!(q\!-\!x)^+$. The second sum is constant with respect to $\boldsymbol\Delta$. For the first sum we have
\begin{equation}\label{eq:ell-general}
    \ell_{i,t}(\Delta_i)
    :=\E_t[(q(t)-\widetilde\theta_i(\Delta_i))^+]
    =c_i(t,\Delta_i)\,\E_t[(z_i(t,\Delta_i)-\theta_i)^+],
\end{equation}
where $z_i(t,\Delta_i):=\mu_i(t)\!+\!(q(t)\!-\!\mu_i(t))/c_i(t,\Delta_i)$. Choosing the batch that minimizes \eqref{eq:Lt-general} is thus equivalent to
\begin{equation}\label{eq:Pt}
    \min_{\boldsymbol\Delta\in\mathbb Z_{\ge0}^I}
    \sum_{i=1}^I\ell_{i,t}(\Delta_i)
    \qquad\text{s.t.}\qquad
    \sum_{i=1}^I\Delta_i=B_t,
\end{equation}
which is the per-round surrogate program. Conceptually, allocating according to \eqref{eq:Pt} sends samples to the items whose posterior has substantial mass on both sides of the top-$k$ threshold $q(t)$, weighted by the distance of that mass from $q(t)$. These are the items whose membership in the top-$k$ set is uncertain and whose misclassification, if it occurs, could be costly.

\subsection{Relation to OCBA-style objectives}\label{subsec:related-bounds}

Before analyzing how to solve \eqref{eq:Pt}, we discuss how its structure differs from static objectives in the OCBA literature for EOC. These objectives often use an upper bound on EOC based on a sum of losses associated with pairwise misorderings. For example, for top-$1$ EOC with Gaussian samples, \citet{HeChick2007} use
\vspace{-3pt}
\begin{equation*}
    \mathrm{EOC}(\cH_t)
    \le\sum_{i\neq i_{(1)}}\E_t[(\theta_i-\theta_{i_{(1)}})^+].
\end{equation*}
\vspace{-4pt}
\citet{GaoChen2015} give the following bound on top-$k$ \emph{frequentist} EOC:
\begin{equation*}
    \mathrm{EOC}\le
    \sum_{j=1}^{k}\sum_{i\notin\{i_{(1)},\ldots,i_{(j)}\}}\sum_{n=1}^{j}
    (\mu_{i_{(n)}}(t)-\mu_i(t))
    \Phi\!\FitParens{-\frac{\mu_{i_{(n)}}(t)-\mu_i(t)}{\sqrt{v_{i_{(n)}}(t)+v_i(t)}}},
\end{equation*}
where $\Phi$ is the standard normal CDF, and $i_{(1)},\ldots,i_{(I)}$ order the items by their current posterior means. Their static objective is then based on driving this bound down. The potential issue with using such a bound is that the candidate top-$k$ items $\{i_{(1)},\ldots,i_{(k)}\}$ are in far more terms in the sum than the non-top-$k$ items. For instance, when $k=1$, $i_{(1)}$ is in all $I\!-\!1$ terms, while all other items are only in one term. Thus, an objective based on such a bound may over-count the benefit of sampling the top items. Our bound avoids these pairwise terms by comparing each item to the same threshold $q(t)$, resulting in a single term for each item regardless of whether or not it is in $S_k(\boldsymbol\mu(t))$. \cref{fig:alloc} shows the distribution of sample allocation across items for our method, called TEAk, and the method of \citet{GaoChen2015}, called EOC-m. EOC-m systematically allocates more samples to items in the top-$k$, even items far above the top-$k$ boundary, which is likely one of the main factors driving our method's improvement over EOC-m, which we demonstrate in \cref{sec:experiments}.

\begin{figure}[t]
\centering
\includegraphics[width=0.95\textwidth]{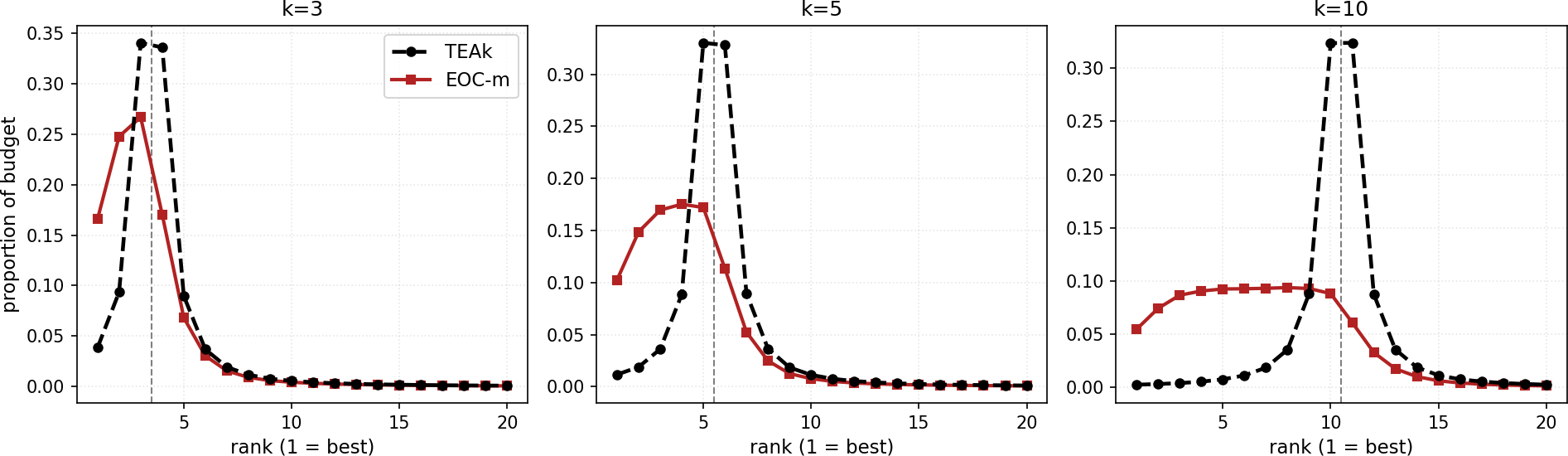}
\caption{Budget allocation across items on the Gaussian instance described in \cref{sec:experiments}, with $I=20$ at $k=3,5,10$. The plots show the proportion of the total budget spent on each item, with items ordered by their true mean. TEAk uses $q^*(t)$. The dashed line marks the top-$k$ boundary.}
\label{fig:alloc}
\vspace{-20pt}
\end{figure}

\subsection{Batch allocation algorithm}\label{subsec:batch}

Although \eqref{eq:Pt} is an integer program, its separability across items allows us to solve it efficiently by dynamic programming and, in some cases, using a greedy procedure. Both are standard techniques. The main technical result of this subsection is \cref{prop:greedy-conditions}(ii)--(iii), which gives conditions on the posterior distributions under which the greedy procedure is optimal.

The dynamic programming procedure in \cref{alg:batch-dp} has two phases. First, we compute $\ell_{i,t}(d)$ for every item $i$ and candidate allocation $d=0,\ldots,B_t$. Second, for each $r=1,\ldots,I$ and $b=0,\ldots,B_t$, we solve the subproblem of allocating exactly $b$ samples among the first $r$ items by forward induction, and then recover the optimal allocation by backtracking. Let $V_r(b)$ denote the minimum loss for this subproblem so that $V_I(B_t)$ is the optimal value of \eqref{eq:Pt}.
\vspace{-2pt}
\begin{algorithm}[!htb]
\caption{Dynamic Programming Batch Builder (DPBB)}
\label{alg:batch-dp}
\begin{algorithmic}[1]
\STATE \textbf{Precompute cost values:} for each $i=1,\ldots,I$ and $d=0,\ldots,B_t$, compute $\ell_{i,t}(d)$
\STATE \textbf{Initialize:} set $V_0(0)=0$ and $V_0(b)=+\infty$ for $b=1,\ldots,B_t$
\STATE \textbf{Solve subproblems:}
\FOR{$r=1,\ldots,I$}
    \FOR{$b=0,\ldots,B_t$}
        \STATE Compute $V_r(b)=\min_{\Delta\in\{0,\ldots,b\}}\{V_{r-1}(b\!-\!\Delta)\!+\!\ell_{r,t}(\Delta)\}$ and record an argmin $\Delta_r^*(b)$
    \ENDFOR
\ENDFOR
\STATE \textbf{Recover the allocation:} set $b\leftarrow B_t$
\FOR{$r=I,\ldots,1$}
    \STATE Set $\Delta_r^t=\Delta_r^*(b)$ and update $b\leftarrow b\!-\!\Delta_r^t$
\ENDFOR
\STATE \textbf{Return} $\boldsymbol\Delta^t=(\Delta_1^t,\ldots,\Delta_I^t)$
\end{algorithmic}
\end{algorithm}
\vspace{-15pt}
\begin{proposition}[Optimality of DPBB]\label{prop:dp-exact}
DPBB returns a global optimizer of \eqref{eq:Pt}. Let $C_\ell$ denote the cost of computing each value $\ell_{i,t}(d)$. The total computational cost is $O(IB_tC_\ell\!+\!IB_t^2)$.
\end{proposition}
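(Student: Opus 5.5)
The plan is to prove optimality by induction on $r$, using the separability of \eqref{eq:Pt}. For $r=0,\ldots,I$ and $b=0,\ldots,B_t$, define the subproblem value
\begin{equation*}
W_r(b):=\min\Bigl\{\textstyle\sum_{i=1}^r\ell_{i,t}(\Delta_i):\Delta\in\mathbb Z_{\ge0}^r,\ \sum_{i=1}^r\Delta_i=b\Bigr\},
\end{equation*}
with the convention $W_0(0)=0$ and $W_0(b)=+\infty$ for $b\ge1$, since the empty sum can only equal $0$. Each $\ell_{i,t}(d)$ is finite because the posterior has a finite second moment under \cref{ass:model}. The minimum is attained because the feasible set is finite, which is where nonnegativity and integrality of the $\Delta_i$ enter.

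The key step is the recursion $W_r(b)=\min_{0\le\Delta\le b}\{W_{r-1}(b-\Delta)+\ell_{r,t}(\Delta)\}$. For $\le$: given any $\Delta_r=\Delta\in\{0,\ldots,b\}$ and an optimal allocation for the subproblem $(r-1,b-\Delta)$, appending $\Delta$ gives a feasible allocation for $(r,b)$ with cost $W_{r-1}(b-\Delta)+\ell_{r,t}(\Delta)$. For $\ge$: take an optimal allocation for $(r,b)$. Its first $r-1$ coordinates sum to $b-\Delta_r$, so their cost is at least $W_{r-1}(b-\Delta_r)$. Since the recursion matches line 6 of \cref{alg:batch-dp} and the initialization, induction gives $V_r=W_r$ for all $r,b$, and in particular $V_I(B_t)$ is the optimal value of \eqref{eq:Pt}.

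For the backtracking phase, I would show by downward induction on $r$ that the recovered $(\Delta_1^t,\ldots,\Delta_I^t)$ is optimal. Let $b_r$ be the residual budget before step $r$, so $b_I=B_t$ and $b_{r-1}=b_r-\Delta_r^*(b_r)$. By the choice of argmin, $V_r(b_r)=V_{r-1}(b_{r-1})+\ell_{r,t}(\Delta_r^t)$. Telescoping down to $r=1$ gives $V_I(B_t)=V_0(b_0)+\sum_r\ell_{r,t}(\Delta_r^t)$. Since $V_I(B_t)<\infty$, we must have $V_0(b_0)=0$, hence $b_0=0$. So the allocation is feasible, sums to $B_t$, and attains the optimal value.

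The complexity count is routine. Precomputation evaluates $I(B_t+1)$ values of $\ell$, costing $O(IB_tC_\ell)$. The forward phase does $\sum_r\sum_b(b+1)=O(IB_t^2)$ work, and backtracking costs $O(I)$. The only step needing care is the feasibility argument $b_0=0$ via the $+\infty$ initialization; everything else is standard.
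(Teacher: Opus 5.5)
Your proof is correct and follows the same route as the paper: induction on $r$ using separability to get $V_r(b)=\min_{\delta\in\{0,\ldots,b\}}\{V_{r-1}(b-\delta)+\ell_{r,t}(\delta)\}$, then backtracking to recover an allocation attaining $V_I(B_t)$, with the same $O(IB_tC_\ell)+O(IB_t^2)$ cost count. Your two-sided check of the recursion, and your telescoping argument that the $+\infty$ initialization forces $b_0=0$, spell out what the paper asserts ``by construction.''
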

For the greedy procedure, define the marginal decrease from allocating one more sample to item $i$ by $D_{i,t}(\Delta_i):=\ell_{i,t}(\Delta_i)\!-\!\ell_{i,t}(\Delta_i\!+\!1).$ The procedure in \cref{alg:batch-greedy} sequentially assigns the $B_t$ samples to the item with the current largest marginal decrease.
\vspace{-2pt}
\begin{algorithm}[!htb]
\caption{Greedy Batch Builder (GBB)}
\label{alg:batch-greedy}
\begin{algorithmic}[1]
\STATE \textbf{Initialize:} $\boldsymbol\Delta^t=\mathbf0$
\FOR{$n=1,\ldots,B_t$}
    \STATE For each $i=1,\ldots,I$, compute $D_{i,t}(\Delta_i^t)=\ell_{i,t}(\Delta_i^t)\!-\!\ell_{i,t}(\Delta_i^t\!+\!1)$
    \STATE Choose $i^*\in\argmax_{1\le i\le I}D_{i,t}(\Delta_i^t)$
    \STATE $\Delta_{i^*}^t\leftarrow\Delta_{i^*}^t\!+\!1$
\ENDFOR
\STATE \textbf{Return} $\boldsymbol\Delta^t=(\Delta_1^t,\ldots,\Delta_I^t)$
\end{algorithmic}
\end{algorithm}
\vspace{-15pt}
\begin{proposition}[Conditions for greedy optimality]\label{prop:greedy-conditions}
\begin{enumerate}[label=\textnormal{(\roman*)}]
\item If, for every item $i$,
\vspace{-5pt}
\begin{equation}\label{eq:dim-gains}
    D_{i,t}(0)\ge D_{i,t}(1)\ge\cdots\ge D_{i,t}(B_t-1),
\end{equation}
\vspace{-22pt}

\noindent then GBB returns a global optimizer of \eqref{eq:Pt}. Its computational cost is $O(IB_tC_\ell)$.

\item Suppose that at round $t$, for every item $i$ there exists $\lambda_i(t)>0$ such that
\begin{equation}\label{eq:contraction-form}
    c_i(t,d)^2=\frac{1}{1+\lambda_i(t)d}
    \qquad\text{for every $d\in\mathbb Z_{\ge0}$.}
\end{equation}
Then \eqref{eq:dim-gains} holds, and GBB returns a global optimizer of \eqref{eq:Pt}.

\item Suppose that for every item $i$, the sampling family is a regular one-parameter natural exponential family, meaning it has an open natural parameter space $\mathcal E_i$ and density or mass function $p_i(y\mid\eta_i)=h_i(y)\exp\{\eta_i y\!-\!A_i(\eta_i)\}$, where $\theta_i=A_i'(\eta_i)$. Suppose also that the prior on $\eta_i$ induced by $\Pi_{i,0}$ has density proportional to $\exp\{a_i\eta_i\!-\!b_iA_i(\eta_i)\}$ on $\mathcal E_i$ for some $a_i\in\mathbb R$ and $b_i>0$. Then \eqref{eq:contraction-form} holds at every round, and GBB returns a global optimizer of \eqref{eq:Pt}.
\end{enumerate}
\end{proposition}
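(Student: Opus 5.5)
For part (i), I will use the standard exchange argument for separable resource allocation with diminishing returns. Condition \eqref{eq:dim-gains} says each $\ell_{i,t}$ is discretely convex on $\{0,\ldots,B_t\}$. Then $\ell_{i,t}(\Delta_i)=\ell_{i,t}(0)-\sum_{m<\Delta_i}D_{i,t}(m)$, so the value of \eqref{eq:Pt} equals $\sum_i\ell_{i,t}(0)$ minus the total of the selected marginal gains. By \eqref{eq:dim-gains}, choosing $\Delta_i$ units of item $i$ amounts to choosing its $\Delta_i$ largest gains $D_{i,t}(0),\ldots,D_{i,t}(\Delta_i-1)$. Problem \eqref{eq:Pt} is therefore equivalent to picking the $B_t$ largest elements of the multiset $\{D_{i,t}(m):1\le i\le I,\,0\le m\le B_t-1\}$. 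The monotonicity in \eqref{eq:dim-gains} ensures that this choice respects the prefix structure, and GBB selects exactly such a set, with ties broken arbitrarily. For the cost, there are $B_t$ iterations. Only the chosen item's marginal gain changes per iteration, so at most $I+B_t=O(IB_t)$ evaluations of $\ell_{i,t}$ are needed overall, and even naive recomputation costs $O(IB_tC_\ell)$.

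For part (ii), I will write $\ell_{i,t}(d)=g_i(c_i(t,d))$ with $g_i(c):=\E_t[(q(t)-\mu_i(t)-c(\theta_i-\mu_i(t)))^+]$ for $c\ge0$, and prove that $d\mapsto\ell_{i,t}(d)$ is convex on the real half-line $d\ge0$. Convexity on the half-line gives \eqref{eq:dim-gains} immediately. The function $g_i$ is convex in $c$, since it is an expectation of convex piecewise-linear functions of $c$. By Jensen's inequality and $\E_t[\theta_i-\mu_i(t)]=0$, we have $g_i(c)\ge(q(t)-\mu_i(t))^+=g_i(0)$, so the convex function $g_i$ attains its minimum at $c=0$ and is nondecreasing on $[0,\infty)$. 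The map $d\mapsto c_i(t,d)=(1+\lambda_i(t)d)^{-1/2}$ is convex on $d\ge0$. A nondecreasing convex function composed with a convex function is convex, which completes (ii), and part (i) then applies.

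For part (iii), I will first verify \eqref{eq:contraction-form} with $\lambda_i(t)=1/b_i(t)$, where the current prior is again of conjugate form with parameters $a_i(t)=a_i+\sum_{n\le N_i(t)}Y_{i,n}$ and $b_i(t)=b_i+N_i(t)$. The key input is linearity of the posterior mean, in the sense of Diaconis--Ylvisaker. Under this conjugate prior, $\E[\theta_i\mid a,b]=a/b$, so after $d$ further samples with sum $S$,
\[
\mu_i(t+1)=\frac{a_i(t)+S}{b_i(t)+d}=\frac{b_i(t)\mu_i(t)+S}{b_i(t)+d}.
\]
I expect the main obstacle to be justifying this identity rigorously. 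The plan is integration by parts in $\eta$ of $A_i'(\eta)\exp\{a\eta-bA_i(\eta)\}$, which requires the boundary terms to vanish at the ends of $\mathcal E_i$. I would argue that these terms vanish using the finite-moment and positive-density conditions in \cref{ass:model}, or else cite Diaconis--Ylvisaker directly.

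Granting linearity, the rest is moment bookkeeping and avoids any explicit computation of $\E[A_i''(\eta_i)]$. Write $v=v_i(t)$ and $m=\E_t[\Var(Y\mid\theta_i)]$. Then $\Cov_t(\theta_i,S)=dv$ and $\Var_t(S)=dm+d^2v$. A conditional mean that is linear in $S$ must coincide with the best linear predictor, so $d/(b_i(t)+d)=dv/(dm+d^2v)$. Taking $d=1$ yields $m=b_i(t)v$. Consequently
\[
\Var_t(\mu_i(t+1))=\frac{(dv)^2}{db_i(t)v+d^2v}=\frac{dv}{b_i(t)+d}.
\]
By the law of total variance,
\[
\bar v_i(t,d)=v-\frac{dv}{b_i(t)+d}=\frac{v}{1+d/b_i(t)},
\]
which is \eqref{eq:contraction-form}. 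Part (ii) then gives the greedy optimality.
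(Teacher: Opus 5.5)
Your proposal is correct. Part (i) matches the paper in substance; parts (ii) and (iii) take somewhat different routes.

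\textbf{Part (i).} Selecting the $B_t$ largest entries of the multiset of marginal gains is equivalent to the paper's pairwise exchange argument.

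\textbf{Part (ii).} The paper relaxes $d$ to the reals and computes
$\ell''(d)=-c''(d)I(U(d))+c'(d)^2(q-\mu)^2\pi(U(d))/c(d)^3$.
This is exactly the chain rule $(g\circ c)''=g''(c)\,c'^2+g'(c)\,c''$ written out, with two ingredients:
\begin{itemize}
\item $g'(c)=-I(U(c))\ge0$, which the paper derives from $\E_t[\theta_i-\mu_i(t)]=0$ and which is your Jensen step;
\item $g''(c)\ge0$, which is your convexity of $g_i$.
\end{itemize}
Your composition rule (a convex function, nondecreasing on $[0,\infty)$, composed with a convex function) reaches the same conclusion more cleanly. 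It avoids differentiating under the integral, does not use the density, and needs no separate treatment of $U(d)\notin\cJ$. Both arguments really use only convexity of $d\mapsto c_i(t,d)$.

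\textbf{Part (iii).} You and the paper both take $\mu_i(t)=a/b$ from Diaconis--Ylvisaker. The paper then gets $\E_t[A_i''(\eta_i)]=b\,v_i(t)$ from a second integration-by-parts identity, which needs $(A_i'-a/b)e^{a\eta-bA_i}$ to vanish at the ends of $\mathcal E_i$. You obtain the same quantity $m=\E_t[\Var(Y\mid\theta_i)]$ differently: you equate the DY slope after one extra observation with the best-linear-predictor slope $\Cov_t(\theta_i,S)/\Var_t(S)$. This reduces everything to DY linearity for the current and updated conjugate posteriors, plus finite predictive second moments (\cref{ass:model}). That is slightly more self-contained than the paper's route.

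\textbf{One slip to fix.} The coefficient of $S$ in $(a_i(t)+S)/(b_i(t)+d)$ is $1/(b_i(t)+d)$, not $d/(b_i(t)+d)$; the latter is the weight on $S/d$. The identity should therefore read $1/(b_i(t)+d)=dv/(dm+d^2v)$. As written it fails for $d\ge2$. The two forms coincide at $d=1$, the only case you use, so $m=b_i(t)v$ and the rest of your computation stands.
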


\noindent Part (i) says that GBB finds an optimal allocation whenever the gain from each additional sample is nonincreasing. However, checking this condition requires computing all marginal decreases in every round. Part (ii) avoids that computation by giving a sufficient condition for \eqref{eq:dim-gains} based on how the expected posterior variance decreases with additional samples, and part (iii) establishes that the condition holds for regular natural exponential families with conjugate priors.

\begin{remark}[Sampling costs]\label{rem:costs}
In some settings, items may differ in their sampling costs. Let $w_i\in\mathbb Z_{>0}$ denote the cost of one sample of item $i$. The constraint in \eqref{eq:Pt} becomes $\sum_{i=1}^I w_i\Delta_i\le B_t$ and the same dynamic programming approach finds the optimal batch by accounting for these costs in the remaining budget.
\end{remark}

\subsection{Sequential algorithm}\label{subsec:sequential}

\Cref{alg:sequential} gives the full sampling procedure. In each round, we update the threshold from the current prior and allocate samples by solving \eqref{eq:Pt} using one of the batch builders above.
\vspace{-5pt}
\begin{algorithm}[!htb]
\caption{Threshold-EOC allocation for top-$k$ selection (TEAk)}
\label{alg:sequential}
\begin{algorithmic}[1]
\STATE \textbf{Input:} priors $(\Pi_{i,0})_{i=1}^I$, sampling distributions $(P_i(\cdot\mid\theta_i))_{i=1}^I$, warm-up $n_0$, batch sizes $(B_t)_{t=1}^T$
\STATE \textbf{Initialize:} $N_i(0)=0$ for all $i$
\STATE \textbf{Warm-up:} for each $i$, allocate $n_0$ samples, observe outcomes, and update $N_i(1)=n_0$ and $\Pi_{i,1}$
\FOR{$t=1,\ldots,T$}
    \STATE Compute $q(t)$ using \eqref{eq:q-star} or \eqref{eq:q-dagger}
    \STATE Run DPBB to obtain $\boldsymbol\Delta^t$, or GBB when \eqref{eq:dim-gains} holds for every item
    \STATE For each $i$, allocate $\Delta_i^t$ samples and observe outcomes
    \STATE Update $N_i(t\!+\!1)=N_i(t)\!+\!\Delta_i^t$ and $\Pi_{i,t+1}$
\ENDFOR
\STATE \textbf{Return} $S_k(\boldsymbol\mu(T\!+\!1))$
\end{algorithmic}
\end{algorithm}
\vspace{-15pt}
\subsection{ESCP explicit expressions for \cref{alg:sequential}}\label{sec:examples}
This section instantiates \cref{alg:sequential} for the five ESCP in \cref{rem:examples}. Recall from \eqref{eq:ell-general} that
\begin{equation*}
    \ell_{i,t}(\Delta_i) = c_i(t,\Delta_i)\,\E_t\bigl[(z_i(t,\Delta_i) - \theta_i)^+\bigr],
    \qquad
    z_i(t,\Delta_i) := \mu_i(t) + \frac{q(t) - \mu_i(t)}{c_i(t,\Delta_i)}.
\end{equation*}
Computing the round $t$ allocation thus requires explicit formulas for $c_i(t,\Delta_i)$ as well as $\E_t[(z\!-\!\theta_i)^+]$ for each family, which are given in \cref{prop:families}.

\begin{proposition}\label{prop:families}
Fix round $t$ and write $\mu_i:=\mu_i(t)$. For \textnormal{(ii)--(iv)}, write $a_i:=a_i(t)$ and $b_i:=b_i(t)$.
\begin{enumerate}[label=\textnormal{(\roman*)}]
\item \textbf{Gaussian, known variance $\sigma_i^2$.} Posterior $N(m_i,s_i^2)$, $\mu_i=m_i$:
\begin{equation*}
\FamilyRow{c_i(t,\Delta_i)^2=\frac{1}{1+(s_i^2/\sigma_i^2)\Delta_i}}
{\E_t[(z-\theta_i)^+]=(z-m_i)\,\Phi\!\big(\tfrac{z-m_i}{s_i}\big)+s_i\,\phi\!\big(\tfrac{z-m_i}{s_i}\big)}
\end{equation*}
where $\phi$ is the standard normal density.
\item \textbf{Binomial/Bernoulli.} Observations are binomial with success probability $p_i$ and number of trials $r\in\mathbb Z_+$. Posterior $p_i\sim\mathrm{Beta}(a_i,b_i)$, $\mu_i=ra_i/n_i$ where $n_i=a_i\!+\!b_i$:
\begin{equation*}
\FamilyRow{c_i(t,\Delta_i)^2=\frac{1}{1+(r/n_i)\Delta_i}}
{\E_t[(z-\theta_i)^+]=
\begin{cases}
\CaseFirst{0,} & z\le 0,\\
z\,I_{z/r}(a_i,b_i)-\mu_i\,I_{z/r}(a_i+1,b_i), & 0<z<r,\\
z-\mu_i, & z\ge r
\end{cases}}
\end{equation*}
where $I_z(a,b)$ is the regularized incomplete beta function.

\item \textbf{Poisson.} Posterior $\mathrm{Gamma}(a_i,b_i)$ with shape $a_i$ and rate $b_i$, $\mu_i=a_i/b_i$:
\begin{equation*}
\FamilyRow{c_i(t,\Delta_i)^2=\frac{1}{1+\Delta_i/b_i}}
{\E_t[(z-\theta_i)^+]=
\begin{cases}
\CaseFirst{0,} & z\le 0,\\
z\,\gamma(a_i,b_i z)-\mu_i\,\gamma(a_i+1,b_i z), & z>0
\end{cases}}
\end{equation*}
where $\gamma(a,x)$ is the lower regularized incomplete gamma function.
\item \textbf{Gamma, known shape $\kappa_i$.} Posterior $\mathrm{InvGamma}(a_i,b_i)$, $\mu_i=b_i/(a_i\!-\!1)$:
\begin{equation*}
\FamilyRow{c_i(t,\Delta_i)^2=\frac{1}{1+(\kappa_i/(a_i-1))\Delta_i}}
{\E_t[(z-\theta_i)^+]=
\begin{cases}
\CaseFirst{0,} & z\le 0,\\
z\,\Gamma(a_i,b_i/z)-\mu_i\,\Gamma(a_i-1,b_i/z), & z>0
\end{cases}}
\end{equation*}
where $\Gamma(a,x)$ is the upper regularized incomplete gamma function.
\item \textbf{Lognormal.} Posterior lognormal with log-mean $m_i$, log-variance $s_i^2$, $\mu_i=e^{m_i+s_i^2/2}$, and $s_i^2(\Delta_i):=(1/s_i^2\!+\!\Delta_i/\sigma_i^2)^{-1}$:
\begin{equation*}
\FamilyRow{c_i(t,\Delta_i)^2=\frac{1-e^{-s_i^2(\Delta_i)}}{1-e^{-s_i^2}}}
{\E_t[(z-\theta_i)^+]=
\begin{cases}
\CaseFirst{0,} & z\le 0,\\
z\,\Phi\!\big(\tfrac{\log z-m_i}{s_i}\big)-\mu_i\,\Phi\!\big(\tfrac{\log z-m_i-s_i^2}{s_i}\big), & z>0
\end{cases}.}
\end{equation*}
\end{enumerate}
For \textnormal{(i)--(iv)}, $c_i(t,\Delta_i)^2$ has the form \eqref{eq:contraction-form}, so GBB solves \eqref{eq:Pt} exactly by \cref{prop:greedy-conditions}(ii). For \textnormal{(v)}, $c_i(t,\Delta_i)^2$  does not have this form in general.
\end{proposition}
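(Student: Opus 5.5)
The plan is to treat each family separately. For each one I would compute two things: the contraction factor $c_i(t,\Delta)^2=\bar v_i(t,\Delta)/v_i(t)$, and the partial expectation $\E_t[(z-\theta_i)^+]$. Almost all of the work sits in $\bar v_i(t,\Delta)=\E[\Var(\theta_i\mid\cH_{t+1})\mid\Delta,\cH_t]$.

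For families (i)--(iv) I would avoid direct conjugate algebra and use the law of total variance: $\bar v_i(t,\Delta)=v_i(t)-\Var_t(\mu_i(t+1))$. I would compute the second term through the linearity of the conjugate posterior mean in the sufficient statistic.
\begin{itemize}
\item In the Gaussian case, $\mu_i(t+1)=(m_i/s_i^2+\sum Y/\sigma_i^2)/(1/s_i^2+\Delta/\sigma_i^2)$. Alternatively, the posterior variance $(1/s_i^2+\Delta/\sigma_i^2)^{-1}$ is deterministic, which gives $c^2=1/(1+(s_i^2/\sigma_i^2)\Delta)$ directly.
\item For Beta--binomial, the posterior mean of $p_i$ is $(a_i+\sum Y)/(n_i+r\Delta)$. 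The standard identity for conjugate exponential families (Diaconis--Ylvisaker linearity) gives $\Var_t(\mu_i(t+1))=v_i(t)\cdot\frac{r\Delta}{n_i+r\Delta}$, hence $c^2=n_i/(n_i+r\Delta)$.
\item For Gamma--Poisson the same computation gives $c^2=b_i/(b_i+\Delta)$.
\item For inverse-gamma with known shape $\kappa_i$ it gives $c^2=1/(1+\kappa_i\Delta/(a_i-1))$.
\end{itemize}
Rather than deriving each of these by hand, I would note that all four are regular natural exponential families with conjugate priors. The computation in the proof of \cref{prop:greedy-conditions}(iii) therefore already gives $c^2=1/(1+\lambda_i d)$. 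What remains is to identify $\lambda_i$ in each case by matching the prior parameters, for example $\lambda_i=1/b_i$ where $b_i$ is the effective prior sample size. Once this form is established, the closing sentence about GBB follows immediately from \cref{prop:greedy-conditions}(ii).

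The partial expectations come from the incomplete-function identity $\E[(z-\theta)^+]=zF(z)-\E[\theta\,\mathbf 1\{\theta\le z\}]$. The second term is handled by the size-biasing trick: $\theta\,\pi(\theta)$ is proportional to a density in the same family with one parameter shifted.
\begin{itemize}
\item Beta: $a\to a+1$. The paper's $\theta_i=rp_i$ scaling gives the argument $z/r$, and the boundary cases $z\le0$ and $z\ge r$ follow because the support is $(0,r)$.
\item Gamma: shape $a\to a+1$.
\item Inverse gamma: $\theta\,\pi(\theta)\propto$ InvGamma$(a-1,b)$, and its CDF at $z$ is $\Gamma(a-1,b/z)$.
\item Lognormal: $\E[\theta\mathbf 1\{\log\theta\le\log z\}]=\mu\,\Phi((\log z-m-s^2)/s)$ by completing the square.
\item Gaussian: the standard formula $(z-m)\Phi+s\phi$.
\end{itemize}

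The main obstacle is the lognormal $c^2$. The prior is on $\log\theta_i$ and is Gaussian, with posterior log-variance $s_i^2(\Delta)$ deterministic. The posterior variance of $\theta_i$ is $e^{2m'+s'^2}(e^{s'^2}-1)$, where the updated log-mean $m'$ is random. I would compute $\E[e^{2m'}]$ using that $m'$ is Gaussian under the predictive distribution with variance $s_i^2-s_i^2(\Delta)$. This gives
\[
\bar v=e^{2m_i+2s_i^2-s_i^2(\Delta)}\bigl(e^{s_i^2(\Delta)}-1\bigr)=e^{2m_i+2s_i^2}\bigl(1-e^{-s_i^2(\Delta)}\bigr).
\]
Dividing by $v_i=e^{2m_i+2s_i^2}(1-e^{-s_i^2})$ yields the stated ratio. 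A concrete choice of $s_i^2,\sigma_i^2$ then shows the ratio is not of the form $1/(1+\lambda d)$ in general.
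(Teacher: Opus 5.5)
Your proposal is correct. For the partial expectations and the lognormal factor it follows the paper; your size-biasing argument just justifies the closed forms the paper states. For the contraction factors in (i)--(iv), however, you take a different route.

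The paper computes $\bar v_i(t,\Delta_i)$ separately for each family as $\E_t[\Var(\theta_i\mid\cH_t,X)]$ over the new sufficient statistic $X$ (in the inverse-gamma case, for instance, via $\E_t[(b_i+W)^2]$). Only afterwards does it observe that the ratio has the form \eqref{eq:contraction-form}. You instead identify all four models as natural exponential families in $y$ with Diaconis--Ylvisaker priors, and read $\lambda_i(t)$ off the already established \cref{prop:greedy-conditions}(iii).

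Your route is shorter and unified. It also makes transparent why (v) falls outside: there the sufficient statistic is $\log y$, whose mean is not $\theta_i$. It does inherit what that proof relies on, namely the integration-by-parts identity $\E_t[A_i''(\eta_i)]=b\,v_i(t)$. In the inverse-gamma case its boundary term at $\eta\to0^-$ behaves like $|\eta|^{a_i-2}$, so it requires $a_i>2$; the assumption $a_i(0)>2$ guarantees this. Your parameter matching must also include the Jacobian of $\theta\mapsto\eta$. The conjugate parameter is then $b=\sigma_i^2/s_i^2$, $n_i/r$, $b_i$, and $(a_i-1)/\kappa_i$ respectively, not $a_i/\kappa_i$ in the last case. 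Your stated final values of $c_i^2$ already reflect this.

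The paper's direct computations are more elementary and give $\bar v_i(t,\Delta_i)$ explicitly. Your counterexample for (v) makes concrete what the paper only asserts. With $s_i^2=\sigma_i^2=1$, one gets $1/c_i^2-1=e^{-1/2}$ at $\Delta_i=1$ but about $1.23\neq 2e^{-1/2}$ at $\Delta_i=2$.
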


\section{Theoretical guarantees}\label{sec:asymptotic}

The guarantees in \cref{sec:bound} concern the optimization of the surrogate objective \eqref{eq:Pt}, not the performance of TEAk in reducing EOC. In this section, we give asymptotic performance guarantees for TEAk. We begin by showing that TEAk is consistent ($\mathrm{EOC}(\cH_t)$ converges to zero) and then make additional distributional assumptions to obtain a much stronger result. In particular, with threshold $q^\dagger(t)$, we show that the asymptotic exponential rate at which $\mathrm{EOC}(\cH_t)$ converges to zero, referred to as the posterior convergence rate, is at least half of the best rate achievable by any sequential sampling rule.

Throughout this section, we fix a true mean vector $\boldsymbol\theta^0\in\cJ^I$. We write $\Pp_{\boldsymbol\theta^0}$ and $\E_{\boldsymbol\theta^0}$ for probability and expectation over observations generated from $P_i(\cdot\mid\theta_i^0)$. All convergence statements hold $\Pp_{\boldsymbol\theta^0}$-almost surely. Although the true means are fixed, $\mathrm{EOC}(\cH_t)$ remains the Bayesian posterior expectation in \eqref{eq:EOC}. Our rate guarantees also hold for the posterior error probability $\Pp_t(S_k(\boldsymbol\theta)\ne S_k(\boldsymbol\theta^0))$, the quantity studied in the $k=1$ analyses of \citet{russo2020,QinKlabjan2017}, and \citet{Shang2020}.

These posterior guarantees should not be confused with guarantees for the frequentist EOC $\E_{\boldsymbol\theta^0}[ \mathrm{OC}(S_k(\boldsymbol\mu(t)),\boldsymbol\theta^0)]$ and the frequentist probability of incorrect selection $\Pp_{\boldsymbol\theta^0}( S_k(\boldsymbol\mu(t))\ne S_k(\boldsymbol\theta^0))$. The posterior quantities measure the EOC and posterior error probability \emph{conditional} on an observed history, while the frequentist quantities average over observation histories. This distinction matters for exponential-rate guarantees: a sampling rule that achieves the optimal posterior convergence rate need not achieve the corresponding frequentist rate. In fact, no single sampling rule can generally guarantee a constant fraction of the best exponential decay rate of the frequentist probability of incorrect selection (likewise frequentist expected opportunity cost) uniformly across problem instances \citep{Goldberger2026,degenne2023existence}.

\subsection{Consistency}\label{subsec:consistency}

\begin{proposition}[Consistency]\label{prop:eoc-bpcs-consistency}
Run TEAk for infinitely many rounds with threshold $q^*(t)$ or $q^\dagger(t)$. For $q^\dagger(t)$, assume that the minimum in \eqref{eq:q-dagger} is attained whenever $\mu_{[k]}(t)>\mu_{[k+1]}(t)$. There is a set $\Theta^*\subseteq\cJ^I$ with prior probability one such that for every $\boldsymbol\theta^0\in\Theta^*$, the following hold as $t\to\infty$:
\begin{enumerate}[label=\textnormal{(\roman*)}]
\item $N_i(t)\to\infty$, $\mu_i(t)\to\theta_i^0$, and $v_i(t)\to0$ for every $i$.
\item $\mathrm{EOC}(\cH_t)\to0$.
\item If $\theta^0_{[k]}>\theta^0_{[k+1]}$, then $S_k(\boldsymbol\mu(t))=S_k(\boldsymbol\theta^0)$ for all sufficiently large $t$.
\end{enumerate}
\end{proposition}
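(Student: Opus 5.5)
The plan is to reduce everything to part (i), $N_i(t)\to\infty$ for every $i$, and derive the rest from posterior consistency. First I fix $\Theta^*$. By Doob's consistency theorem applied separately to each item, there is a set of prior probability one such that for every $\boldsymbol\theta^0$ in it and every deterministic sequence $n\to\infty$ of sample counts, the posterior of $\theta_i$ given the first $n$ observations of item $i$ has mean converging to $\theta_i^0$ and variance converging to $0$. This uses independence across items and the finite second moments in \cref{ass:model}. Because the posterior after $N_i(t)$ samples depends only on the first $N_i(t)$ draws of item $i$, this statement applies along the adaptively chosen counts. It is then enough to show that no item has $N_i(t)$ bounded on this set.

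For the key step I argue by contradiction. Suppose the set $A$ of items with bounded $N_i(t)$ is nonempty. Since $\sum_i N_i(t)\to\infty$, its complement $A^c$ is nonempty. For $i\in A$ the posterior is eventually frozen at some nondegenerate $\Pi_{i,\infty}$, which has positive variance by the positive density. For $j\in A^c$ we have $\mu_j(t)\to\theta_j^0$ and $v_j(t)\to 0$.

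Next I compare marginal losses. Property (iii) of the surrogate construction gives $\bar v_j(t,1)\le v_j(t)$. A one-sample reduction satisfies $\ell_{j,t}(0)-\ell_{j,t}(1)\le \E_t[(q(t)-\theta_j)^+]-\E_t[(q(t)-\widetilde\theta_j(1))^+]$, and this is bounded by $\E_t|\theta_j-\widetilde\theta_j(1)|\le (1-c_j(t,1))\sqrt{v_j(t)}\to0$, since $q\mapsto(q-x)^+$ is $1$-Lipschitz. The same bound holds for $D_{j,t}(d)$ at any $d$.

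For $i\in A$ the posterior is fixed, so $c_i(t,1)=c_i<1$, which I get from the law of total variance with one informative observation. The threshold $q(t)$ stays in a bounded set: for $q^*(t)$ this follows from \eqref{eq:q-star}, and for $q^\dagger(t)$ it lies between ordered posterior means that converge. Then $D_{i,t}(0)=\ell_{i,t}(0)-\ell_{i,t}(1)$ is bounded below by a positive constant. To see this, note that $g(c):=\E[(q-\mu-c(\theta-\mu))^+]$ is convex in $c$ with $g(c)\ge (q-\mu)^+$ by Jensen. Strict decrease from $c=1$ to $c_i$ holds because $\theta_i$ has full support on $\cJ$, so there is mass on both sides of $q$ whenever $q\in\cJ$. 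A compactness argument over the bounded range of $q(t)$ then gives a uniform positive gap.

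It follows that for large $t$, any allocation placing a sample on $A^c$ while giving item $i\in A$ zero samples can be improved by moving one sample. Hence every optimal batch of \eqref{eq:Pt} gives $\Delta_i^t\ge1$ for some $i\in A$, and in fact, by iterating, all items in $A$ keep receiving samples. This contradicts the boundedness of $N_i(t)$. I expect the main obstacle to be this step: making the gap uniform near the boundary of $\cJ$, in particular when $q(t)$ approaches an endpoint of $\cJ$, and handling the general (non-greedy) DP optimum. I would handle the latter by exchange: any optimum with $\Delta_i=0$ for all $i\in A$ can swap one unit from some $j\in A^c$ to some $i\in A$ and strictly lower the objective.

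For part (ii), given (i), $\mathrm{EOC}(\cH_t)$ is at most $2k\,\E_t[\max_i|\theta_i-\mu_i(t)|]\le 2k\sum_i\sqrt{v_i(t)}\to0$, using $\mathrm{OC}(S_k(\boldsymbol\mu),\boldsymbol\theta)\le 2k\max_i|\theta_i-\mu_i|$. This holds because the top-$k$ sum is $1$-Lipschitz in sup-norm per coordinate and $S_k(\boldsymbol\mu)$ is optimal for $\boldsymbol\mu$. For part (iii), $\mu_i(t)\to\theta_i^0$ together with the strict gap $\theta^0_{[k]}>\theta^0_{[k+1]}$ gives eventual equality $S_k(\boldsymbol\mu(t))=S_k(\boldsymbol\theta^0)$.
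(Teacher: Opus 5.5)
Your overall route is the paper's. You build $\Theta^*$ by Doob's theorem item by item and argue by contradiction on the set $A$ of items sampled finitely often. The frozen items get a uniform positive one-sample gain (convexity, strict Jensen, compactness in $q$), the concentrating items have gains of order $\sqrt{v_j(t)}$, and (ii) and (iii) use the same bounds as the paper.

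Your pairwise exchange is a valid substitute for the paper's comparison of total improvements. Moving one unit from some $j\in A^c$ with $\Delta_j\ge1$ to some $i\in A$ with $\Delta_i=0$ changes the objective by $D_{j,t}(\Delta_j-1)-D_{i,t}(0)<0$, and this works because your bound $D_{j,t}(d)\le\sqrt{v_j(t)}$ is uniform in $d$. It only shows that $A$ as a whole gets at least one sample per round. That already contradicts finiteness, so you do not need the claim that every item of $A$ keeps being sampled.

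The real gap is the step you flag and leave open. For $q^*(t)$ you only assert that it stays in a \emph{bounded} set ``by \eqref{eq:q-star}'', but the uniform gain needs $q(t)$ to lie eventually in a compact $K\subset\cJ$. This is essential when $\cJ$ has a finite endpoint. As $q$ approaches the left endpoint, both terms of $D_{i,t}(0)=\E_t[(q-\theta_i)^+]-\E_t[(q-\widetilde\theta_i(1))^+]$ tend to $0$. As $q$ approaches the right endpoint, both tend to $q-\mu_i(t)$. So the frozen items' gains are not bounded away from zero, and the exchange fails. For $q^\dagger(t)$ your argument is fine, since the ordered posterior means converge to points of $\cJ$.

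The missing idea is to trap $q^*(t)$ using both groups of items, as the paper does.
\begin{enumerate}[label=\textnormal{(\alph*)}]
\item Pick $a<b$ in $\cJ$ with $a<\min_{j\in A^c}\theta_j^0$ and $b>\max_{j\in A^c}\theta_j^0$. Push them close enough to the endpoints of $\cJ$ that the frozen posteriors satisfy $\sum_{i\in A}\Pi_{i,\infty}(\theta_i\ge a)>|A|-\tfrac14$ and $\sum_{i\in A}\Pi_{i,\infty}(\theta_i\ge b)<\tfrac14$.
\item By Chebyshev, $\Pp_t(\theta_j\ge a)\to1$ and $\Pp_t(\theta_j\ge b)\to0$ for $j\in A^c$. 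So eventually $\sum_i\Pp_t(\theta_i\ge a)>I-\tfrac12>k$ and $\sum_i\Pp_t(\theta_i\ge b)<\tfrac12<k$.
\item The map $q\mapsto\sum_i\Pp_t(\theta_i\ge q)$ is nonincreasing and equals $k$ at $q^*(t)$. Hence $q^*(t)\in[a,b]$ for all large $t$.
\end{enumerate}
With $K=[a,b]$, enlarged to cover the $q^\dagger$ case, your compactness argument and the rest of your proof go through.
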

Part (i) says that every item is sampled infinitely often, and consequently its posterior mean converges to the true mean, and its posterior variance to $0$. Parts (ii) and (iii) then follow: the posterior EOC converges to $0$ and, when the true top-$k$ gap is positive, there is a finite time after which $S_k(\boldsymbol\mu(t))=S_k(\boldsymbol\theta^0)$.

\subsection{The optimal exponential rate}\label{subsec:optimal-eoc-rate}
For the remainder of this section, assume $1\le B_t\le B$ for every $t$ where $B$ is a fixed positive integer, and assume $\theta_1^0\ge\cdots\ge\theta_k^0>\theta_{k+1}^0\ge\cdots\ge\theta_I^0$. We next establish an upper bound on the posterior convergence rate that any sequential sampling rule can achieve, and characterize the sample allocation that attains this bound. This gives the benchmark against which we will evaluate TEAk. We rely on the general posterior large-deviation analysis of \citet{russo2020}, which we then apply to our top-$k$ EOC objective. Accordingly, we follow the distributional assumptions of \citet{russo2020}.

\begin{assumption}[Bounded exponential-family model]\label{ass:rate-model}
In addition to \cref{ass:model}, $\cJ=(L,U)$ with $-\infty<L<U<\infty$. For every item $i$, the likelihood has the form
\begin{equation*}
    p_i(y\mid x)=h_i(y)\exp\{\eta_i(x)T_i(y)-A_i(\eta_i(x))\},
    \qquad x\in\cJ,
\end{equation*}
where $T_i$ is strictly increasing and $A_i$ is twice differentiable. Write $\mathcal E_i:=\eta_i(\cJ)=(\underline\eta_i,\overline\eta_i)$, with $-\infty<\underline\eta_i<\overline\eta_i<\infty$, and let $\widetilde\pi_{i,0}$ denote the prior density of $\eta_i(\theta_i)$. We assume
\begin{equation*}
    \sup_{\eta\in\mathcal E_i}|A_i'(\eta)|<\infty,
    \qquad
    0<\inf_{\eta\in\mathcal E_i}\widetilde\pi_{i,0}(\eta)
    \le\sup_{\eta\in\mathcal E_i}\widetilde\pi_{i,0}(\eta)<\infty.
\end{equation*}
\end{assumption}
The boundedness assumptions are used in the posterior convergence analysis primarily to facilitate generality across this broad range of families. As \citet[p.~1635]{russo2020} notes, ``Analogous results are easily established for certain unbounded conjugate priors.'' All five ESCP in \cref{rem:examples} satisfy \cref{ass:rate-model} when their priors are restricted and normalized to a bounded interval whose closure is contained in $\cJ$.

The next result shows that the exponential decay rate of a posterior probability is determined by the empirical sampling proportions. Let $\Delta_I:=\{\boldsymbol w\in\R^I:w_i\ge0,\ \sum_iw_i=1\}$, and define the empirical sampling proportions by $\widehat w_i(t):=N_i(t)/\sum_jN_j(t)$ for $t\ge2$.

\begin{lemma}[{\citet[Proposition~5]{russo2020}}]\label{lem:posterior-open-set}
Under any sequential sampling rule, for every nonempty open set $O\subseteq\cJ^I$,
\begin{equation}\label{eq:posterior-open-set-rate}
    -\frac{\log\Pp_t(\boldsymbol\theta\in O)}{\sum_iN_i(t)}
    -\inf_{\boldsymbol x\in O}\sum_{i=1}^I\widehat w_i(t)d_i(\theta_i^0,x_i)
    \longrightarrow0.
\end{equation}
\end{lemma}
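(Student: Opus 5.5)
This result is \citet[Proposition~5]{russo2020}, so the plan is to check that its hypotheses hold under \cref{ass:rate-model} and then retrace the structure of that argument: a Laplace-type large-deviation estimate for the posterior. With $N:=\sum_i N_i(t)$, write the posterior density of $\boldsymbol\theta$ as a normalized product over items of prior times likelihood. Dividing by the likelihood at $\boldsymbol\theta^0$ gives
\begin{equation*}
\Pp_t(\boldsymbol\theta\in O)=\frac{\int_O\prod_i\pi_{i,0}(x_i)\exp\{-\Lambda_i(x_i)\}\,d\boldsymbol x}{\int_{\cJ^I}\prod_i\pi_{i,0}(x_i)\exp\{-\Lambda_i(x_i)\}\,d\boldsymbol x},
\end{equation*}
where $\Lambda_i(x)=\sum_{n\le N_i(t)}\log\{p_i(Y_{i,n}\mid\theta_i^0)/p_i(Y_{i,n}\mid x)\}$ is the log-likelihood ratio.

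The first step is to use the exponential-family form to write
\begin{equation*}
\Lambda_i(x)=N_i(t)\bigl[(\eta_i(\theta_i^0)-\eta_i(x))\bar T_i+A_i(\eta_i(x))-A_i(\eta_i(\theta_i^0))\bigr],
\end{equation*}
where $\bar T_i$ is the empirical mean of $T_i(Y_{i,n})$. Comparing with $N_i(t)d_i(\theta_i^0,x)$, which is the same expression with $\bar T_i$ replaced by $A_i'(\eta_i(\theta_i^0))$, gives
\begin{equation*}
\sup_{x\in\cJ}\bigl|\Lambda_i(x)-N_i(t)d_i(\theta_i^0,x)\bigr|\le N_i(t)\,(\overline\eta_i-\underline\eta_i)\,\bigl|\bar T_i-A_i'(\eta_i(\theta_i^0))\bigr|.
\end{equation*}
Here the bounded natural parameter space is essential.

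The second step shows this error is $o(N)$ almost surely for an arbitrary adaptive rule. If $N_i(t)\to\infty$, the strong law gives $\bar T_i\to A_i'(\eta_i(\theta_i^0))$. If $N_i(t)$ stays bounded, then $N_i(t)\,|\bar T_i-A_i'(\eta_i(\theta_i^0))|$ is a bounded finite sum, so it is still $o(N)$. The strong law applies despite adaptivity because one can couple all observations to a pre-drawn i.i.d.\ sequence per item. Hence
\begin{equation*}
\Lambda_i(x)=N\,\widehat w_i(t)\,d_i(\theta_i^0,x)+o(N)
\end{equation*}
uniformly in $x$.

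The third step is a Laplace-principle bound. The prior densities are bounded above and below (through the change of variables to $\eta_i$) and the domain is bounded, so the numerator satisfies
\begin{equation*}
\frac{1}{N}\log\int_O(\cdot)=-\inf_O\sum_i\widehat w_i(t)d_i(\theta_i^0,x_i)+o(1)
\end{equation*}
and the denominator is $\exp\{o(N)\}$, since its infimum is $0$ at $\boldsymbol x=\boldsymbol\theta^0$. The upper bound is immediate: bound the integrand by its supremum and use finite volume.

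The main obstacle is the matching lower bound for the numerator, because $\widehat w(t)$ varies with $t$. For each $t$, pick a near-minimizer $\boldsymbol x^t\in O$ and integrate over a small ball around it. This needs the functions $x\mapsto d_i(\theta_i^0,x)$ to be uniformly equicontinuous, and the ball must fit in $O$. To handle this, first reduce to the compact closure of $\cJ^I$ intersected with a slightly shrunken copy of $O$. Then use continuity of $d_i$ (from $A_i$ being $C^2$ and $A_i'$ bounded) and the fact that $\widehat w\in\Delta_I$ is bounded. The ball's volume contributes only a constant, which is $o(N)$ on the log scale. Combining the three steps gives \eqref{eq:posterior-open-set-rate}.
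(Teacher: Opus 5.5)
Your proposal is correct and follows essentially the same route as the paper. The paper gives no independent proof and simply invokes \citet[Proposition~5]{russo2020}, whose argument is what you reconstruct: a uniform law of large numbers for the exponential-family log-likelihood ratio (using bounded $\mathcal E_i$ and bounded $A_i'$), followed by Laplace-type upper and lower bounds on the posterior integrals, with the prior bounded above and below in $\eta$-coordinates.

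The one step to write out carefully is the uniform-in-$t$ lower bound on the numerator. Let $O_\delta$ be the points of $O$ at distance at least $\delta$ from its complement. You need $\inf_{O_\delta}\sum_i w_i d_i(\theta_i^0,x_i)\downarrow\inf_{O}\sum_i w_i d_i(\theta_i^0,x_i)$ uniformly over $\boldsymbol w\in\Delta_I$. This follows from Dini's theorem, since both sides are continuous in $\boldsymbol w$, or from a finite cover of $\Delta_I$. It ensures that a single ball radius, and hence a prior mass bounded below by a constant, works for every $\widehat{\boldsymbol w}(t)$.
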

To interpret the first term in \eqref{eq:posterior-open-set-rate}, $\{\bm{\theta} \in O\}$ is the ``bad'' event whose probability we want to control. For top-$k$ selection, $O$ is the set of \emph{alternative} parameters: parameters whose top-$k$ set is different from $S_k(\boldsymbol\theta^0)$. Ignoring ties which have a posterior probability of zero, these are the parameters where some pair of items $(i,j)$ satisfies $i\le k < j$, and yet $\theta_j > \theta_i$. Accordingly, define the alternative space
\begin{equation*}
    \operatorname{Alt}(\boldsymbol\theta^0)
    :=\bigcup_{i\le k<j}
    \{\boldsymbol x\in\cJ^I:x_j>x_i\}
\end{equation*}
so that $\Pp_t(\boldsymbol\theta\in\operatorname{Alt}(\boldsymbol\theta^0)) =\Pp_t(S_k(\boldsymbol\theta)\ne S_k(\boldsymbol\theta^0))$. We will apply \cref{lem:posterior-open-set} with $O=\operatorname{Alt}(\boldsymbol\theta^0)$. In the second term, for fixed sampling proportions $\boldsymbol w$, the quantity $\sum_iw_id_i(\theta_i^0,x_i)$ is the expected log-likelihood ratio per observation between the true parameter $\boldsymbol\theta^0$ and an alternative $\boldsymbol x$. The infimum over $O$ therefore identifies the alternative parameter that this allocation provides the least evidence to distinguish from the true parameter. For $\boldsymbol w\in\Delta_I$, denote this infimum at $O=\operatorname{Alt}(\boldsymbol\theta^0)$ by $\Gamma(\boldsymbol w;\boldsymbol\theta^0)$. \cref{lem:pairwise-rate} characterizes its structure.
\begin{lemma}[Pairwise representation of $\Gamma$]\label{lem:pairwise-rate}
For every $\boldsymbol w\in\Delta_I$,
\begin{equation}\label{eq:posterior-rate}
    \Gamma(\boldsymbol w;\boldsymbol\theta^0)
    :=\inf_{\boldsymbol x\in\operatorname{Alt}(\boldsymbol\theta^0)}
    \sum_{\ell=1}^I w_\ell d_\ell(\theta_\ell^0,x_\ell)
    =\min_{i\le k<j}
    \min_{x\in[\theta_j^0,\theta_i^0]}
    \{w_id_i(\theta_i^0,x)+w_jd_j(\theta_j^0,x)\}.
\end{equation}
\end{lemma}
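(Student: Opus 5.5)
The plan is to show the two inequalities separately, using that $\operatorname{Alt}(\boldsymbol\theta^0)$ is a finite union over pairs $i\le k<j$ of the open half-spaces $\{x_j>x_i\}\cap\cJ^I$. An infimum over a union is the minimum of the infima over its pieces, so it suffices to fix one pair $(i,j)$ with $i\le k<j$ and prove
\begin{equation*}
\inf_{\boldsymbol x\in\cJ^I:\,x_j>x_i}\sum_{\ell=1}^I w_\ell d_\ell(\theta_\ell^0,x_\ell)
=\min_{x\in[\theta_j^0,\theta_i^0]}\{w_id_i(\theta_i^0,x)+w_jd_j(\theta_j^0,x)\}.
\end{equation*}

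\textbf{Step 1: decoupling the other coordinates.} For $\ell\notin\{i,j\}$ the coordinate $x_\ell$ is unconstrained, and $d_\ell(\theta_\ell^0,x_\ell)\ge0$ with equality at $x_\ell=\theta_\ell^0\in\cJ$. Setting $x_\ell=\theta_\ell^0$ for those $\ell$ reduces the infimum to the two-dimensional problem $\inf\{w_id_i(\theta_i^0,x_i)+w_jd_j(\theta_j^0,x_j):x_i,x_j\in\cJ,\ x_j>x_i\}$.

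\textbf{Step 2: reduction to the diagonal.} I will use that $y\mapsto d_\ell(\theta^0,y)$ is continuous on $\cJ$, nonincreasing for $y<\theta^0$, and nondecreasing for $y>\theta^0$. This holds for one-parameter exponential families (the KL divergence is convex in the natural parameter and $\eta_\ell$ is monotone). I expect checking this unimodality under only \cref{ass:model} to be the main technical point; if needed I would invoke the exponential-family structure of \cref{ass:rate-model}, which is the setting where the lemma is used.

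\emph{Lower bound.} Take any feasible pair with $x_j>x_i$. First choose a point $x$ lying in $[x_i,x_j]\cap[\theta_j^0,\theta_i^0]$; this intersection is nonempty, since otherwise $x_j<\theta_j^0$ or $x_i>\theta_i^0$ would force $x_i<x_j<\theta_j^0\le\theta_i^0$ (or the symmetric case), and then a clipping argument handles it. Next use monotonicity: moving $x_i$ toward $\theta_i^0$ up to $x$ does not increase $d_i$, and moving $x_j$ toward $\theta_j^0$ down to $x$ does not increase $d_j$. The cleanest way to say this is to take $x=\min(\max(x_i,\theta_j^0),\theta_i^0)$ and verify case by case that $d_i(\theta_i^0,x)\le d_i(\theta_i^0,x_i)$ and $d_j(\theta_j^0,x)\le d_j(\theta_j^0,x_j)$. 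Because $w_i,w_j\ge0$, the objective at $(x,x)$ is at most the objective at $(x_i,x_j)$.

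\emph{Upper bound.} For any $x\in[\theta_j^0,\theta_i^0]$, take $x_i=x-\epsilon$ and $x_j=x+\epsilon$, which lie in $\cJ$ since $\cJ$ is open. Continuity as $\epsilon\downarrow0$ then gives that the infimum is at most the diagonal value.

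\textbf{Step 3: attainment and assembly.} The function $x\mapsto w_id_i(\theta_i^0,x)+w_jd_j(\theta_j^0,x)$ is continuous on the compact interval $[\theta_j^0,\theta_i^0]\subset\cJ$, so the inner minimum is attained. Taking the minimum over the finitely many pairs $(i,j)$ then yields \eqref{eq:posterior-rate}.
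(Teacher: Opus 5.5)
Your proof is correct and follows essentially the same route as the paper: you split $\operatorname{Alt}(\boldsymbol\theta^0)$ over the pairs $i\le k<j$, fix the remaining coordinates at their true values, collapse the two-point problem onto the diagonal $x_i=x_j=x\in[\theta_j^0,\theta_i^0]$ using continuity and unimodality of $d_\ell(\theta_\ell^0,\cdot)$, and get attainment by compactness. Your clipping map $x=\min(\max(x_i,\theta_j^0),\theta_i^0)$ is a cleaner version of the paper's ``move $u$ and $v$ toward the true means'' step. It also makes your preliminary claim that $[x_i,x_j]\cap[\theta_j^0,\theta_i^0]$ is nonempty unnecessary, which is just as well, since that intersection is empty when $x_i<x_j<\theta_j^0$.
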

\cref{lem:pairwise-rate} says that $\Gamma(\boldsymbol w;\boldsymbol\theta^0)$ is determined by the pair of items satisfying $i\le k<j$ whose relative ranking is hardest to learn under the sampling design $\boldsymbol w$. Maximizing over $\boldsymbol w$ gives
\begin{equation}\label{eq:optimal-posterior-rate}
    \Gamma^*(\boldsymbol\theta^0)
    :=\max_{\boldsymbol w\in\Delta_I}
    \Gamma(\boldsymbol w;\boldsymbol\theta^0).
\end{equation}
The next proposition establishes that $\Gamma^*(\boldsymbol\theta^0)$ is the optimal exponential decay rate of posterior EOC and that it is attained by any sampling rule whose empirical proportions converge to a maximizer of \eqref{eq:optimal-posterior-rate}.

\begin{proposition}[Optimal exponential rate of posterior EOC]\label{prop:optimal-eoc-rate}
\begin{enumerate}[label=\textnormal{(\roman*)}]
\item The maximum in \eqref{eq:optimal-posterior-rate} is attained, and $0<\Gamma^*(\boldsymbol\theta^0)<\infty$.
\item Every sampling rule satisfies
\begin{equation*}
    \limsup_{t\to\infty}
    -\frac{\log\mathrm{EOC}(\cH_t)}{\sum_iN_i(t)}
    \le\Gamma^*(\boldsymbol\theta^0).
\end{equation*}
\item If the sampling proportions converge to a maximizer of \eqref{eq:optimal-posterior-rate}, then
\begin{equation*}
    \lim_{t\to\infty}
    -\frac{\log\mathrm{EOC}(\cH_t)}{\sum_iN_i(t)}
    =\Gamma^*(\boldsymbol\theta^0).
\end{equation*}
\end{enumerate}
\end{proposition}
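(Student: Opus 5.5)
The plan is to sandwich $\mathrm{EOC}(\cH_t)$ between constant multiples (up to subexponential factors) of the posterior error probability $\Pp_t(\boldsymbol\theta\in\operatorname{Alt}(\boldsymbol\theta^0))$. Then \cref{lem:posterior-open-set} with $O=\operatorname{Alt}(\boldsymbol\theta^0)$, which is open, transfers the rate $\Gamma(\widehat{\boldsymbol w}(t);\boldsymbol\theta^0)$ to EOC. Parts (ii) and (iii) then reduce to properties of the map $\boldsymbol w\mapsto\Gamma(\boldsymbol w;\boldsymbol\theta^0)$.

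For (i), I would use the pairwise representation in \cref{lem:pairwise-rate}. Each inner function $\boldsymbol w\mapsto\min_{x\in[\theta_j^0,\theta_i^0]}\{w_id_i(\theta_i^0,x)+w_jd_j(\theta_j^0,x)\}$ is a minimum over a compact set of functions linear in $\boldsymbol w$. Under \cref{ass:rate-model} the $d_\ell$ are continuous and bounded on $\bar\cJ$, so the inner function is concave and continuous on $\Delta_I$. Hence $\Gamma$ is continuous on the compact simplex, and the maximum is attained. Finiteness follows because $d_\ell$ is bounded on $[\theta_{I}^0,\theta_1^0]$. Positivity follows from taking uniform weights: $w_id_i(\theta_i^0,x)+w_jd_j(\theta_j^0,x)>0$ for every $x$, since $\theta_i^0>\theta_j^0$ and KL divergence vanishes only at equal means in an identifiable exponential family. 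The minimum of this positive continuous function over a compact interval, and then over finitely many pairs, is positive.

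For (ii) I need a lower bound $\mathrm{EOC}(\cH_t)\ge c\,\Pp_t(\boldsymbol\theta\in O')$ for some open $O'\subseteq\operatorname{Alt}$ whose rate is within $\varepsilon$ of $\Gamma(\widehat{\boldsymbol w}(t))$. I would handle this separately on the two events $S_k(\boldsymbol\mu(t))=S_k(\boldsymbol\theta^0)$ and its complement.

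\emph{Case $S_k(\boldsymbol\mu(t))=S_k(\boldsymbol\theta^0)$.} Fix $\delta>0$, $i\le k<j$ and $x$, and take $O_\delta=\{\boldsymbol y: y_j>y_i+\delta\}$. On this set the opportunity cost is at least $\delta$, because swapping $j$ in for $i$ gains at least $\delta$. So $\mathrm{EOC}\ge\delta\Pp_t(O_\delta)$. The rate of $O_\delta$ tends to that of $\operatorname{Alt}$ as $\delta\to0$, by continuity of the $d_\ell$ and the fact that the infimum over $\{y_j>y_i\}$ equals the infimum over $\{y_j>y_i+\delta\}$ up to a modulus of continuity.

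\emph{Case $S_k(\boldsymbol\mu(t))\ne S_k(\boldsymbol\theta^0)$.} I would lower bound EOC by the probability that $\boldsymbol\theta$ lies near $\boldsymbol\theta^0$, where the true top-$k$ strictly beats the selection by a fixed margin. That probability has rate $0$, which is $\le\Gamma$.

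Combining the two cases with \cref{lem:posterior-open-set} and $\widehat{\boldsymbol w}(t)\in\Delta_I$ gives $\limsup -\log\mathrm{EOC}/\sum_iN_i(t)\le\limsup\Gamma(\widehat{\boldsymbol w}(t))+o_\delta(1)\le\Gamma^*+o_\delta(1)$. Letting $\delta\to0$ proves (ii).

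For (iii) I need the matching upper bound $\mathrm{EOC}(\cH_t)\le C\,\Pp_t(\operatorname{Alt})^{1-\varepsilon}$ eventually. If $\widehat{\boldsymbol w}$ converges to a maximizer $\boldsymbol w^*$ with $\Gamma(\boldsymbol w^*)>0$, then every $w_i^*>0$; otherwise a pair involving an unsampled item would give zero rate. So every $N_i\to\infty$ and, as in \cref{prop:eoc-bpcs-consistency}, $S_k(\boldsymbol\mu(t))=S_k(\boldsymbol\theta^0)$ eventually. On that event $\mathrm{OC}$ vanishes off $\operatorname{Alt}$ and is bounded by $k(U-L)$, because $\cJ$ is bounded. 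Hence $\mathrm{EOC}\le k(U-L)\Pp_t(\operatorname{Alt})$. \cref{lem:posterior-open-set} together with continuity of $\Gamma$ then gives $\liminf -\log\mathrm{EOC}/\sum_iN_i(t)\ge\Gamma(\boldsymbol w^*)=\Gamma^*$, which combined with (ii) yields the limit.

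I expect the main obstacle to be the lower-bound step in (ii) for general sampling rules. There the selection $S_k(\boldsymbol\mu(t))$ may oscillate, and some items may be sampled only finitely often. In that case $\Gamma(\widehat{\boldsymbol w}(t))\to0$ along a subsequence, and the bound in (ii) holds trivially along it, but the case split has to be organized carefully so that the $\delta$-margin argument is uniform in $t$.
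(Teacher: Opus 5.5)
Your argument is correct, and for (i) and (ii) it is the paper's argument. The paper uses the same case split on whether $S_k(\boldsymbol\mu(t))=S_k(\boldsymbol\theta^0)$, the same set $O_\delta$ and sup-norm ball $V$ of radius $g/4$, and a $\delta$ chosen uniformly in $\boldsymbol w$. Take $O_\delta$ to be the union over all pairs $i\le k<j$, not one fixed pair; a single pair would only bound the rate by that pair's maximal rate, which can exceed $\Gamma^*(\boldsymbol\theta^0)$. The obstacle you anticipate does not arise. The bound $\mathrm{EOC}(\cH_t)\ge\min\{\delta\,\Pp_t(\boldsymbol\theta\in O_\delta),(g/2)\Pp_t(\boldsymbol\theta\in V)\}$ holds at every $t$, whichever case occurs. \cref{lem:posterior-open-set} applies to fixed open sets under any sampling rule, so oscillating selections or starved items need no special handling. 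The only uniformity you need is in $\boldsymbol w$, and you have it since $w_i+w_j\le1$.

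The difference is in (iii). The paper notes that $S_k(\boldsymbol\mu(t))$ maximizes posterior expected value among size-$k$ sets. Hence
\[
\mathrm{EOC}(\cH_t)\le\E_t[\mathrm{OC}(S_k(\boldsymbol\theta^0),\boldsymbol\theta)]\le k(U-L)\,\Pp_t(\boldsymbol\theta\in\operatorname{Alt}(\boldsymbol\theta^0))
\]
at every $t$ and for every rule. This gives the two-sided \cref{lem:posterior-eoc-rate}: for any rule, the EOC rate minus $\Gamma(\widehat{\boldsymbol w}(t);\boldsymbol\theta^0)$ tends to zero. Parts (ii) and (iii) are then one line each, and the lemma is reused in \cref{thm:threshold-allocation}(ii). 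Your route, which obtains the upper bound only eventually via correct selection, also works. However, \cref{prop:eoc-bpcs-consistency} is the wrong justification: it concerns TEAk and holds only for prior-almost every $\boldsymbol\theta^0$, whereas here $\boldsymbol\theta^0$ is arbitrary. Use \cref{lem:rate-posterior-moments} instead. Alternatively, apply \cref{lem:posterior-open-set} to the open set $\{\boldsymbol x:|x_i-\theta_i^0|>\epsilon\}$, whose rate is bounded away from zero once $\widehat w_i(t)\to w_i^*>0$.
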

We next show that, with threshold $q^\dagger(t)$, TEAk's posterior convergence rate is at least $\Gamma^*(\boldsymbol\theta^0)/2$.

\subsection{The rate of TEAk}\label{subsec:threshold-allocation}
To determine the rate TEAk achieves, it is sufficient to characterize the sample proportions produced asymptotically. First, we do this assuming the threshold $q(t)$ converges to some point $q\in(\theta_{k+1}^0,\theta_k^0)$. For such a $q$, define the proportions
\begin{equation}\label{eq:threshold-allocation}
    w_i(q):=
    \frac{1/d_i(\theta_i^0,q)}
    {\sum_{j=1}^I1/d_j(\theta_j^0,q)},
    \qquad i=1,\ldots,I.
\end{equation}

\begin{theorem}[Allocation and rate for a convergent threshold]\label{thm:threshold-allocation}
Assume $q(t)\to q_0\in(\theta_{k+1}^0,\theta_k^0)$. Then
\begin{enumerate}[label=\textnormal{(\roman*)}]
\item $\widehat w_i(t)\to w_i(q_0)$ for every $i$,
\item $-\frac{\log\mathrm{EOC}(\cH_t)}{\sum_iN_i(t)} \to\Gamma(\boldsymbol w(q_0);\boldsymbol\theta^0)$,
\item $\Gamma(\boldsymbol w(q_0);\boldsymbol\theta^0) \ge\bigl(\sum_{i=1}^I1/d_i(\theta_i^0,q_0)\bigr)^{-1}$.
\end{enumerate}
\end{theorem}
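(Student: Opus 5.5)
The plan is to prove (i) first, then get (ii) by sandwiching EOC between posterior probabilities and applying \cref{lem:posterior-open-set}, and finally obtain (iii) by a direct bound on the pairwise form of \cref{lem:pairwise-rate}. Part (i) is the heart of the argument.

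\textbf{Step 1: local behaviour of the marginal decreases.} Since $N_i(t)\to\infty$ by \cref{prop:eoc-bpcs-consistency}, and batches are bounded by $B$, the sampling increments are asymptotically negligible relative to $N_i(t)$. Under \cref{ass:rate-model}, the posterior of each item concentrates around $\theta_i^0$. I would show that the per-sample marginal decrease satisfies
\begin{equation*}
D_{i,t}(\Delta)=\exp\{-N_i(t)\,d_i(\theta_i^0,q_0)(1+o(1))\}
\end{equation*}
uniformly in $0\le\Delta\le B$. The idea is that $\ell_{i,t}(\Delta)$ is essentially the posterior tail mass beyond $z_i(t,\Delta)$, weighted by distance. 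The contraction $c_i(t,\Delta)^2\approx N_i(t)/(N_i(t)+\Delta)$ moves $z_i$ only by $O(1/N_i)$. By a Laplace-type large deviation for the posterior (in the spirit of \citet{russo2020}), the tail beyond $q(t)\to q_0$ behaves like $\exp\{-N_i d_i(\theta_i^0,q_0)\}$. The difference $\ell(\Delta)-\ell(\Delta+1)$ then carries only a polynomial prefactor. The main obstacle is making this uniform: one needs matching upper and lower exponential bounds on the differences, not just on $\ell$ itself, while $q(t)$ moves. I would first try a two-sided bound on $\ell_{i,t}$ evaluated at the shifted thresholds $z_i(t,\Delta)$. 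The differences can then be controlled using that $\ell$ decreases in $\Delta$ and that $z_i$ moves away from $\mu_i$ at rate $\Theta(|q-\mu_i|/N_i)$.

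\textbf{Step 2: balancing argument for (i).} Since the batch minimizes $\sum_i\ell_{i,t}$, any item receiving a sample in round $t$ has $D_{i,t}(\Delta_i)$ not much smaller than $D_{j,t}(\Delta_j)$ for every other item $j$. Otherwise moving one sample would lower the objective. Combining this with Step 1 shows that whenever $i$ is sampled,
\begin{equation*}
N_i(t)\,d_i(\theta_i^0,q_0)\le\min_j N_j(t)\,d_j(\theta_j^0,q_0)+o\Bigl(\sum_\ell N_\ell(t)\Bigr).
\end{equation*}
A standard overshoot argument (as in top-two proofs) then gives that $N_i(t)d_i(\theta_i^0,q_0)$ are asymptotically equal across $i$. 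This is exactly $\widehat w_i(t)\to w_i(q_0)$. The argument has two halves: an item with too large a proportion is eventually never sampled, and this forces all proportions to converge.

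\textbf{Step 3: parts (ii) and (iii).} For (ii), I would bound EOC from above and below by the posterior probability of $\operatorname{Alt}(\boldsymbol\theta^0)$, up to subexponential factors.
\begin{itemize}
\item \emph{Upper bound.} Use $\mathrm{OC}\le(U-L)k\,\mathbf 1\{S_k(\boldsymbol\theta)\neq S_k(\boldsymbol\mu(t))\}$, together with $S_k(\boldsymbol\mu(t))=S_k(\boldsymbol\theta^0)$ eventually by \cref{prop:eoc-bpcs-consistency}(iii).
\item \emph{Lower bound.} Use $\mathrm{OC}\ge\epsilon$ on the open set of $\boldsymbol x$ in $\operatorname{Alt}$ with some cross-gap exceeding $\epsilon$. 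Then let $\epsilon\to0$, using continuity of the infimum.
\end{itemize}
\cref{lem:posterior-open-set} with $\widehat{\boldsymbol w}(t)\to\boldsymbol w(q_0)$ and continuity of $\Gamma(\cdot;\boldsymbol\theta^0)$ then gives the limit. For (iii), use \cref{lem:pairwise-rate}: for each pair $i\le k<j$ and each $x\in[\theta_j^0,\theta_i^0]$, either $x\le q_0$ or $x\ge q_0$. By monotonicity of the KL divergence in its second argument, one of the two terms satisfies $w_\ell d_\ell(\theta_\ell^0,x)\ge w_\ell d_\ell(\theta_\ell^0,q_0)=(\sum_m 1/d_m(\theta_m^0,q_0))^{-1}$. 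Since the other term is nonnegative, this bound holds for every pair, which gives (iii).
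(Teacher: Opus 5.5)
\textbf{Gap: the lower bound on $D_{i,t}$ in Step~1.} Your overall structure matches the paper's. Step~2 (the one-sample exchange inequality from batch optimality, applied at the last round in which each item was sampled) and Step~3 (the EOC sandwich, and the monotonicity argument for (iii)) are essentially what the paper does. The gap is the lower bound on $D_{i,t}(\Delta)$ in Step~1, which is the technical core of the theorem.

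Your monotonicity idea can be made exact. Set $J_{i,t}(\Delta):=\ell_{i,t}(\Delta)-(q(t)-\mu_i(t))^+$ and $\rho:=c_i(t,\Delta+1)/c_i(t,\Delta)$. Since $c_i(t,\Delta)$ and the tail factor (with $z_i$ moving away from $\mu_i$) both decrease in $\Delta$, you get $D_{i,t}(\Delta)\ge(1-\rho)J_{i,t}(\Delta)$. This is exactly the paper's convexity bound, which comes from $\widetilde\theta_i(\Delta+1)$ being a contraction of $\widetilde\theta_i(\Delta)$ about $\mu_i(t)$ by the factor $\rho$. Two things then remain:
(a) $c_i(t,d)\to1$ for each fixed $d$, so that $z_i(t,d)\to q_0$ and the exponent is $d_i(\theta_i^0,q_0)$;
(b) $1-\rho\ge e^{-o(N_i(t))}$.

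Your only support for both is $c_i(t,\Delta)^2\approx N_i/(N_i+\Delta)$, which you do not justify and which is not available in this generality. It is exact only for unrestricted conjugate natural exponential families (\cref{prop:greedy-conditions}(iii)). \cref{ass:rate-model} allows any prior density bounded above and below on a bounded $\mathcal E_i$ (e.g.\ truncated conjugate priors) and a general increasing $T_i$ (e.g.\ $T_i(y)=\log y$ for the lognormal). Deriving $1-\rho=\Theta(1/N_i)$ from posterior asymptotics would need second-order control of a difference of two posterior variances that agree to first order.

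\textbf{Missing idea.} Only a subexponential lower bound is needed, and it follows from an exact inequality rather than from asymptotics of $c_i$. Apply $\Cov(\E[\theta\mid Y],Y)=\Var(\theta)$ conditionally on the first $d$ new observations, then Cauchy--Schwarz. This gives $\bar v_i(t,d)-\bar v_i(t,d+1)\ge\bar v_i(t,d)^2/\Var_t(Y)$, hence $1-\rho\ge\bar v_i(t,B-1)/(2\Var_t(Y))$ for $d<B$. This factor is $e^{-o(N_i)}$ for three reasons:
\begin{itemize}
\item $-\frac1n\log v_{i,n}\to0$.
\item $\bar v_i(t,m)/v_i(t)\to1$ for fixed $m$. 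The paper proves this separately in \cref{lem:rate-posterior-moments} by a likelihood-ratio argument, and it also settles (a).
\item $\Var_t(Y)\le\E_t[Y^2]$ grows at most like $e^{o(N_i)}$ by the uniform posterior-to-prior density-ratio estimate.
\end{itemize}
The resulting sandwich $\frac{\bar v_i(t,B-1)}{2\Var_t(Y)}J_{i,t}(B-1)\le D_{i,t}(d)\le J_{i,t}(0)$ reduces Step~1 to the exponents of $J_{i,t}(0)$ and $J_{i,t}(B-1)$ alone. That is where your Laplace-type tail estimate applies.

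\textbf{Smaller point.} You invoke \cref{prop:eoc-bpcs-consistency} for $N_i(t)\to\infty$ and for $S_k(\boldsymbol\mu(t))=S_k(\boldsymbol\theta^0)$ eventually. That result holds only for $\boldsymbol\theta^0$ in a prior-probability-one set and for the thresholds $q^*(t)$, $q^\dagger(t)$. Here $\boldsymbol\theta^0$ is arbitrary and $q(t)$ is generic. You therefore need to rerun the argument of \cref{lem:no-starvation}, using that $q(t)$ eventually lies in a compact subset of $\cJ$, and \cref{lem:rate-posterior-moments} in place of Doob's theorem.
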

Part (i) says that the asymptotic sampling proportions of TEAk are described by \eqref{eq:threshold-allocation}. To prove this, we show that
  \begin{equation*}
      -\frac{\log D_{i,t}(d)}{N_i(t)}
      \longrightarrow d_i(\theta_i^0,q_0)
      \qquad\text{for every $i$ and $d=0,\ldots,B-1$.}
  \end{equation*}
Thus, $d_i(\theta_i^0,q_0)$ gives the exponential decay rate of the marginal decrease in the surrogate per sample of item $i$. Since TEAk allocates samples to the items with the largest marginal decreases (the argument under DPBB is a bit more technical but also holds), it asymptotically samples the items with the smallest $N_i(t)d_i(\theta_i^0,q_0)$, which leads to balancing $N_i(t)d_i(\theta_i^0,q_0)$ across all items. Therefore, the sampling proportions converge to $w_i(q_0)$, which makes $w_i(q_0)d_i(\theta_i^0,q_0)$ equal across items. Part (ii) says that TEAk's posterior convergence rate is then obtained by plugging these proportions into $\Gamma$. Part (iii) gives a lower bound on TEAk's rate, and we establish the one-half guarantee by comparing this lower bound, rather than $\Gamma(\boldsymbol w(q_0);\boldsymbol\theta^0)$ itself, with the optimal rate $\Gamma^*(\boldsymbol\theta^0)$. This lower bound motivates choosing the threshold that makes it tightest,
    \begin{equation*}
        q^\dagger\in
        \argmin_{\theta_{k+1}^0<q<\theta_k^0}
        \sum_{i=1}^I\frac1{d_i(\theta_i^0,q)}.
    \end{equation*}
Replacing the true means with the posterior means results in our choice of threshold $q^\dagger(t)$ in \eqref{eq:q-dagger}. This parallels our earlier choice of $q^*(t)$: there we minimize an upper bound on the current EOC, whereas here we maximize a lower bound on its exponential decay rate.

It remains to show that $q^\dagger(t) \to q^\dagger$. To establish this convergence, we assume the minimizer is unique.

\begin{assumption}[Unique threshold]\label{ass:unique-threshold}
For every $\boldsymbol x\in\cJ^I$ with $x_{[k]}>x_{[k+1]}$, the function $q\mapsto\sum_{i=1}^I1/d_i(x_i,q)$ has a unique minimizer on $(x_{[k+1]},x_{[k]})$.
\end{assumption}

\begin{remark}\label{rem:unique-threshold-families}
Each likelihood family in \cref{rem:examples} satisfies \cref{ass:unique-threshold}.
\end{remark}

\begin{lemma}[Convergence of $q^\dagger(t)$]\label{lem:rate-threshold-convergence}
Run \cref{alg:sequential} with threshold $q^\dagger(t)$. Then $q^\dagger(t)\to q^\dagger$.
\end{lemma}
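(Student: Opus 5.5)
By \cref{prop:eoc-bpcs-consistency}(i), $\mu_i(t)\to\theta_i^0$ for every $i$ almost surely. Since $\theta_k^0>\theta_{k+1}^0$, the ordered posterior means eventually satisfy $\mu_{[k]}(t)>\mu_{[k+1]}(t)$, with $\mu_{[k]}(t)\to\theta_k^0$ and $\mu_{[k+1]}(t)\to\theta_{k+1}^0$. From that point on, $q^\dagger(t)$ is the minimizer in the first case of \eqref{eq:q-dagger}. The plan is an argmin-continuity argument. Define
\begin{equation*}
G(\boldsymbol x,q):=\sum_{i=1}^I\frac{1}{d_i(x_i,q)}.
\end{equation*}
I will show that $q\mapsto G(\boldsymbol\mu(t),q)$ converges to $q\mapsto G(\boldsymbol\theta^0,q)$ uniformly on compact subintervals of $(\theta_{k+1}^0,\theta_k^0)$. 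I will also show that minimizers cannot escape to the endpoints. Uniqueness of $q^\dagger$ (\cref{ass:unique-threshold} applied at $\boldsymbol x=\boldsymbol\theta^0$) then gives $q^\dagger(t)\to q^\dagger$.

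Step 1 is continuity. Under \cref{ass:rate-model}, the exponential-family form gives
\begin{equation*}
d_i(x,q)=A_i(\eta_i(q))-A_i(\eta_i(x))-(\eta_i(q)-\eta_i(x))A_i'(\eta_i(x)),
\end{equation*}
which is jointly continuous. It vanishes only when $x=q$, because $\eta_i$ is injective and the family is identifiable. On any compact $K\subset(\theta_{k+1}^0,\theta_k^0)$, every $d_i(\theta_i^0,q)$ is bounded away from zero, since $\theta_i^0\notin K$. Hence for $\boldsymbol x$ in a small neighborhood of $\boldsymbol\theta^0$, $G(\boldsymbol x,\cdot)$ is finite and converges to $G(\boldsymbol\theta^0,\cdot)$ uniformly on $K$.

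Step 2 is the boundary behavior, which I expect to be the main obstacle. As $q\downarrow x_{[k+1]}$ the term $1/d_{[k+1]}(x_{[k+1]},q)$ blows up, and as $q\uparrow x_{[k]}$ the term $1/d_{[k]}(x_{[k]},q)$ blows up. This must hold uniformly in $\boldsymbol x$ near $\boldsymbol\theta^0$. I will fix a small $\varepsilon>0$. For $q$ within $2\varepsilon$ of $\mu_{[k+1]}(t)$ (resp.\ $\mu_{[k]}(t)$), one of the $d$ terms is at most $\sup_{|x-q|\le2\varepsilon}d_i(x,q)$. By uniform continuity of $d_i$ on a compact set around the diagonal, this is $o(1)$ as $\varepsilon\to0$. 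So $G$ exceeds any prescribed $M$ there. Choosing $M>G(\boldsymbol\theta^0,q^\dagger)+1$ and using Step 1 at $q^\dagger$, we get $G(\boldsymbol\mu(t),q^\dagger)<M$ for large $t$. Therefore all minimizers $q^\dagger(t)$ eventually lie in $K_\varepsilon:=[\theta_{k+1}^0+\varepsilon,\theta_k^0-\varepsilon]$. Existence of a minimizer also follows from this, since $G(\boldsymbol\mu(t),\cdot)$ is continuous and coercive toward the endpoints, so no separate attainment assumption is needed.

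Step 3 is the standard argmax-theorem conclusion. Take any subsequence of $q^\dagger(t)$. It lies in the compact $K_\varepsilon$, so it has a further convergent subsequence with limit $\bar q$. By uniform convergence on $K_\varepsilon$,
\begin{equation*}
G(\boldsymbol\theta^0,\bar q)=\lim G(\boldsymbol\mu(t),q^\dagger(t))\le\lim G(\boldsymbol\mu(t),q^\dagger)=G(\boldsymbol\theta^0,q^\dagger).
\end{equation*}
Uniqueness then forces $\bar q=q^\dagger$, so the whole sequence converges almost surely.
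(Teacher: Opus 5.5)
Your proof is correct and follows essentially the same route as the paper. Both arguments extract a convergent subsequence of $q^\dagger(t)$ and exclude limits at $\theta_{k+1}^0$ or $\theta_k^0$ because $1/d_i(\mu_i(t),q)$ blows up as $q$ approaches $\mu_i(t)$; the paper does this directly along the subsequence using joint continuity of $d_k$ at $(\theta_k^0,\theta_k^0)$, rather than through your uniform $\varepsilon$-neighborhood bound. Both then pass to the limit by continuity and conclude with \cref{ass:unique-threshold}.

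One correction is needed. \cref{prop:eoc-bpcs-consistency}(i) gives $\boldsymbol\mu(t)\to\boldsymbol\theta^0$ only for $\boldsymbol\theta^0$ in the prior-probability-one set $\Theta^*$, whereas here $\boldsymbol\theta^0$ is an arbitrary fixed vector. You should instead obtain that convergence as the paper does, from \cref{lem:rate-posterior-moments} (valid for every $\boldsymbol\theta^0$ under \cref{ass:rate-model}) together with the argument of \cref{lem:no-starvation}.
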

By \cref{lem:rate-threshold-convergence}, \cref{thm:threshold-allocation} applies with $q_0=q^\dagger$. We further show that $\FitParens{\sum_{i=1}^I1/d_i(\theta_i^0,q^\dagger)}^{-1} \ge\Gamma^*(\boldsymbol\theta^0)/2$, which yields our main theoretical result.

\begin{theorem}[Exponential rate of TEAk]\label{thm:half-optimal-eoc-rate}
Run \cref{alg:sequential} with threshold $q^\dagger(t)$. Then
  \begin{equation*}
      \lim_{t\to\infty}
      -\frac{\log\mathrm{EOC}(\cH_t)}{\sum_iN_i(t)}
      \ge\frac12\Gamma^*(\boldsymbol\theta^0).
  \end{equation*}
\end{theorem}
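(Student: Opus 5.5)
By \cref{lem:rate-threshold-convergence}, $q^\dagger(t)\to q^\dagger$, and $q^\dagger$ lies in the open interval $(\theta_{k+1}^0,\theta_k^0)$. So \cref{thm:threshold-allocation} applies with $q_0=q^\dagger$. Parts (ii) and (iii) of that theorem then give
\begin{equation*}
\lim_{t\to\infty}-\frac{\log\mathrm{EOC}(\cH_t)}{\sum_iN_i(t)}=\Gamma(\boldsymbol w(q^\dagger);\boldsymbol\theta^0)\ge\Bigl(\sum_{i=1}^I\frac1{d_i(\theta_i^0,q^\dagger)}\Bigr)^{-1}.
\end{equation*}
The limit therefore exists, and it remains to prove
\begin{equation*}
\Bigl(\sum_{i=1}^I 1/d_i(\theta_i^0,q^\dagger)\Bigr)^{-1}\ge\tfrac12\Gamma^*(\boldsymbol\theta^0).
\end{equation*}
Since $q^\dagger$ minimizes $\sum_i1/d_i(\theta_i^0,q)$ over $(\theta_{k+1}^0,\theta_k^0)$, it suffices to exhibit one $q$ in that interval with $\sum_i 1/d_i(\theta_i^0,q)\le 2/\Gamma^*$.

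Let $\boldsymbol w^*$ be a maximizer of \eqref{eq:optimal-posterior-rate}, which exists by \cref{prop:optimal-eoc-rate}(i), and write $\Gamma^*=\Gamma(\boldsymbol w^*;\boldsymbol\theta^0)$. By \cref{lem:pairwise-rate}, for every pair $i\le k<j$ and every $x\in[\theta_j^0,\theta_i^0]$,
\begin{equation*}
w_i^*d_i(\theta_i^0,x)+w_j^*d_j(\theta_j^0,x)\ge\Gamma^*.
\end{equation*}
The key step is to find a single $q\in(\theta_{k+1}^0,\theta_k^0)$ with $w_i^*d_i(\theta_i^0,q)\ge\Gamma^*/2$ for \emph{every} $i$. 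Then
\begin{equation*}
\sum_i\frac1{d_i(\theta_i^0,q)}\le\sum_i\frac{2w_i^*}{\Gamma^*}=\frac{2}{\Gamma^*},
\end{equation*}
using $\sum_i w_i^*=1$, and the theorem follows.

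To construct $q$, use monotonicity of the KL divergences. Under \cref{ass:rate-model}, $x\mapsto d_i(\theta_i^0,x)$ is continuous, strictly decreasing for $x<\theta_i^0$, and strictly increasing for $x>\theta_i^0$. For $i\le k$ define
\begin{equation*}
g_i(x)=w_i^*d_i(\theta_i^0,x),
\end{equation*}
which is decreasing on $x\le\theta_k^0\le\theta_i^0$. Let $G(x)=\min_{i\le k}g_i(x)$, which is continuous and nonincreasing on $[\theta_{k+1}^0,\theta_k^0]$. Similarly, for $j>k$ let $h_j(x)=w_j^*d_j(\theta_j^0,x)$, increasing on $x\ge\theta_{k+1}^0$, and let $H(x)=\min_{j>k}h_j(x)$, which is continuous and nondecreasing. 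At the endpoints, $G(\theta_k^0)=0$ because $d_k(\theta_k^0,\theta_k^0)=0$, and $H(\theta_{k+1}^0)=0$. Hence $G-H$ is continuous, nonpositive at $\theta_k^0$ and nonnegative at $\theta_{k+1}^0$, so there is a crossing point $q$ with $G(q)=H(q)$. At this point, choose $i$ attaining $G(q)$ and $j$ attaining $H(q)$. Applying the pairwise inequality to this pair at $x=q$ gives $2G(q)=G(q)+H(q)\ge\Gamma^*$. Thus every $i$ satisfies $w_i^*d_i(\theta_i^0,q)\ge\min(G(q),H(q))\ge\Gamma^*/2$.

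The main obstacle is making sure the crossing point $q$ lies in the \emph{open} interval, since $q^\dagger$ is a minimizer only over $(\theta_{k+1}^0,\theta_k^0)$. Because $\Gamma^*>0$, the equality $G(q)=H(q)\ge\Gamma^*/2>0$ forces $q\neq\theta_k^0$ (where $G=0$) and $q\neq\theta_{k+1}^0$ (where $H=0$), so $q$ is interior. It also follows that $d_i(\theta_i^0,q)>0$ for all $i$, so the sum is finite. One more case needs attention: some $w_i^*$ could be $0$. This is handled by the same bound, because $w_i^*d_i(\theta_i^0,q)\ge\Gamma^*/2>0$ forces $w_i^*>0$ for every $i$.

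Finally, there is a subtlety in applying \cref{thm:threshold-allocation}: $q^\dagger(t)$ is defined through the argmin in \eqref{eq:q-dagger}, which requires $\mu_{[k]}(t)>\mu_{[k+1]}(t)$ and attainment of the minimum. This holds eventually by \cref{prop:eoc-bpcs-consistency}(iii), so the hypotheses of \cref{thm:threshold-allocation} are met.
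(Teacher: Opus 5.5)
Your proof is correct and follows the paper's argument essentially step for step. You invoke \cref{lem:rate-threshold-convergence} together with \cref{thm:threshold-allocation}(ii)--(iii), then take the crossing point $\bar q$ of $\min_{i\le k}w_i^*d_i(\theta_i^0,\cdot)$ and $\min_{j>k}w_j^*d_j(\theta_j^0,\cdot)$ to get $\Gamma^*\le 2r$ and $r\sum_i 1/d_i(\theta_i^0,\bar q)\le 1$, which is exactly the paper's route. The only differences are cosmetic: you deduce that the crossing is interior and that $w_i^*>0$ from $\Gamma^*>0$, whereas the paper cites positivity of $\boldsymbol w^*$ first. Your final paragraph, however, is unnecessary and slightly misattributed. \cref{prop:eoc-bpcs-consistency}(iii) holds only for $\boldsymbol\theta^0\in\Theta^*$ and says nothing about attainment of the minimum. \cref{lem:rate-threshold-convergence}, which relies on \cref{ass:unique-threshold} and so includes existence of the minimizer, already supplies the hypothesis $q^\dagger(t)\to q^\dagger\in(\theta_{k+1}^0,\theta_k^0)$ that \cref{thm:threshold-allocation} needs.
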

Thus, the posterior convergence rate of TEAk is at least half the optimal posterior convergence rate. The conclusions in \cref{prop:optimal-eoc-rate} and in \cref{thm:threshold-allocation,thm:half-optimal-eoc-rate} also hold when $\mathrm{EOC}(\cH_t)$ is replaced by the posterior error probability $\Pp_t(S_k(\boldsymbol\theta)\ne S_k(\boldsymbol\theta^0))$.

\subsection{Numerical illustration}\label{subsec:rate-illustration}
We illustrate \cref{thm:threshold-allocation,thm:half-optimal-eoc-rate} on a Gaussian instance with $I=20$, $k=5$, likelihood variance of $1$, and true means $\theta_i^0=-(i\!-\!1)/2$ so that the gap between consecutive means is $0.5$. On this instance $\Gamma^*(\boldsymbol\theta^0)=0.0243$ and $q^\dagger=-2.25$, the midpoint between $\theta_k^0$ and $\theta_{k+1}^0$. The rate of the threshold allocation is $\Gamma(\boldsymbol w(q^\dagger);\boldsymbol\theta^0)=0.57\,\Gamma^*(\boldsymbol\theta^0)$ and the lower bound on it in \cref{thm:threshold-allocation}(iii) is $0.53\,\Gamma^*(\boldsymbol\theta^0)$.

We run TEAk with $q^\dagger(t)$ on this instance with $B_t=1$, a sampling budget of \num{20000}, and a $N(0,10^2)$ prior over $50$ replications. We also run two fixed allocations: the oracle $\boldsymbol w^*$ that maximizes \eqref{eq:optimal-posterior-rate} using the true means, and equal allocation. \cref{fig:rate} shows the results.

\begin{figure}[t]
\centering
\includegraphics[width=0.95\textwidth]{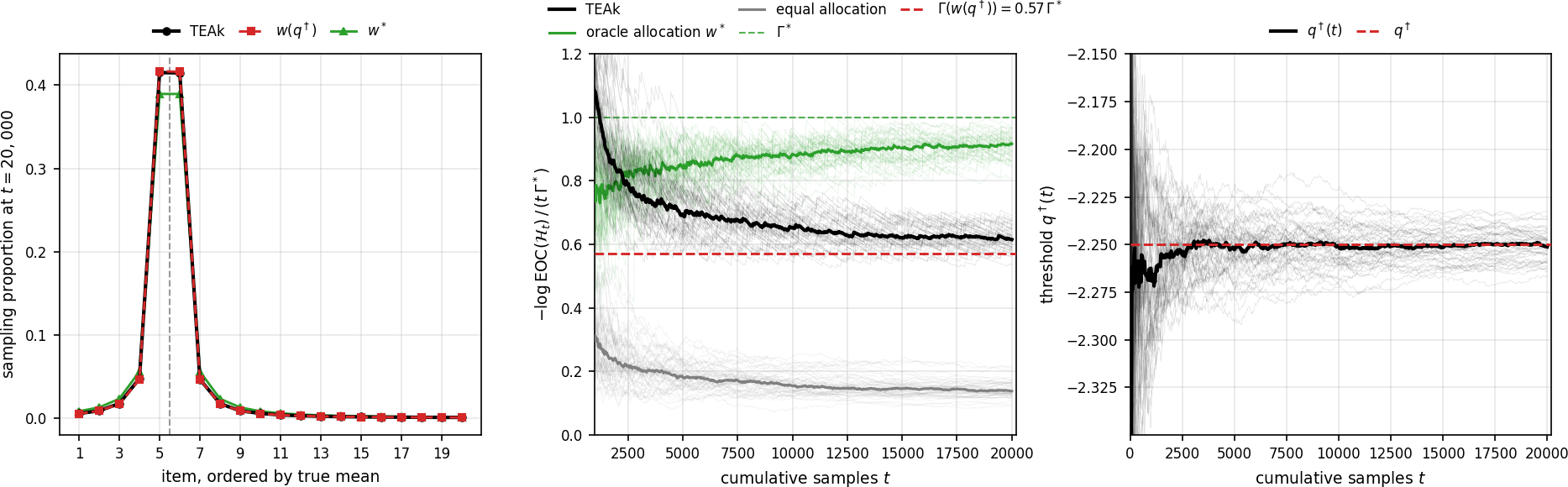}
\caption{Left: TEAk's sampling proportions at $t=\num{20000}$, averaged over replications, as well as $\boldsymbol w(q^\dagger)$ and $\boldsymbol w^*$. Middle: $-\log\mathrm{EOC}(\cH_t)/(t\,\Gamma^*(\boldsymbol\theta^0))$ for TEAk, the oracle allocation, and equal allocation on each of the $50$ replications (median in bold). Right: the threshold $q^\dagger(t)$ on each of the $50$ replications (median in bold).}
\label{fig:rate}
\vspace{-2pt}
\end{figure}

The left panel shows the average sampling proportions at the end of the run. TEAk's proportions coincide with $\boldsymbol w(q^\dagger)$, as guaranteed by \cref{thm:threshold-allocation}(i), and are a bit off from $\boldsymbol w^*$: TEAk puts $83\%$ of the budget on the $k$th and $(k\!+\!1)$th items, whereas $\boldsymbol w^*$ puts $78\%$. The middle panel shows an approximation\footnote{The posterior $\mathrm{EOC}(\cH_t)$ has no closed form, so we plot the upper bound $U_t:=\sum_{i\in S_k(\boldsymbol\mu(t))}\sum_{j\notin S_k(\boldsymbol\mu(t))}\E_t[(\theta_j\!-\!\theta_i)^+]$ in its place, and each term has an explicit expression under Gaussian posteriors. Since $U_t/(k(I\!-\!k))\le\mathrm{EOC}(\cH_t)\le U_t$, the plotted value differs from $-\log\mathrm{EOC}(\cH_t)/t$ by at most $\log(k(I\!-\!k))/t$.} of $-\log\mathrm{EOC}(\cH_t)/t$ along each replication as a fraction of $\Gamma^*(\boldsymbol\theta^0)$. For TEAk, it decreases towards $\Gamma(\boldsymbol w(q^\dagger);\boldsymbol\theta^0)$ in line with \cref{thm:threshold-allocation}(ii), and for the oracle allocation it approaches $\Gamma^*(\boldsymbol\theta^0)$. The right panel shows the convergence $q^\dagger(t)\to q^\dagger$ on each replication.

\section{Numerical experiments}\label{sec:experiments}
In this section, we evaluate TEAk's performance numerically by comparing it to the most competitive algorithms in the literature. For this comparison, we use a batch size of $1$, as most benchmarks are designed for the fully sequential problem. \cref{subsec:batching} then turns to the batched setting and compares the performance of TEAk across batch sizes. Throughout, TEAk and TEAk-$q^\dagger(t)$ denote \cref{alg:sequential} with the threshold $q^*(t)$ and $q^\dagger(t)$ respectively. Across all settings, the two are nearly identical in both performance and allocation.

\subsection{Setup}
We consider $I\in\{20,50,100\}$, with a sampling budget of up to $\num{1000}\!\cdot\!I$ for each setting. For $I=20$ we consider $k\in\{1,3,5,10\}$, and for $I\in\{50,100\}$ we consider $k\in\{1,5,10,20\}$. On each ($I,k$) pair, we consider four likelihood families: Gaussian, exponential, Bernoulli, and lognormal. For each of the four families, we consider three generative distributions for their true means $\boldsymbol{\theta}$, given in \cref{tab:instances}. These ranges, although somewhat arbitrary, are targeted so that the difficulty of the identification problem is approximately the same across the likelihood families. The results are generally the same across these three generative distributions of the true means, so we report the results in the middle column in this section, giving the rest in \cref{app:pcs}. We run \num{10000} replications of each simulation, drawing the true means afresh in each.
\begin{table}[!htb]
\centering
\small
\caption{The four likelihood families and the three distributions used to generate the true means.}
\label{tab:instances}
\begin{tabular}{l lll}
\toprule
Likelihood $Y_{i,n}\mid\theta_i$        & \multicolumn{3}{c}{Distribution of true means $\theta_i$}\\
\midrule
$N(\theta_i,1)$                             & $\mathrm{Unif}(-2,2)$       & $\mathrm{Unif}(-1,1)$    & $\mathrm{Unif}(-0.5,0.5)$\\
$\mathrm{Exp}(1/\theta_i)$                  & $\mathrm{Unif}(1,9)$        & $\mathrm{Unif}(1,5)$     & $\mathrm{Unif}(1,3)$\\
$\mathrm{Bern}(\theta_i)$                   & $\mathrm{Unif}(0.05,0.95)$  & $\mathrm{Unif}(0.2,0.8)$ & $\mathrm{Unif}(0.35,0.65)$\\
$\mathrm{Lognormal}(\log\theta_i\!-\!1/2,\,1)$  & $\mathrm{Unif}(1,9)$        & $\mathrm{Unif}(1,5)$     & $\mathrm{Unif}(1,3)$\\
\bottomrule
\end{tabular}
\end{table}
\vspace{-10pt}
\paragraph{Fair comparison.}
We use three measures to ensure each method is on equal footing. First, within a replication, all methods have the same true means and the outcomes come from the same stream, so the $n$th observation of an item $i$ is identical across methods. Second, to ensure fairness across Bayesian and frequentist methods, the Bayesian methods all use the same uninformative (Jeffreys) prior for each distribution family: the flat prior for the Gaussian mean, $\propto 1/\theta$ for the exponential mean, $\mathrm{Beta}(1/2,1/2)$ for Bernoulli, and the flat prior on $\log\theta$ for the lognormal mean \citep{Jeffreys1946}. Third, since some benchmark methods require a warm-up, each algorithm receives a warm-up of $n_0=3$ samples per item.

\paragraph{Benchmarks.}
We compare against the following benchmarks, all of which aim to maximize PCS unless otherwise specified.
\begin{itemize}[leftmargin=*]
    \item[--] \emph{Gaussian-Specific from the Ranking and Selection Literature:} For $k=1$ AOAP  from \citet{peng2018} and  for $k>1$ AOAm from \citet{ZhangPengZhangZhou2023} and EMIp from \citet{Liu2024}. For the top-$k$ EOC setting, EOC-m from \citet{GaoChen2015}.

    \item[--] \emph{General $k = 1$:}  Top two Thompson sampling (TTTS) of \citet{russo2020} with $\beta=1/2$ and its variant that does not rely on a tuning parameter, TTTS-IDS from \citet{QinYou2026}. BOLD \citep{chenRyzhov2023}.

    \item[--] \emph{General $k > 1$ from the Best Arm(s) Identification Literature:} The top-$k$ Thompson sampling approaches TS-PPS-IDS and TS-KKT(.5)-IDS from \citet{you2023,QinYou2026}, KL-LUCB from \citet{kaufmann2013}, and gap-e from \citet{bubeck2013}.

  \item[--] \emph{Uniform:} Equal allocation across all items.
\end{itemize}

\paragraph{Metrics.}
Since most benchmarks aim to maximize PCS, we report both EOC and PCS, estimated from the \num{10000} replications. Consistent with previous literature and the discussion in the introduction, the two are closely aligned. Since none of the methods other than gap-e depend on the total budget, we have their EOC and PCS for every fixed budget up to $\num{1000}\!\cdot\!I$.

\subsection{Results}
To keep the paper focused, we report PCS at the largest budget ($\num{1000}\!\cdot\!I$), and for EOC, we report performance curves across budgets for the Gaussian setting. The PCS and EOC performance curves for all settings are in \cref{app:pcs}.

\begin{figure}[t]
\centering
\includegraphics[width=0.95\textwidth]{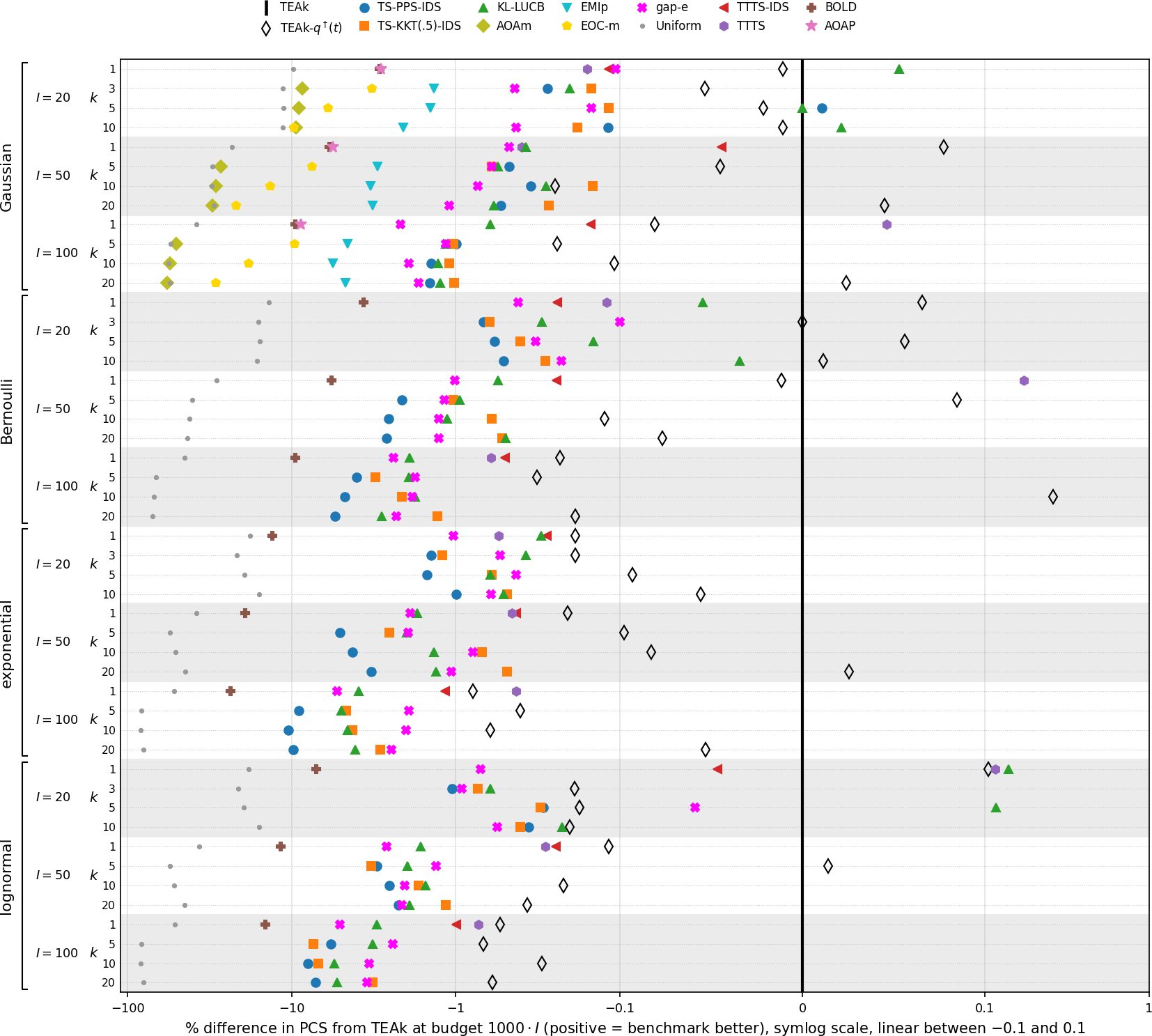}
\caption{\% difference in PCS between benchmarks and TEAk at budget $\num{1000}\!\cdot\!I$ (positive means benchmark is better).}
\label{fig:final}
\end{figure}

\begin{figure}[t]
\centering
\includegraphics[width=0.95\textwidth]{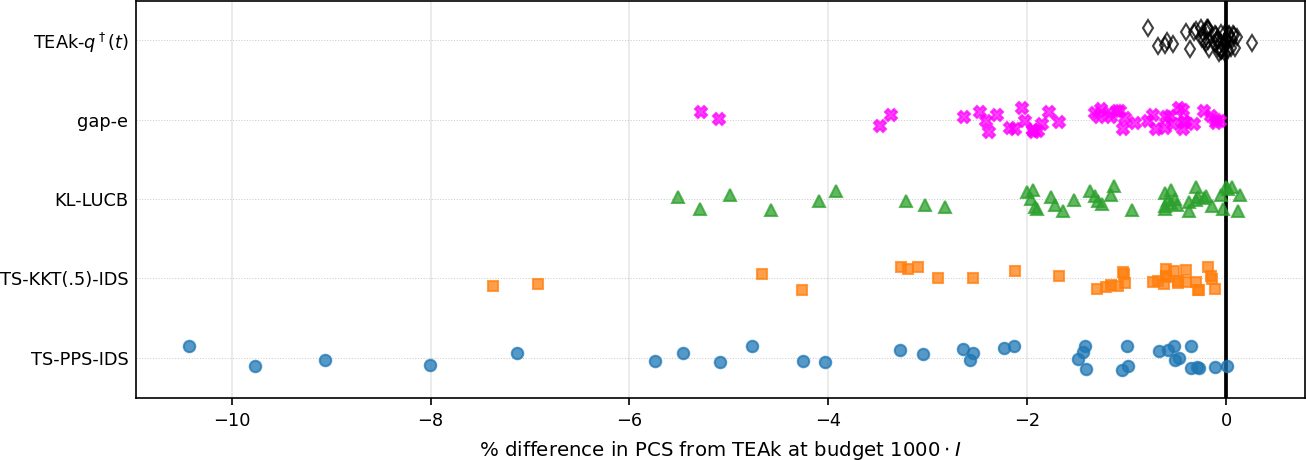}
\caption{The distribution across all settings in \cref{fig:final} for the four strongest benchmarks and TEAk-$q^\dagger(t)$.}
\label{fig:strip}
\end{figure}

\paragraph{PCS.}
\cref{fig:final} shows the PCS of each benchmark relative to TEAk for all configurations. TEAk generally has the highest PCS, and when it does not, it is within a few percent of the best. TEAk's biggest advantage over the benchmarks is on larger problems ($I = 100$) and larger $k$. To show more clearly how TEAk improves over the best benchmarks and how it compares with TEAk-$q^\dagger(t)$, \cref{fig:strip} takes the four best-performing benchmarks, together with TEAk-$q^\dagger(t)$, and plots the distribution over all configurations of their PCS relative to TEAk. Each benchmark falls short of TEAk by $5\%$ or more in some configurations, while none beats it by more than $0.2\%$ in any configuration. In addition, TEAk outperforms TEAk-$q^\dagger(t)$ in $36$ of the $48$ configurations, and in every configuration the PCS of TEAk-$q^\dagger(t)$ is between $0.8\%$ below and $0.3\%$ above that of TEAk.

\paragraph{EOC over the whole budget.}
\cref{fig:grid-gauss} plots EOC on a log scale against cumulative samples spent for the Gaussian family and each $(I,k)$ pair. At $k=1$, the most competitive benchmarks are the TTTS variants, and for general $k$ the most competitive are gap-e, KL-LUCB, and the top-$k$ Thompson sampling methods. Which benchmark is best (or near-best) varies across configurations and varies with the budget within a configuration. TEAk is the exception: in every configuration, its EOC is the lowest, or close to it, across all budgets. The plots for the other three sampling families show a similar picture.

\begin{figure}[!htb]
\centering
\includegraphics[width=0.95\textwidth]{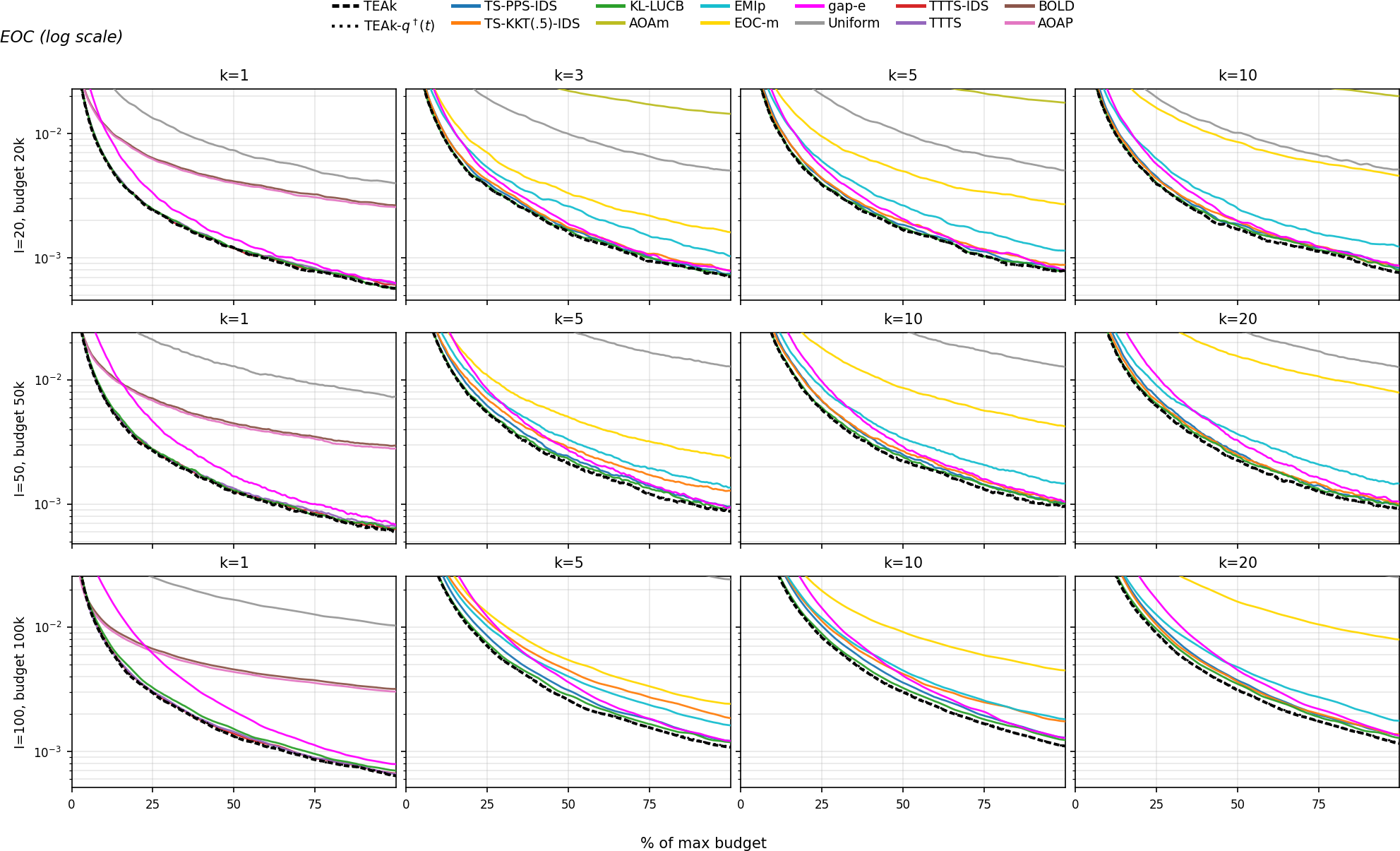}
\caption{Gaussian: EOC (log scale) against budget (max budget $ = \num{1000}\!\cdot\!I$). Rows vary the problem
size $I$ and columns vary $k$.}
\label{fig:grid-gauss}
\vspace{-20pt}
\end{figure}

\subsection{Batching}\label{subsec:batching}
We now investigate TEAk's performance in the batched setting, one of its main motivations.  We compare against BatchSAR \citep{jun2016}, a version of the successive accepts and rejects algorithm of \citet{bubeck2013} designed specifically for fixed-budget top-$k$ identification with batched sampling. We consider the Gaussian and Bernoulli families with $I\in\{20,50,100\}$, $k=5$, and the middle column of \cref{tab:instances}. Since BatchSAR's schedule depends on the total budget, we cannot run it with a large budget and read off its performance at every point along the way, so we fix a budget of $50\!\cdot\!I$ and consider batch sizes of $1\%$, $2\%$, $5\%$, and $10\%$ of the budget. \cref{fig:batchsar} plots PCS, the objective BatchSAR is designed to maximize. Not only does TEAk have a higher PCS at the smallest batch size, but it also decreases less as the batch size grows: from batches of $1\%$ to $10\%$ of the budget, its PCS falls by up to $12\%$, and BatchSAR's by up to $29\%$.

\begin{figure}[!htb]
\centering
\includegraphics[width=0.95\textwidth]{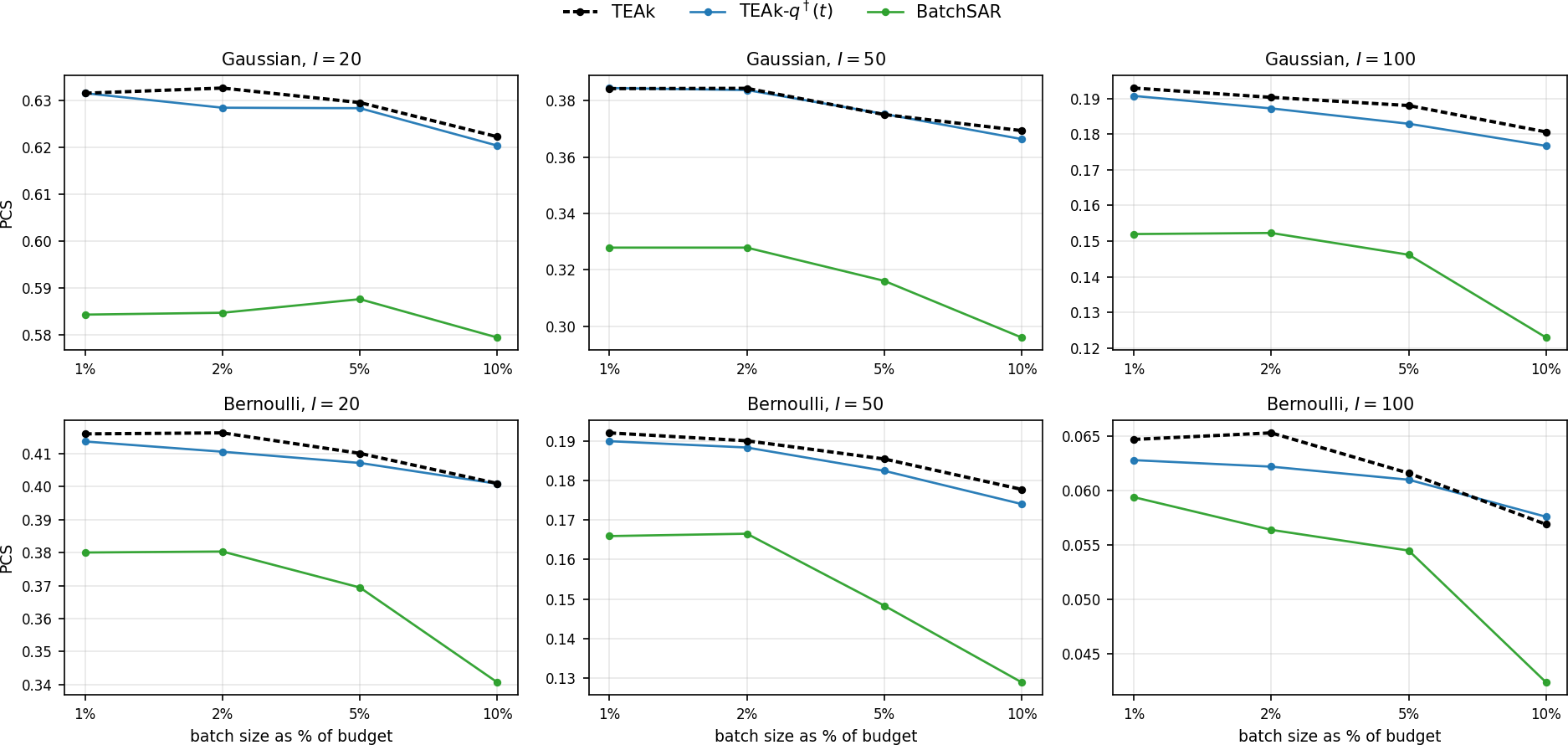}
\caption{PCS at the final budget of $50\!\cdot\!I$ against the batch size for the Gaussian (top) and Bernoulli (bottom) families with $k=5$.}
\label{fig:batchsar}
\end{figure}

\section{Discussion}\label{sec:discussion}
We give a simple and interpretable algorithm for top-$k$ selection that accommodates batching in a principled way and can be applied to a wide range of sampling distribution families. We prove that it is consistent and that with the threshold $q^\dagger(t)$, its posterior convergence rate is at least half of the best rate achievable by any sampling rule. In simulation, it is competitive with the state-of-the-art methods across likelihood families, problem sizes, values of $k$, and batch sizes.

\bibliography{refs}
\clearpage

\begin{APPENDICES}
\renewcommand{\theHsection}{appendix.\Alph{section}}

\crefalias{section}{appendix}

\section{Proofs for \cref{sec:bound}}\label{app:proofs}

\begin{proof}{Proof of \cref{prop:eoc-bound}(i)}
Fix any realized $\boldsymbol\theta\in\R^I$ and any $q\in\R$. Then
\begin{align*}
\mathrm{OC}(S_k(\boldsymbol\mu(t)),\boldsymbol\theta)
&= \sum_{i\in S_k(\boldsymbol\theta)}\theta_i
    - \sum_{i\in S_k(\boldsymbol\mu(t))}\theta_i \\
&\overset{(a)}{=} \sum_{i\in S_k(\boldsymbol\theta)\setminus S_k(\boldsymbol\mu(t))}\theta_i
    - \sum_{i\in S_k(\boldsymbol\mu(t))\setminus S_k(\boldsymbol\theta)}\theta_i \\
&\overset{(b)}{=} \sum_{i\in S_k(\boldsymbol\theta)\setminus S_k(\boldsymbol\mu(t))}(\theta_i-q)
    + \sum_{i\in S_k(\boldsymbol\mu(t))\setminus S_k(\boldsymbol\theta)}(q-\theta_i) \\
&\le \sum_{i\notin S_k(\boldsymbol\mu(t))}(\theta_i-q)^+
    + \sum_{i\in S_k(\boldsymbol\mu(t))}(q-\theta_i)^+,
\end{align*}
where (a) holds because indices in $S_k(\boldsymbol\mu(t)) \cap S_k(\boldsymbol\theta)$ cancel, and (b) holds because $|S_k(\boldsymbol\mu(t))\setminus S_k(\boldsymbol\theta)| = |S_k(\boldsymbol\theta)\setminus S_k(\boldsymbol\mu(t))|$. Taking posterior expectation proves the claim.
\end{proof}

\begin{proof}{Proof of \cref{prop:eoc-bound}(ii)}
Define
\begin{equation*}
G_t(q) := \E_t\FitDelims{[}{]}{
        \sum_{i\in S_k(\boldsymbol\mu(t))}(q-\theta_i)^+
        +
        \sum_{i\notin S_k(\boldsymbol\mu(t))}(\theta_i-q)^+
    }.
\end{equation*}
By \cref{ass:model} each posterior CDF $F_{i,t}$ is continuous on $\cJ$, so differentiating under the expectation gives
\begin{equation*}
\frac{d}{dq}\E_t[(q-\theta_i)^+] = F_{i,t}(q), \qquad \frac{d}{dq}\E_t[(\theta_i-q)^+] = F_{i,t}(q) - 1.
\end{equation*}
Hence
\begin{equation*}
G_t'(q) = \sum_{i\in S_k(\boldsymbol\mu(t))} F_{i,t}(q) + \sum_{i\notin S_k(\boldsymbol\mu(t))}(F_{i,t}(q) - 1) = \sum_{i=1}^I F_{i,t}(q) - (I-k).
\end{equation*}
Additionally, by \cref{ass:model} each posterior density is positive on $\cJ$, so each $F_{i,t}$ is strictly increasing on $\cJ$. Thus $G_t'$ is continuous and strictly increasing on $\cJ$, which gives convexity of $G_t$. Taking limits, $F_{i,t}(q)\to 0$ as $q\to-\infty$ and $F_{i,t}(q)\to 1$ as $q\to+\infty$, so $G_t'(q)\to -(I\!-\!k) < 0$ as $q\to-\infty$ and $G_t'(q)\to k > 0$ as $q\to+\infty$. By the intermediate value theorem, $G_t'$ has a unique zero $q^*(t)\in\cJ$, characterized by $\sum_i F_{i,t}(q^*(t)) = I\!-\!k$, equivalently \eqref{eq:q-star}, and $q^*(t)$ is the unique minimizer of $G_t$.
\end{proof}

\begin{proof}{Proof of \cref{prop:dp-exact}}
Starting with $r=0$, set $V_0(0)=0$ and $V_0(b)=+\infty$ for $b=1,\ldots,B_t$. For the inductive step, suppose $V_{r-1}(\cdot)$ is already known and consider the $(r,b)$ subproblem. Because the objective in \eqref{eq:Pt} is separable across items, fixing item $r$'s allocation at some $\delta\in\{0,\ldots,b\}$ splits the subproblem into two pieces: the cost $\ell_{r,t}(\delta)$ incurred by item $r$, plus the cost of optimally allocating exactly the residual $b\!-\!\delta$ samples among items $1,\ldots,r\!-\!1$, which is $V_{r-1}(b\!-\!\delta)$. Therefore, for each $(r,b) \in \{(x,y) : 1 \leq x \leq I, 0 \leq y \leq B_t\}$ the optimal value function is given by
\begin{equation*}
    V_r(b) = \min_{\delta\in\{0,\ldots,b\}}\{V_{r-1}(b-\delta)+\ell_{r,t}(\delta)\}, \qquad \Delta_r^*(b) \in \argmin_{\delta\in\{0,\ldots,b\}}\{V_{r-1}(b-\delta)+\ell_{r,t}(\delta)\}.
\end{equation*}
Once we solve this for each $(r,b)$, an optimal allocation vector $\boldsymbol\Delta^{t}$ is recovered by backtracking. $\Delta_I^{t}=\Delta_I^*(B_t)$, leaving a residual budget $B_t\!-\!\Delta_I^{t}$ for items $1,\ldots,I\!-\!1$. Then $\Delta_{I-1}^{t} = \Delta_{I-1}^*(B_t\!-\!\Delta_I^{t}) $ and so on down to item $1$:
\begin{equation*}
    \Delta_r^{t} = \Delta_r^*\!\Bigl(B_t-\sum_{j=r+1}^I \Delta_j^{t}\Bigr), \qquad r=I,I-1,\ldots,1.
\end{equation*}
By construction, backtracking recovers an allocation whose components sum to $B_t$ and whose objective value is $V_I(B_t)$. It is therefore a global optimizer of \eqref{eq:Pt}. The total runtime is the cost of precomputing the $I(B_t\!+\!1)$ values $\ell_{i,t}(d)$ which is $O(IB_t C_\ell)$ and the cost of filling in the $O(IB_t)$ DP states (each requiring a minimization over at most $B_t\!+\!1$ values of $\delta$) which is $O(IB_t^2)$.
\end{proof}

\begin{proof}{Proof of \cref{prop:greedy-conditions}(i)}
For any allocation $\boldsymbol{\Delta}$, the improvement relative to assigning no samples in the current round is
\begin{equation*}
    \sum_{j=1}^I \ell_{j,t}(0) - \sum_{j=1}^I \ell_{j,t}(\Delta_j) = \sum_{j=1}^I \sum_{h=0}^{\Delta_j - 1}\bigl[\ell_{j,t}(h) - \ell_{j,t}(h+1)\bigr] = \sum_{j=1}^I \sum_{h=0}^{\Delta_j - 1} D_{j,t}(h),
\end{equation*}
where the inner sum is zero when $\Delta_j = 0$. Minimizing $\sum_j \ell_{j,t}(\Delta_j)$ is therefore equivalent to maximizing
\begin{equation*}
    G(\boldsymbol{\Delta}) := \sum_{j=1}^I \sum_{h=0}^{\Delta_j - 1} D_{j,t}(h)
\end{equation*}
subject to $\sum_j \Delta_j = B_t$. Let $\boldsymbol{\Delta}^G$ denote the greedy allocation and $\boldsymbol{\Delta}$ any other feasible allocation. We will show $G(\boldsymbol{\Delta}^G) \ge G(\boldsymbol{\Delta})$. If $\boldsymbol{\Delta} = \boldsymbol{\Delta}^G$, there is nothing to prove.  Otherwise, since both allocations sum to $B_t$, there exist indices $i, j$ with $\Delta_i > \Delta_i^G$ and $\Delta_j < \Delta_j^G$.  We show that one can transfer a sample from $i$ to $j$ without decreasing $G$.

Because $\Delta_i > \Delta_i^G$, the term $D_{i,t}(\Delta_i\!-\!1)$ appears in $G(\boldsymbol{\Delta})$.  Since $\Delta_i\!-\!1 \ge \Delta_i^G$, \eqref{eq:dim-gains} gives
\begin{equation*}
    D_{i,t}(\Delta_i - 1) \le D_{i,t}(\Delta_i^G).
\end{equation*}
Because $\Delta_j < \Delta_j^G$, the term $D_{j,t}(\Delta_j)$ does not appear in $G(\boldsymbol{\Delta})$. Since $\Delta_j \le \Delta_j^G\!-\!1$, \eqref{eq:dim-gains} gives
\begin{equation*}
    D_{j,t}(\Delta_j) \ge D_{j,t}(\Delta_j^G - 1).
\end{equation*}
Consider the greedy step at which the $(\Delta_j^G)$th sample was assigned to item $j$.  Before that step, $j$ had received $\Delta_j^G\!-\!1$ samples and $i$ had received at most $\Delta_i^G$ samples.  The marginal gain of allocating to $j$ at that step was $D_{j,t}(\Delta_j^G\!-\!1)$ and the marginal gain of $i$ was at least $D_{i,t}(\Delta_i^G)$ by \eqref{eq:dim-gains}.  Since greedy chose $j$,
\begin{equation*}
    D_{j,t}(\Delta_j^G - 1) \ge D_{i,t}(\Delta_i^G).
\end{equation*}
Combining the three inequalities,
\begin{equation*}
    D_{j,t}(\Delta_j) \;\ge\; D_{j,t}(\Delta_j^G - 1) \;\ge\; D_{i,t}(\Delta_i^G) \;\ge\; D_{i,t}(\Delta_i - 1).
\end{equation*}
Define $\boldsymbol{\Delta}'$ by transferring one sample from $i$ to $j$: $\Delta'_i = \Delta_i\!-\!1$, $\Delta'_j = \Delta_j\!+\!1$, and $\Delta'_\ell = \Delta_\ell$ for $\ell\notin\{i,j\}$.  Then $\sum_\ell \Delta'_\ell = B_t$, and
\begin{equation*}
    G(\boldsymbol{\Delta}') - G(\boldsymbol{\Delta}) = D_{j,t}(\Delta_j) - D_{i,t}(\Delta_i - 1) \ge 0.
\end{equation*}
Repeating these exchanges transforms $\boldsymbol{\Delta}$ into $\boldsymbol{\Delta}^G$ without ever decreasing the gain.  Therefore $G(\boldsymbol{\Delta}^G) \ge G(\boldsymbol{\Delta})$, and $\boldsymbol{\Delta}^G$ minimizes $\sum_j \ell_{j,t}(\Delta_j)$ over the feasible set.
\end{proof}

\begin{proof}{Proof of \cref{prop:greedy-conditions}(ii)}
Fix item $i$ and round $t$, and write $c(d) := c_i(t, d)$, $\lambda := \lambda_i(t)$, $\ell(d) := \ell_{i,t}(d)$, $\theta := \theta_i$, $\mu := \mu_i(t)$, $q := q(t)$, and $\pi(\cdot)$ for the density of the posterior $\Pi_{i,t}$. By the definitions of $\widetilde\theta_i(d)$ and $\ell_{i,t}(d)$,
\begin{equation*}
    \ell(d) = \E\bigl[(q - \mu - c(d)(\theta - \mu))^+\bigr].
\end{equation*}
We will relax $\ell(\cdot)$ to the reals $d \ge 0$ and show $\ell$ is convex in $d$, thus $D_{i,t}(d) = \ell(d)\!-\!\ell(d\!+\!1)$ is nonincreasing in $d \in \mathbb{Z}_{\ge 0}$, which is \eqref{eq:dim-gains}.

We first compute the derivatives of $c$. By assumption, $c(d) = (1\!+\!\lambda d)^{-1/2}$, so
\begin{equation*}
    c'(d) = -\frac{\lambda}{2}(1 + \lambda d)^{-3/2} = -\frac{\lambda}{2}\,c(d)^3.
\end{equation*}
Differentiating again,
\begin{equation*}
    c''(d) = -\frac{\lambda}{2} \cdot 3 c(d)^2 \cdot c'(d) = -\frac{\lambda}{2} \cdot 3 c(d)^2 \cdot \Bigl(-\frac{\lambda}{2}\,c(d)^3\Bigr) = \frac{3\lambda^2}{4}\,c(d)^5 \ge 0.
\end{equation*}
Next we differentiate $\ell$. Let $U(d) := \mu\!+\!(q\!-\!\mu)/c(d)$ so
\begin{equation*}
    \ell(d) = \int_\R \bigl(q - \mu - c(d)(\theta - \mu)\bigr)^+ \pi(\theta)\,d\theta = \int_{-\infty}^{U(d)} \bigl(q - \mu - c(d)(\theta - \mu)\bigr) \pi(\theta)\,d\theta.
\end{equation*}

Define $I(\tau) := \int_{-\infty}^{\tau} (\theta\!-\!\mu)\pi(\theta)\,d\theta$, taking $\pi=0$ outside $\cJ$. Since $\E[|\theta\!-\!\mu|]<\infty$, differentiating under the expectation gives
\begin{equation*}
    \ell'(d)
    =-c'(d)\int_{-\infty}^{U(d)}
      (\theta-\mu)\pi(\theta)\,d\theta
    =-c'(d)I(U(d)).
\end{equation*}
For $U(d)\in\cJ$, differentiating again gives
\begin{align*}
  \ell''(d)
  &= -c''(d)\,I(U(d)) - c'(d)\,I'(U(d))\,U'(d) \\
  &= -c''(d)\,I(U(d)) - c'(d) \cdot \tfrac{q-\mu}{c(d)}\,\pi(U(d)) \cdot
  \Bigl(-\tfrac{q-\mu}{c(d)^2}\,c'(d)\Bigr) \\
  &= -c''(d)\,I(U(d)) + c'(d)^2\,\frac{(q-\mu)^2}{c(d)^3}\,\pi(U(d)).
\end{align*}
Both terms are nonnegative. To see this, $c''(d)\ge0$, and $I(\tau)\le0$ for $\tau\le\mu$. For $\tau\ge\mu$, using $\E[\theta\!-\!\mu]=0$ gives
\begin{equation*}
    I(\tau)
    =-\int_\tau^\infty(\theta-\mu)\pi(\theta)\,d\theta
    \le0.
\end{equation*}
Thus $\ell''(d)\ge0$ whenever $U(d)\in\cJ$. On any interval where $U(d)$ is outside $\cJ$, $\ell$ is constant. Since $\ell'(d)=-c'(d)I(U(d))$ is continuous also when $U(d)$ crosses an endpoint of $\cJ$, $\ell'$ is nondecreasing on $[0,\infty)$. Therefore $\ell$ is convex, so $D_{i,t}(d)$ is nonincreasing in $d$.
\end{proof}

\begin{proof}{Proof of \cref{prop:greedy-conditions}(iii)}Fix an item $i$ and a round $t$. By
conjugacy, the posterior density of $\eta_i$ given $\mathcal{H}_t$ is proportional to $\exp\{a\eta\!-\!bA_i(\eta)\},$ where
\begin{equation*}
a
=
a_i+\sum_{n=1}^{N_i(t)}Y_{i,n},
\qquad
b
=
b_i+N_i(t).
\end{equation*}
The standard moment identity for this natural exponential family conjugate posterior gives $\mu_i(t)=\frac{a}{b}$ \citep{diaconisYlvisaker1979}. Additionally,
\begin{equation*}
0
=
\int_{\mathcal E_i}
\frac{d}{d\eta}
\FitDelims{[}{]}{
\FitParens{A_i'(\eta)-\frac{a}{b}}
\exp\{a\eta-bA_i(\eta)\}
}d\eta,
\end{equation*}
which, after expanding the derivative and normalizing, gives
\begin{equation*}
\mathbb E_t[A_i''(\eta_i)]
=
b\,\mathbb E_t\FitDelims{[}{]}{
\FitParens{A_i'(\eta_i)-\frac{a}{b}}^2
}
=
b\,v_i(t).
\end{equation*}
Fix $d\geq 1$, and let $S_d = \sum_{n=1}^{d}Y_{i,N_i(t)+n}$ denote the sum of the $d$ additional observations. Conditional on $\eta_i$, $\mathbb E_t[S_d\mid\eta_i] = dA_i'(\eta_i) = d\theta_i$ and $\mathbb V_t(S_d\mid\eta_i) = dA_i''(\eta_i).$ Therefore, by the law of total variance,
\begin{equation*}
\mathbb V_t(S_d)
= \mathbb E_t\FitDelims{[}{]}{
\mathbb V_t(S_d\mid\eta_i)
}
+
\mathbb V_t\FitParens{
\mathbb E_t[S_d\mid\eta_i]
}
=db\,v_i(t)+d^2v_i(t)=
d(b+d)v_i(t).
\end{equation*}
After observing these additional samples, the posterior density of $\eta_i$ is proportional to $\exp\FitDelims{\{}{\}}{ (a\!+\!S_d)\eta\!-\!(b\!+\!d)A_i(\eta) }.$ Applying the same conjugate-posterior mean identity gives
\begin{equation*}
\mathbb E\FitDelims{[}{]}{
\theta_i\mid \cH_t,S_d
}
=
\frac{a+S_d}{b+d}.
\end{equation*}
Hence,
\begin{equation*}
\mathbb V_t\FitParens{
\mathbb E\FitDelims{[}{]}{
\theta_i\mid \cH_t,S_d
}
}
=
\frac{\mathbb V_t(S_d)}{(b+d)^2}
=
\frac{d}{b+d}v_i(t).
\end{equation*}
Finally, the law of total variance gives
\begin{equation*}
v_i(t)
=
\bar v_i(t,d)
+
\mathbb V_t\FitParens{
\mathbb E\FitDelims{[}{]}{
\theta_i\mid \cH_t,S_d
}
}.
\end{equation*}
Therefore
\begin{equation*}
c_i(t,d)^2
=
\frac{\bar v_i(t,d)}{v_i(t)}
=
\frac{b}{b+d}
=
\frac{1}{1+\frac{1}{b}d}.
\end{equation*}
Since $b=b_i\!+\!N_i(t)>0$, \eqref{eq:contraction-form} holds with $\lambda_i(t) = \frac{1}{b_i+N_i(t)}.$
\end{proof}

\section{Closed forms for the conjugate families of \cref{prop:families}}\label{app:families}

In each subsection, we first define the round-$t$ posterior parameters and then suppress their argument $t$ in the remaining formulas.

\subsection{Gaussian observations with known variance}\label{subsec:gaussian}

Suppose $Y_{i,n}\mid\theta_i\sim N(\theta_i,\sigma_i^2)$ with $\sigma_i^2$ known, and prior $\theta_i\sim N(m_i(0),s_i^2(0))$. At the start of round $t$,
\begin{equation*}
    \theta_i\mid\cH_t\sim N(m_i(t),\,s_i^2(t)),
    \qquad
    s_i^2(t) = \FitParens{\frac{1}{s_i^2(0)} + \frac{N_i(t)}{\sigma_i^2}}^{-1},
    \quad
    m_i(t) = s_i^2(t)\!\FitParens{\frac{m_i(0)}{s_i^2(0)} + \frac{1}{\sigma_i^2}\sum_{n=1}^{N_i(t)} Y_{i,n}}.
\end{equation*}
Then $\mu_i(t) = m_i$, $v_i(t) = s_i^2$, $F_{i,t}(q) = \Phi\!\FitParens{\frac{q - m_i}{s_i}}$ and $q^*(t)$ solves
\begin{equation*}
    \sum_{i=1}^I \FitDelims{[}{]}{1 - \Phi\!\FitParens{\frac{q - m_i}{s_i}}} = k.
\end{equation*}
We next compute the expected post-round-$t$ posterior variance. After $\Delta_i$ further observations, the posterior variance of $\theta_i$ is deterministic: it depends on $\Delta_i$ but not on the realized values. This deterministic update is a special property of the Gaussian-Gaussian pair that does not hold for the other families. Hence
\begin{equation*}
    \bar v_i(t,\Delta_i) = \FitParens{\frac{1}{s_i^2} + \frac{\Delta_i}{\sigma_i^2}}^{-1},
    \qquad
    c_i(t,\Delta_i)^2 = \frac{1}{1 + s_i^2 \Delta_i/\sigma_i^2}.
\end{equation*}
This has the form \eqref{eq:contraction-form}, so GBB is exact.

Finally, for any $z \in \R$,
\begin{equation*}
    \E_t[(z - \theta_i)^+] = (z - m_i)\,\Phi\!\FitParens{\frac{z - m_i}{s_i}} + s_i\,\phi\!\FitParens{\frac{z - m_i}{s_i}}.
\end{equation*}
Combined with $c_i(t,\Delta_i)$ and \eqref{eq:ell-general}, this yields the closed form for $\ell_{i,t}(\Delta_i)$.

\subsection{Binomial with beta posterior}\label{subsec:beta}

Suppose $Y_{i,n}\mid p_i\sim\mathrm{Bin}(r,p_i)$ with $r\in\mathbb Z_+$ known, mean parameter $\theta_i = rp_i$, and prior $p_i\sim\mathrm{Beta}(a_i(0),b_i(0))$. Let $S_i(t) := \sum_{n=1}^{N_i(t)} Y_{i,n}$. At the start of round $t$,
\begin{equation*}
    p_i\mid\cH_t\sim\mathrm{Beta}(a_i(t),\,b_i(t)),
    \qquad
    a_i(t) := a_i(0) + S_i(t),
    \quad
    b_i(t) := b_i(0) + rN_i(t) - S_i(t).
\end{equation*}
Set $n_i := a_i\!+\!b_i$. Then
\begin{equation*}
    \mu_i(t) = r\,\frac{a_i}{n_i},
    \qquad
    v_i(t) = r^2\,\frac{a_i b_i}{n_i^2(n_i + 1)},
    \qquad
    F_{i,t}(q) = I_{q/r}(a_i, b_i)
    \quad(0 < q < r),
\end{equation*}
and $q^*(t)$ solves
\begin{equation*}
    \sum_{i=1}^I \FitDelims{[}{]}{1 - I_{q/r}(a_i, b_i)} = k.
\end{equation*}
We next compute the expected post-round-$t$ posterior variance. Allocating $\Delta_i$ further binomial observations is equivalent to $r\Delta_i$ further Bernoulli trials. Conditional on the future number of successes $X$,
\begin{equation*}
    p_i\mid\cH_t,X\sim\mathrm{Beta}(a_i + X, b_i + r\Delta_i - X),
\end{equation*}
so taking expectation of $\Var(\theta_i\mid\cH_t, X)$ over $X$ gives
\begin{equation*}
    \bar v_i(t,\Delta_i) = r^2\,\frac{a_i b_i}{n_i(n_i + 1)(n_i + r\Delta_i)} = v_i(t)\,\frac{n_i}{n_i + r\Delta_i},
    \qquad
    c_i(t,\Delta_i)^2 = \frac{n_i}{n_i + r\Delta_i} = \frac{1}{1 + (r/n_i)\Delta_i}.
\end{equation*}
This has the form \eqref{eq:contraction-form}, so GBB is exact.

Finally, for $0 < z < r$,
\begin{equation*}
    \E_t[(z - \theta_i)^+] = z\,I_{z/r}(a_i, b_i) - r\,\frac{a_i}{n_i}\,I_{z/r}(a_i + 1, b_i),
\end{equation*}
extending to $z \le 0$ and $z \ge r$ with values $0$ and $z\!-\!\mu_i(t)$ respectively. Combined with $c_i(t,\Delta_i)$ and \eqref{eq:ell-general}, this yields the closed form for $\ell_{i,t}(\Delta_i)$.

\subsection{Poisson with gamma posterior}\label{subsec:poisson}

Suppose $Y_{i,n}\mid\theta_i\sim\mathrm{Pois}(\theta_i)$ with prior $\theta_i\sim\mathrm{Gamma}(a_i(0),b_i(0))$, where $b_i(0)$ is the rate. After $N_i(t)$ observations,
\begin{equation*}
    \theta_i\mid\cH_t\sim\mathrm{Gamma}(a_i(t),\,b_i(t)),
    \qquad
    a_i(t) := a_i(0) + \sum_{n=1}^{N_i(t)} Y_{i,n},
    \quad
    b_i(t) := b_i(0) + N_i(t).
\end{equation*}
The posterior density is $(b_i^{a_i}/\Gamma(a_i))\,\theta^{a_i-1} e^{-b_i\theta}$ on $\theta > 0$. Then
\begin{equation*}
    \mu_i(t) = \frac{a_i}{b_i},
    \qquad
    v_i(t) = \frac{a_i}{b_i^2},
    \qquad
    F_{i,t}(q) = \gamma(a_i, b_i q)
    \quad(q > 0),
\end{equation*}
and $q^*(t)$ solves
\begin{equation*}
    \sum_{i=1}^I \Gamma(a_i, b_i q) = k.
\end{equation*}
We next compute the expected post-round-$t$ posterior variance. The future total $X := \sum_{\ell=1}^{\Delta_i} Y_{i,\ell}^{\mathrm{new}}$ satisfies $X\mid\theta_i\sim\mathrm{Pois}(\Delta_i\,\theta_i)$, so
\begin{equation*}
    \theta_i\mid\cH_t,\,X\sim\mathrm{Gamma}(a_i + X, b_i + \Delta_i),
    \qquad
    \Var(\theta_i\mid\cH_t,\,X) = \frac{a_i + X}{(b_i + \Delta_i)^2}.
\end{equation*}
Using $\E_t[X] = \Delta_i\,a_i/b_i$,
\begin{equation*}
    \bar v_i(t,\Delta_i) = \frac{a_i}{b_i(b_i + \Delta_i)} = v_i(t)\,\frac{b_i}{b_i + \Delta_i},
    \qquad
    c_i(t,\Delta_i)^2 = \frac{b_i}{b_i + \Delta_i} = \frac{1}{1 + \Delta_i/b_i}.
\end{equation*}
This has the form \eqref{eq:contraction-form}, so GBB is exact.

Finally, for $z > 0$,
\begin{equation*}
    \E_t[(z - \theta_i)^+] = z\,\gamma(a_i, b_i z) - \frac{a_i}{b_i}\,\gamma(a_i + 1, b_i z).
\end{equation*}
Combined with $c_i(t,\Delta_i)$ and \eqref{eq:ell-general}, this gives the closed form for $\ell_{i,t}(\Delta_i)$.

\subsection{Gamma observations with known shape and inverse-gamma posterior}\label{subsec:gamma-known-shape}

Suppose $Y_{i,n}\mid\theta_i\sim\mathrm{Gamma}(\kappa_i,\,\theta_i/\kappa_i)$ (shape $\kappa_i$, scale $\theta_i/\kappa_i$) with $\kappa_i > 0$ known (the case $\kappa_i = 1$ is the exponential model $Y_{i,n}\mid\theta_i\sim\mathrm{Exp}(1/\theta_i)$). The conjugate prior is $\theta_i\sim\mathrm{InvGamma}(a_i(0),b_i(0))$ with $a_i(0) > 2$ and $b_i(0)>0$, ensuring $v_i(t)$ is finite at every round. After $N_i(t)$ observations,
\begin{equation*}
    \theta_i\mid\cH_t\sim\mathrm{InvGamma}(a_i(t),\,b_i(t)),
    \qquad
    a_i(t) := a_i(0) + \kappa_i N_i(t),
    \quad
    b_i(t) := b_i(0) + \kappa_i\sum_{n=1}^{N_i(t)} Y_{i,n}.
\end{equation*}
The posterior density is $(b_i^{a_i}/\Gamma(a_i))\,\theta^{-a_i-1} e^{-b_i/\theta}$ on $\theta > 0$. Then
\begin{equation*}
    \mu_i(t) = \frac{b_i}{a_i - 1},
    \qquad
    v_i(t) = \frac{b_i^2}{(a_i - 1)^2(a_i - 2)},
    \qquad
    F_{i,t}(q) = \Gamma(a_i,\, b_i/q)
    \quad(q > 0),
\end{equation*}
and $q^*(t)$ solves
\begin{equation*}
    \sum_{i=1}^I \gamma(a_i,\, b_i/q) = k.
\end{equation*}
We next compute the expected post-round-$t$ posterior variance. Let $W := \kappa_i\sum_{\ell=1}^{\Delta_i} Y_{i,\ell}^{\mathrm{new}}$. Each gamma observation contributes likelihood proportional to $\theta^{-\kappa_i} e^{-\kappa_i y/\theta}$, so
\begin{equation*}
    \theta_i\mid\cH_t,\,W\sim\mathrm{InvGamma}(a_i + \kappa_i \Delta_i,\,b_i + W),
    \qquad
    \Var(\theta_i\mid\cH_t,\,W) = \frac{(b_i + W)^2}{(a_i + \kappa_i \Delta_i - 1)^2(a_i + \kappa_i \Delta_i - 2)}.
\end{equation*}
Conditional on $\theta_i$, $W\sim\mathrm{Gamma}(\kappa_i \Delta_i, \theta_i)$ (shape $\kappa_i \Delta_i$, scale $\theta_i$), so $\E_t[W] = \kappa_i \Delta_i\,b_i/(a_i\!-\!1)$ and $\E_t[W^2] = \kappa_i \Delta_i(\kappa_i \Delta_i\!+\!1)\,b_i^2/[(a_i\!-\!1)(a_i\!-\!2)]$, giving
\begin{equation*}
    \E_t[(b_i + W)^2] = \frac{b_i^2(a_i + \kappa_i \Delta_i - 1)(a_i + \kappa_i \Delta_i - 2)}{(a_i - 1)(a_i - 2)}
\end{equation*}
and therefore
\begin{equation*}
\begin{aligned}
    \bar v_i(t,\Delta_i)
    &= \frac{b_i^2}{(a_i - 1)(a_i - 2)(a_i + \kappa_i \Delta_i - 1)}
    = v_i(t)\,\frac{a_i - 1}{a_i + \kappa_i \Delta_i - 1}, \\
    c_i(t,\Delta_i)^2
    &= \frac{a_i - 1}{a_i + \kappa_i \Delta_i - 1}
    = \frac{1}{1 + (\kappa_i/(a_i - 1))\Delta_i}.
\end{aligned}
\end{equation*}
This has the form \eqref{eq:contraction-form}, so GBB is exact.

Finally, for $z > 0$,
\begin{equation*}
    \E_t[(z - \theta_i)^+] = z\,\Gamma(a_i, b_i/z) - \frac{b_i}{a_i - 1}\,\Gamma(a_i - 1, b_i/z).
\end{equation*}
Combined with $c_i(t,\Delta_i)$ and \eqref{eq:ell-general}, this gives the closed form for $\ell_{i,t}(\Delta_i)$.

\subsection{Lognormal observations with known log-variance}\label{subsec:lognormal}

Suppose $Y_{i,n}\mid\theta_i\sim\mathrm{Lognormal}(\log\theta_i\!-\!\sigma_i^2/2,\,\sigma_i^2)$ with $\sigma_i^2$ known, parameterized so $\E[Y_{i,n}\mid\theta_i] = \theta_i$. Let $\eta_i := \log\theta_i$ and $X_{i,n} := \log Y_{i,n}\!+\!\sigma_i^2/2$, so $X_{i,n} = \eta_i\!+\!\varepsilon_{i,n}$ with $\varepsilon_{i,n}\sim N(0,\sigma_i^2)$. With prior $\eta_i\sim N(m_i(0),s_i^2(0))$, the transformed problem in $\eta_i$ is Gaussian-Gaussian conjugate:
\begin{equation*}
    \eta_i\mid\cH_t\sim N(m_i(t),\,s_i^2(t)),
    \qquad
    s_i^2(t) = \FitParens{\frac{1}{s_i^2(0)} + \frac{N_i(t)}{\sigma_i^2}}^{-1},
    \quad
    m_i(t) = s_i^2(t)\!\FitParens{\frac{m_i(0)}{s_i^2(0)} + \frac{1}{\sigma_i^2}\sum_{n=1}^{N_i(t)} X_{i,n}}.
\end{equation*}
Hence $\theta_i = e^{\eta_i}$ has lognormal posterior with
\begin{equation*}
    \mu_i(t) = \exp\!\FitParens{m_i + \frac{s_i^2}{2}},
    \qquad
    v_i(t) = (e^{s_i^2} - 1)\,e^{2m_i + s_i^2},
    \qquad
    F_{i,t}(q) = \Phi\!\FitParens{\frac{\log q - m_i}{s_i}}
    \quad(q > 0),
\end{equation*}
and $q^*(t)$ solves
\begin{equation*}
    \sum_{i=1}^I \FitDelims{[}{]}{1 - \Phi\!\FitParens{\frac{\log q - m_i}{s_i}}} = k.
\end{equation*}
We next compute the expected post-round-$t$ posterior variance of $\theta_i = e^{\eta_i}$, rather than of $\eta_i$. After $\Delta_i$ further samples, $\eta_i$ has Gaussian posterior with deterministic variance
\begin{equation*}
    s_i^2(\Delta_i) := \FitParens{\frac{1}{s_i^2} + \frac{\Delta_i}{\sigma_i^2}}^{-1}
\end{equation*}
and random mean $M_i(\Delta_i)$. The Gaussian update and the law of total variance give
\begin{equation*}
    M_i(\Delta_i)\mid\cH_t\sim N(m_i,\,s_i^2 - s_i^2(\Delta_i)).
\end{equation*}
Conditional on $M_i(\Delta_i)$, $\theta_i$ is lognormal with parameters $(M_i(\Delta_i), s_i^2(\Delta_i))$, so
\begin{equation*}
    \Var(\theta_i\mid M_i(\Delta_i)) = (e^{s_i^2(\Delta_i)} - 1)\,e^{2M_i(\Delta_i) + s_i^2(\Delta_i)}.
\end{equation*}
The Gaussian MGF gives $\E_t[e^{2M_i(\Delta_i)}] = \exp(2m_i\!+\!2(s_i^2\!-\!s_i^2(\Delta_i)))$, so
\begin{equation*}
    \bar v_i(t,\Delta_i) = (e^{s_i^2(\Delta_i)} - 1)\,e^{2m_i + 2s_i^2 - s_i^2(\Delta_i)},
    \qquad
    c_i(t,\Delta_i)^2 = \frac{1 - e^{-s_i^2(\Delta_i)}}{1 - e^{-s_i^2}}.
\end{equation*}
This does not have the form \eqref{eq:contraction-form} in general, so \cref{prop:greedy-conditions}(ii) does not apply.

Finally, for $z > 0$,
\begin{equation*}
    \E_t[(z - \theta_i)^+] = z\,\Phi\!\FitParens{\frac{\log z - m_i}{s_i}} - \mu_i(t)\,\Phi\!\FitParens{\frac{\log z - m_i - s_i^2}{s_i}}.
\end{equation*}
Combined with $c_i(t,\Delta_i)$ and \eqref{eq:ell-general}, this yields the closed form for $\ell_{i,t}(\Delta_i)$.

\section{Proofs for \cref{sec:asymptotic}}\label{app:asymptotic-proofs}

\subsection{Proof of consistency}
To prove \cref{prop:eoc-bpcs-consistency} we first prove three useful lemmas.

\begin{lemma}[Conditional posterior consistency]\label{lem:doob-consistency}
Under \cref{ass:model}, there exists a set $\Theta^* \subseteq \cJ^I$ with prior probability one such that, for every $\boldsymbol\theta^0 \in \Theta^*$, almost surely under $\Pp_{\boldsymbol\theta^0}$: if $N_i(t) \to \infty$ as $t \to \infty$, then $\mu_i(t) \to \theta_i^0$ and $v_i(t) \to 0$.
\end{lemma}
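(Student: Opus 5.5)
The plan is to obtain this from Doob's posterior consistency theorem, applied item by item, with a martingale convergence argument handling the adaptivity of the sampling rule. Because priors and observations are independent across items (\cref{ass:model}), it is enough to work one item at a time. For item $i$, build on one probability space the pair $(\theta_i, (Y_{i,n})_{n\ge1})$ with $\theta_i\sim\Pi_{i,0}$ and $Y_{i,n}$ i.i.d.\ from $P_i(\cdot\mid\theta_i)$ given $\theta_i$. Let $\mathcal G_{i,n}:=\sigma(Y_{i,1},\ldots,Y_{i,n})$. The adaptive rule only decides how many of these observations are revealed by round $t$: the posterior of $\theta_i$ given $\cH_t$ equals the posterior given $\mathcal G_{i,N_i(t)}$. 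This uses the product structure of the joint law and the fact that $N_i(t)$ depends on other items' data and on item $i$'s first observations only through $\cH_t$; the sampling decisions cancel from the likelihood. Hence $\mu_i(t)=M_{i,N_i(t)}$ and $v_i(t)=V_{i,N_i(t)}$, where $M_{i,n}:=\E[\theta_i\mid\mathcal G_{i,n}]$ and $V_{i,n}:=\Var(\theta_i\mid\mathcal G_{i,n})$.

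The core step concerns the fixed-$n$ sequences. First, $M_{i,n}$ is an $L^2$-bounded martingale, since $\theta_i$ has finite prior second moment by \cref{ass:model}. By L\'evy's upward theorem, $M_{i,n}\to\E[\theta_i\mid\mathcal G_{i,\infty}]$ almost surely and in $L^2$. Second, $\theta_i$ is $\mathcal G_{i,\infty}$-measurable, because the strong law gives $n^{-1}\sum_{m\le n}Y_{i,m}\to\theta_i$ almost surely; the observation means are finite since the predictive second moment is finite. So $M_{i,n}\to\theta_i$ almost surely. For the variance, write $V_{i,n}=\E[\theta_i^2\mid\mathcal G_{i,n}]-M_{i,n}^2$. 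The first term is also a uniformly integrable martingale converging to $\theta_i^2$, so $V_{i,n}\to0$ almost surely under the joint measure.

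Finally, translate from the joint measure to $\Pp_{\boldsymbol\theta^0}$ by Fubini. The event $\{M_{i,n}\to\theta_i,\ V_{i,n}\to0\}$ has joint probability one. Its $\theta_i$-section therefore has $P_i(\cdot\mid\theta_i^0)^{\otimes\infty}$-probability one for $\Pi_{i,0}$-almost every $\theta_i^0$. Call this set $\Theta_i$ and take $\Theta^*:=\prod_i\Theta_i$, which has product prior probability one. For $\boldsymbol\theta^0\in\Theta^*$, on the event $N_i(t)\to\infty$ we have $\mu_i(t)=M_{i,N_i(t)}\to\theta_i^0$ and $v_i(t)\to0$, since $N_i(t)$ is a nondecreasing subsequence index tending to infinity.

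The main obstacle is the identification step, namely that the adaptively stopped posterior coincides with the fixed-$n$ posterior evaluated at the random index $N_i(t)$. This requires writing the joint likelihood of $\cH_t$ under the rule and checking that it factorizes so that item $i$'s part depends only on its first $N_i(t)$ observations. A second subtlety is that the null set must be independent of the sampling rule; working with the full i.i.d.\ ``tape'' $(Y_{i,n})_{n\ge1}$ gives exactly that.
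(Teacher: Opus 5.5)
Your proposal is correct and takes essentially the paper's route. The paper applies Doob's posterior consistency theorem item by item to $g(\theta_i)=\theta_i$ and $g(\theta_i)=\theta_i^2$, obtains $v_i(t)=\E_t[\theta_i^2]-\mu_i(t)^2\to0$, and takes $\Theta^*$ to be a product of prior-full-measure sets. The only difference is that you prove the Doob step inline (L\'evy's upward theorem, the strong law in place of the identifiability hypothesis, and Fubini to pass to $\Pp_{\boldsymbol\theta^0}$) and make explicit the identification $\mu_i(t)=M_{i,N_i(t)}$ under adaptive sampling, which the paper leaves implicit when it cites Doob's theorem.
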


\begin{proof}{Proof}
For each $i$, \cref{ass:model} provides the conditions for Doob's theorem \citep{Doob1949,Miller2018}: the mean-parametrization $\E_\theta[Y_{i,n}] = \theta$ makes the family $\{P_i(\cdot \mid \theta) : \theta \in \cJ\}$ identifiable, and $\E_{\Pi_{i,0}}[\theta_i^2] < \infty$ allows the choice $g(\theta_i) = \theta_i^2$. Doob's theorem applied to $g(\theta_i) = \theta_i$ and $g(\theta_i) = \theta_i^2$ gives a set $J_i^* \subseteq \cJ$ with $\Pi_{i,0}(J_i^*) = 1$ such that, for every $\theta_i^0 \in J_i^*$, $\Pp_{\theta_i^0}$-almost surely: if $N_i(t) \to \infty$, then $\mu_i(t) \to \theta_i^0$ and $\E_t[\theta_i^2] \to (\theta_i^0)^2$. Thus, $v_i(t) = \E_t[\theta_i^2]\!-\!\mu_i(t)^2 \to 0$. Define $\Theta^* := J_1^* \times \cdots \times J_I^* \subseteq \cJ^I.$ Since the priors are independent across items, $\Theta^*$ has prior probability $\prod_{i=1}^I \Pi_{i,0}(J_i^*) = 1$.
\end{proof}

\begin{lemma}[Uniform improvement from one observation]\label{lem:uniform-improvement}
Let $\Pi$ be a distribution on $\cJ$ with continuous positive density, mean $\mu := \E_\Pi[\theta]$, and variance $v := \Var_\Pi(\theta) > 0$. Let $\theta \sim \Pi$ and $Y \mid \theta \sim P(\cdot \mid \theta)$ with $\E[Y \mid \theta] = \theta$ and finite predictive second moment $\E[Y^2] < \infty$. Define $\bar v := \E[\Var(\theta \mid Y)]$, $c := \sqrt{\bar v / v}$, and $X_c := \mu\!+\!c(\theta\!-\!\mu)$. Then $c < 1$, and for every compact $K \subset \cJ$ there exists $\eta > 0$ such that
\begin{equation*}
    \E[(q - \theta)^+] - \E[(q - X_c)^+] \ge \eta
    \qquad \text{for every $q \in K$.}
\end{equation*}
\end{lemma}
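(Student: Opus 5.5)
The proof has two parts: first show $c<1$, then bound the gap $\E[(q-\theta)^+]-\E[(q-X_c)^+]$ below uniformly on $K$. Write $h(q,c):=\E[(q-\mu-c(\theta-\mu))^+]$, so the gap is $g(q):=h(q,1)-h(q,c)$.

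\textbf{Step 1: $c<1$.} By the law of total variance,
\begin{equation*}
v=\bar v+\Var(\E[\theta\mid Y]),
\end{equation*}
so it suffices to show that $\E[\theta\mid Y]$ is not almost surely constant. Suppose instead that $\E[\theta\mid Y]=\mu$ a.s. Then
\begin{equation*}
\Cov(\theta,Y)=\Cov(\E[\theta\mid Y],Y)=0 .
\end{equation*}
But $\E[Y\mid\theta]=\theta$ gives $\Cov(\theta,Y)=\Var(\theta)=v>0$, a contradiction. Finiteness of $\E[Y^2]$ and $v<\infty$ justify these covariance manipulations. Hence $\bar v<v$ and $c<1$.

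\textbf{Step 2: strict improvement at each $q$.} Fix $q\in\cJ$. By the computation in the proof of \cref{prop:greedy-conditions}(ii), with $U(c)=\mu+(q-\mu)/c$,
\begin{equation*}
\partial_c h(q,c)=\int_{-\infty}^{U(c)}(\theta-\mu)\pi(\theta)\,d\theta=I(U(c))\le 0 .
\end{equation*}
Here $I(\tau)<0$ strictly whenever $\tau$ lies in the interior of $\cJ$, because the density is positive on $\cJ$ and the mean-zero identity gives strictness on both sides of $\mu$. In particular $I(U(1))=I(q)<0$ for $q\in\cJ$. By continuity of $c'\mapsto I(U(c'))$, $U(c')$ stays in $\cJ$, and hence the derivative stays strictly negative, on a neighborhood of $c'=1$. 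Therefore
\begin{equation*}
g(q)=\int_c^1\bigl(-I(U(c'))\bigr)\,dc'>0 .
\end{equation*}
This uses that $h(q,\cdot)$ is nonincreasing on the whole of $[c,1]$ and strictly decreasing near $1$. As a cross-check on strictness, one can also note that $(q-\mu-c(\theta-\mu))^+$ is convex in $c$ with the right sign structure.

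\textbf{Step 3: uniformity over compact $K$.} The map $q\mapsto g(q)$ is continuous, since both terms are $1$-Lipschitz in $q$. A continuous, strictly positive function on a compact set attains a positive minimum, and we take $\eta:=\min_{q\in K}g(q)>0$.

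\textbf{Main obstacle.} The delicate point is the strictness in Step 2, specifically showing $I(\tau)<0$ for every interior $\tau$. For $\tau\le\mu$ this holds because the integrand $(\theta-\mu)$ is negative on a set of positive mass. For $\tau>\mu$ we use
\begin{equation*}
I(\tau)=-\int_\tau^\infty(\theta-\mu)\pi(\theta)\,d\theta ,
\end{equation*}
which is strictly negative as long as $\tau<\sup\cJ$, again by positivity of the density. Since $U(1)=q\in K\subset\cJ$, the neighborhood argument goes through.
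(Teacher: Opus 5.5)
Your Steps 1 and 3 are the paper's argument: the law of total variance with $\Cov(\E[\theta\mid Y],Y)=v>0$, then the gap is $2$-Lipschitz in $q$ and attains a positive minimum on $K$. Step 2 takes a genuinely different route. The paper does not differentiate in $c$. It writes $X_c=c\theta+(1-c)\mu$, so convexity of $\phi_q(x)=(q-x)^+$ gives $\E[\phi_q(X_c)]\le c\,\E[\phi_q(\theta)]+(1-c)\,\phi_q(\mu)$. Strict Jensen, $\phi_q(\mu)<\E[\phi_q(\theta)]$ (since $\theta$ has positive mass on both sides of $q\in\cJ$), combined with $1-c>0$, then finishes. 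This is the ``cross-check'' you mention in passing, and it is the more economical proof: it needs no differentiation under the integral, no sign analysis of $I(\tau)$, and no special care if $c=0$. Your route yields an explicit formula for the gap, $g(q)=\int_c^1\bigl(-I(U(c'))\bigr)\,dc'$. That is more than the lemma needs, but it is correct.

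As written, though, your Step 2 contains a sign error that makes it self-contradictory. Differentiating $h(q,c)=\E[(q-\mu-c(\theta-\mu))^+]$ gives $\partial_c h(q,c)=-\E[(\theta-\mu)\mathbf 1\{\theta<U(c)\}]=-I(U(c))\ge0$, not $I(U(c))\le0$. In the proof of \cref{prop:greedy-conditions}(ii), $\ell'(d)=-c'(d)I(U(d))$, so the derivative with respect to $c$ is $-I(U)$. Hence $h(q,\cdot)$ is nondecreasing on $(0,1]$ and strictly increasing near $c'=1$. Your sentence that it is ``nonincreasing \dots\ and strictly decreasing near $1$'' would give $h(q,1)\le h(q,c)$, i.e.\ $g(q)\le0$, the opposite of the claim. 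Your displayed formula for $g(q)$ already has the correct sign, so correcting $\partial_c h$ and the monotonicity sentence repairs the step.

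One minor point remains. $U(c')=\mu+(q-\mu)/c'$ needs $c'>0$, and the lemma's hypotheses do not exclude $\bar v=0$. In that case, use that $h(q,\cdot)$ is continuous on $[0,1]$ (it is $\E|\theta-\mu|$-Lipschitz in $c$) to extend the fundamental theorem of calculus down to $c=0$.
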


\begin{proof}{Proof}
We first show $c < 1$. By the law of total variance,
\begin{equation*}
    v = \E[\Var(\theta \mid Y)] + \Var(\E[\theta \mid Y]) = \bar v + \Var(\E[\theta \mid Y]).
\end{equation*}
$\Var(\E[\theta \mid Y]) > 0$ since
\begin{equation*}
    \Cov(\E[\theta \mid Y], Y)= \E[\E[\theta \mid Y] \cdot Y] - \E[\E[\theta \mid Y]]\E[Y]  = \E[\theta Y] - \mu^2 = v,
\end{equation*}
so $\bar v < v$ and $c = \sqrt{\bar v / v} < 1$.

Next we show the inequality for a fixed $q$. Fix $q \in \cJ$ and let $\phi_q(x) := (q\!-\!x)^+$. Since $X_c = c\theta\!+\!(1\!-\!c)\mu$ is a convex combination of $\theta$ and $\mu$, convexity of $\phi_q$ gives,
\begin{equation*}
    \phi_q(X_c) \le c\,\phi_q(\theta) + (1-c)\,\phi_q(\mu).
\end{equation*}
Taking expectations,
\begin{equation*}
    \E[\phi_q(X_c)] \le c\,\E[\phi_q(\theta)] + (1-c)\,\phi_q(\mu) < \E[\phi_q(\theta)].
\end{equation*}
The second inequality follows from Jensen's inequality, and is strict since $\Pi$ has continuous positive density on $\cJ$ and $\cJ$ is open so $\Pp(\theta < q) > 0$ and $\Pp(\theta > q) > 0$. Define
\begin{equation*}
    \delta(q) := \E[(q - \theta)^+] - \E[(q - X_c)^+] > 0.
\end{equation*}
For any $q' \in \R$, $|(q'\!-\!x)^+\!-\!(q\!-\!x)^+| \le |q'\!-\!q|$ so $\delta(q)$ is 2-Lipschitz and thus is continuous in $q$. Thus for compact $K$, $\eta := \inf_{q \in K} \delta(q) > 0$.
\end{proof}

\begin{lemma}[Every item is sampled infinitely often]\label{lem:no-starvation} Under the hypotheses of \cref{prop:eoc-bpcs-consistency}, for every $\boldsymbol\theta^0\in\Theta^*$,
\begin{equation*}
    N_i(t)\to\infty
    \qquad\text{for every $i=1,\ldots,I$}
\end{equation*}
$\Pp_{\boldsymbol\theta^0}$-almost surely.
\end{lemma}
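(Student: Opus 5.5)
We argue by contradiction. Fix $\boldsymbol\theta^0\in\Theta^*$ and a sample path on which some item's count stays bounded. Let $A\subseteq\{1,\ldots,I\}$ be the set of items with $N_i(t)\to\infty$ and $A^c\neq\emptyset$ its complement. Since counts are nondecreasing, every $i\in A^c$ is sampled only finitely often. Hence there is a finite round $t_0$ after which no item in $A^c$ receives samples, and its posterior $\Pi_{i,t}=\Pi_{i,t_0}$ is frozen. Because $B_t\ge1$ and the samples go somewhere, $A\neq\emptyset$. By \cref{lem:doob-consistency}, for $j\in A$ we have $\mu_j(t)\to\theta_j^0$ and $v_j(t)\to0$. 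We will show that the marginal gain from sampling a frozen item stays bounded below, while the marginal gain from sampling any item in $A$ tends to zero. The greedy or DP batch builder would then eventually allocate to some frozen item, which is the contradiction.

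\textbf{Step 1: the gains of items in $A$ vanish.} For $j\in A$ we use $0\le\ell_{j,t}(d)\le\E_t[(q(t)-\theta_j)^+]$. We also note that $\ell_{j,t}(d)\ge(q(t)-\mu_j(t))^+$ by Jensen, since $\widetilde\theta_j$ has mean $\mu_j(t)$. Together these give
\[
0\le \ell_{j,t}(0)-\ell_{j,t}(d)\le \E_t[(q(t)-\theta_j)^+]-(q(t)-\mu_j(t))^+\le \E_t|\theta_j-\mu_j(t)|\le\sqrt{v_j(t)}\to0,
\]
uniformly in $d\le B_t$ and in $q(t)$.

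\textbf{Step 2: the threshold stays in a compact set.} We need $q(t)$ to lie in a compact $K\subset\cJ$ for all large $t$. For $q^*(t)$ this follows from \eqref{eq:q-star}. The frozen items contribute fixed continuous CDFs, and the items in $A$ have posteriors concentrating at the points $\theta_j^0$. If $q^*(t)$ approached an endpoint of $\cJ$, then $\sum_i\{1-F_{i,t}(q^*(t))\}$ would tend to $I$ or to $0$ (frozen items) or to the count of $\theta^0_j$ above the endpoint (items in $A$). Either way it would tend to a value incompatible with $k\in(0,I)$, after using that frozen items have full support. For $q^\dagger(t)$, we use that $q^\dagger(t)\in[\mu_{[k+1]}(t),\mu_{[k]}(t)]$. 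Posterior means of items in $A$ converge, and those of frozen items are constant, all lying in $\cJ$. So $q^\dagger(t)$ stays in the compact hull of finitely many limit points inside $\cJ$.

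\textbf{Step 3: the gains of frozen items stay bounded below.} For $i\in A^c$ the posterior is the fixed $\Pi_{i,t_0}$, which has positive continuous density and positive variance, and $c_i(t,1)=c<1$ is fixed. \cref{lem:uniform-improvement} gives $\eta_i>0$ with $\ell_{i,t}(0)-\ell_{i,t}(1)\ge\eta_i$ for all $q(t)\in K$. Here we use that the surrogate depends on $q$ only through $(q-\cdot)^+$, as in \eqref{eq:ell-general}.

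\textbf{Step 4: the contradiction.} Take $t$ large enough that $\sqrt{v_j(t)}<\min_i\eta_i$ for all $j\in A$. Let $\boldsymbol\Delta^t$ be the optimizer, which gives $\Delta_i^t=0$ for all $i\in A^c$. Since $B_t\ge1$, some $j\in A$ has $\Delta_j^t\ge1$. Now move one sample from $j$ to a frozen item $i$. The objective rises by at most $\ell_{j,t}(\Delta_j^t-1)-\ell_{j,t}(\Delta_j^t)$, which is at most $\ell_{j,t}(0)-\ell_{j,t}(B)\le\sqrt{v_j(t)}$; this bound relies on $\ell_{j,t}$ being nonincreasing in $d$, since $c$ decreases and the map $c\mapsto\E(q-\mu-c(\theta-\mu))^+$ is convex with its minimum at $c=0$. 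The objective falls by at least $\eta_i$. The move therefore strictly lowers the objective, contradicting optimality under DPBB, and under GBB via \cref{prop:greedy-conditions}(i).

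\textbf{Main obstacle.} We expect Step 2 to be the hardest, especially keeping $q^*(t)$ away from the boundary of $\cJ$ when the posteriors of items in $A$ collapse to points. We would handle it by working with the limits of $\sum_i F_{i,t}(q)$ pointwise and with monotonicity in $q$.
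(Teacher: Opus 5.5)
Your proof is correct and follows essentially the same route as the paper's. It argues by contradiction over the frozen (finitely sampled) items, traps $q^*(t)$ or $q^\dagger(t)$ in a compact subset of $\cJ$, lower-bounds the frozen items' one-sample gain uniformly via \cref{lem:uniform-improvement}, and bounds the gain available from the other items by $\sqrt{v_j(t)}\to0$; your single-sample exchange in Step 4 is just a local form of the paper's comparison of total improvements. One small slip: the consistency setting assumes no uniform batch bound $B$, so you should write $\ell_{j,t}(0)-\ell_{j,t}(\Delta_j^t)$ instead of $\ell_{j,t}(0)-\ell_{j,t}(B)$; this is harmless because your Step 1 bound holds for every $d$.
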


\begin{proof}{Proof}
Fix $\boldsymbol\theta^0\in\Theta^*$ and a sample path in the probability-one event on which the conclusions of \cref{lem:doob-consistency} hold. For the asymptotic claim, we consider the procedure of \cref{alg:sequential} extended to infinitely many rounds, with $B_t \in \mathbb{Z}_+$ for every $t$, thus $\sum_{t=1}^{T} B_t \to \infty$ as $T \to \infty$. Assume by contradiction that some items are sampled only finitely often, and define
\begin{equation*}
    F:=\{i:\sup_{t \geq 1} N_i(t) < \infty\}.
\end{equation*}
Since the sum of samples diverges, $F^c\ne\varnothing$. All items $i\in F$ eventually stop getting sampled, so for all sufficiently large $t$ we write $\Pi_{i,t}=\Pi_i^\infty$, $\mu_i(t)=\mu_i^\infty$, and $v_i(t)=v_i^\infty>0$. For each $j\notin F$, $N_j(t)\to\infty$, so by \cref{lem:doob-consistency}, $\mu_j(t)\to\theta_j^0$ and $v_j(t)\to0$. We begin by showing that each of $q^*(t)$ and $q^\dagger(t)$ is contained in a compact interval for all sufficiently large $t$. Choose $a, b \in \cJ$ with $a < \min_{j \in F^c} \theta_j^0$ and $b > \max_{j \in F^c} \theta_j^0$. For each $j \in F^c$, $\mu_j(t) \to \theta_j^0$ and $v_j(t) \to 0$, so Chebyshev's inequality gives $\Pp_t(\theta_j \ge a) \to 1$ and $\Pp_t(\theta_j \ge b) \to 0$. Moreover, since the distributions of the items in $F$ are fixed, $a$ and $b$ can be chosen such that, for all sufficiently large $t$,
\begin{equation*}
    \sum_{i=1}^I \Pp_t(\theta_i \ge a) > k,
    \qquad
    \sum_{i=1}^I \Pp_t(\theta_i \ge b) < k.
\end{equation*}
Since $q \mapsto \sum_i \Pp_t(\theta_i \ge q)$ is nonincreasing in $q$ and equals $k$ at $q^*(t)$ by \eqref{eq:q-star}, $q^*(t) \in K := [a, b]$ for all sufficiently large $t$.

For $q^\dagger(t)$, since every posterior mean converges to a point in $\cJ$, we can enlarge $K$ within $\cJ$, if necessary, so that it also contains every $\mu_i(t)$ for all sufficiently large $t$. By \eqref{eq:q-dagger}, $\mu_{[k+1]}(t)\le q^\dagger(t)\le\mu_{[k]}(t)$, so $q^\dagger(t)\in K$ as well.

Applying \cref{lem:uniform-improvement} to $\Pi_i^\infty$ with the compact set $K$ gives $\eta_i > 0$ such that
\begin{equation*}
    \E_{\Pi_i^\infty}[(q(t) - \theta_i)^+] - \E[(q(t) - X_{c_i^\infty})^+] \ge \eta_i,
\end{equation*}
where $X_{c_i^\infty} := \mu_i^\infty\!+\!c_i^\infty(\theta_i\!-\!\mu_i^\infty)$ and $c_i^\infty := \sqrt{\bar v_i^\infty(1) / v_i^\infty}$. Once $\Pi_{i,t} = \Pi_i^\infty$, the left-hand side is exactly $\ell_{i,t}(0)\!-\!\ell_{i,t}(1)$. Hence $\ell_{i,t}(0)\!-\!\ell_{i,t}(1) \ge \eta_i$ for all sufficiently large $t$. Let $c_F := \min_{i \in F} \eta_i > 0$. Then for all sufficiently large $t$,
\begin{equation*}
    \ell_{i,t}(0)-\ell_{i,t}(1)\ge c_F
    \qquad\text{for every $i\in F$.}
\end{equation*}
To get a contradiction, we next show that for every $j \notin F$, the maximum possible improvement from allocating samples to item $j$ converges to $0$.  For $j \notin F$, let
\begin{equation*}
    R_{j,t}:=\ell_{j,t}(0)-\inf_{d\ge0}\ell_{j,t}(d)
\end{equation*}
be the maximum possible improvement from allocating any number of samples to item $j$. Since $x\mapsto(q\!-\!x)^+$ is 1-Lipschitz, for any random $X$,
\begin{equation*}
    \FitDelims{|}{|}{\E[(q-X)^+]-(q-\E[X])^+}
    \le
    \E|X-\E[X]|
    \le
    \sqrt{\Var(X)}.
\end{equation*}
The current distribution for item $j$ has variance $v_j(t) = \bar{v}_j(t,0)$, and $\bar{v}_j(t,d)  \leq \bar{v}_j(t,0) $. Therefore
\begin{equation*}
    R_{j,t}\le 2\sqrt{v_j(t)}\to0.
\end{equation*}
Since $F^c$ is finite, $\sum_{j\notin F}R_{j,t}\to0$, and for all sufficiently large $t$, $\sum_{j\notin F}R_{j,t}<c_F.$ Thus, for sufficiently large $t$, the feasible allocation that places one sample on some $i_0 \in F$ and distributes the remaining $B_t\!-\!1$ samples arbitrarily achieves total improvement
\begin{equation*}
    \bigl(\ell_{i_0,t}(0) - \ell_{i_0,t}(1)\bigr) + \sum_{j \ne i_0}\bigl(\ell_{j,t}(0) - \ell_{j,t}(d_j)\bigr) \ge c_F.
\end{equation*}
By contrast, any allocation that assigns no samples to items in $F$ has a total improvement of at most $\sum_{j\notin F}R_{j,t}<c_F$. Therefore, the optimal solution allocates a sample to at least one item in $F$ for every sufficiently large $t$, contradicting the assumption. Hence $F = \varnothing$.
\end{proof}

\begin{proof}{Proof of \cref{prop:eoc-bpcs-consistency}}
Fix $\boldsymbol\theta^0\in\Theta^*$. By \cref{lem:no-starvation}, $N_i(t)\to\infty$ for every $i$, almost surely. Hence $\mu_i(t)\to\theta_i^0$ and $v_i(t)\to0$ for every $i$, almost surely. Next, for any realized $\boldsymbol\theta$,
\begin{align*}
    \mathrm{OC}(S_k(\boldsymbol\mu(t)),\boldsymbol\theta)
    &=
    \sum_{i\in S_k(\boldsymbol\theta)}\theta_i
    -
    \sum_{i\in S_k(\boldsymbol\mu(t))}\theta_i \\
    &\le
    \sum_{i\in S_k(\boldsymbol\theta)}|\theta_i-\mu_i(t)|
    +
    \sum_{i\in S_k(\boldsymbol\mu(t))}|\theta_i-\mu_i(t)| \\
    &\le
    2k\|\boldsymbol\theta-\boldsymbol\mu(t)\|_\infty.
\end{align*}
Taking expectation,
\begin{equation*}
    \mathrm{EOC}(\cH_t)
    \le
    2k\E_t\|\boldsymbol\theta-\boldsymbol\mu(t)\|_\infty
    \le
    2k\sum_{i=1}^I\E_t|\theta_i-\mu_i(t)|
    \le
    2k\sum_{i=1}^I\sqrt{v_i(t)}.
\end{equation*}
Since $v_i(t)\to0$ for every $i$, $\mathrm{EOC}(\cH_t)\to0$ almost surely.

Finally, assume $\theta^0_{[k]}>\theta^0_{[k+1]}$, and let $g:=\theta^0_{[k]}\!-\!\theta^0_{[k+1]}>0.$ Since $\mu_i(t)\to\theta_i^0$ for every $i$, eventually $|\mu_i(t)-\theta_i^0|<g/3.$ If $i\in S_k(\boldsymbol\theta^0)$ and $j\notin S_k(\boldsymbol\theta^0)$, then $\theta_i^0\ge\theta^0_{[k]}$ and $\theta_j^0\le\theta^0_{[k+1]}$. Hence, eventually,
\begin{equation*}
    \mu_i(t)
    \ge
    \theta^0_{[k]}-g/3
    >
    \theta^0_{[k+1]}+g/3
    \ge
    \mu_j(t).
\end{equation*}
Thus $S_k(\boldsymbol\mu(t))=S_k(\boldsymbol\theta^0)$ eventually almost surely.
\end{proof}

\subsection{The optimal exponential rate}\label{app:optimal-eoc-rate}

Throughout this subsection and the next, assume \cref{ass:rate-model} and $1\le B_t\le B$ for every $t$, where $B$ is a fixed positive integer. All convergence statements hold $\Pp_{\boldsymbol\theta^0}$-almost surely.

\begin{proof}{Proof of \cref{lem:pairwise-rate}}
By the definition of $\operatorname{Alt}(\boldsymbol\theta^0)$,
\begin{align*}
    \Gamma(\boldsymbol w;\boldsymbol\theta^0)
    &=\inf_{\boldsymbol x\in\operatorname{Alt}(\boldsymbol\theta^0)}
      \sum_{\ell=1}^I w_\ell d_\ell(\theta_\ell^0,x_\ell)\\
    &=\min_{i\le k<j}
      \inf_{\substack{\boldsymbol x\in\cJ^I\\x_i<x_j}}
      \sum_{\ell=1}^I w_\ell d_\ell(\theta_\ell^0,x_\ell)\\
    &=\min_{i\le k<j}
      \inf_{\substack{u,v\in\cJ\\u<v}}
      \{w_id_i(\theta_i^0,u)+w_jd_j(\theta_j^0,v)\}.
\end{align*}
The last equality follows by setting $x_\ell=\theta_\ell^0$ for $\ell\notin\{i,j\}$. We conclude by showing that the inner infimum equals
\begin{equation*}
    \min_{x\in[\theta_j^0,\theta_i^0]}
    \{w_id_i(\theta_i^0,x)+w_jd_j(\theta_j^0,x)\}.
\end{equation*}
Under \cref{ass:rate-model}, $d_\ell(\theta_\ell^0,x)$ is continuous and decreases as $x$ moves toward $\theta_\ell^0$. By continuity of $d_\ell(\theta_\ell^0,\cdot)$, we may replace $u<v$ with $u\le v$. Since moving $u$ toward $\theta_i^0$ and $v$ toward $\theta_j^0$ cannot increase the sum, we may restrict to $\theta_j^0\le u\le v\le\theta_i^0$. Within this interval, replacing $u$ by $v$ cannot increase $d_i(\theta_i^0,u)$. Thus we may take $u=v=x\in[\theta_j^0,\theta_i^0]$, and the infimum is a minimum by continuity and compactness.
\end{proof}

\begin{lemma}[EOC and posterior error rates]\label{lem:posterior-eoc-rate}
For every sampling rule,
\begin{equation*}
    -\frac{\log\mathrm{EOC}(\cH_t)}{\sum_iN_i(t)}
    -\Gamma(\widehat{\boldsymbol w}(t);\boldsymbol\theta^0)
    \longrightarrow0
    \qquad \text{and} \qquad
    -\frac{\log\Pp_t(S_k(\boldsymbol\theta)\ne S_k(\boldsymbol\theta^0))}
          {\sum_iN_i(t)}
    -\Gamma(\widehat{\boldsymbol w}(t);\boldsymbol\theta^0)
    \longrightarrow0.
\end{equation*}
\end{lemma}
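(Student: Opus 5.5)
The plan is to prove the posterior-error statement first and then deduce the EOC statement by sandwiching $\mathrm{EOC}(\cH_t)$ between two posterior probabilities whose rates are both $\Gamma(\widehat{\boldsymbol w}(t);\boldsymbol\theta^0)$. Write $n_t:=\sum_iN_i(t)$.

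For the posterior error, note that ties have posterior probability zero, so $\Pp_t(S_k(\boldsymbol\theta)\ne S_k(\boldsymbol\theta^0))=\Pp_t(\boldsymbol\theta\in\operatorname{Alt}(\boldsymbol\theta^0))$. The set $\operatorname{Alt}(\boldsymbol\theta^0)$ is a nonempty open subset of $\cJ^I$, and its infimum is $\Gamma(\widehat{\boldsymbol w}(t);\boldsymbol\theta^0)$ by definition. So \cref{lem:posterior-open-set} applied with $O=\operatorname{Alt}(\boldsymbol\theta^0)$ gives the second claim directly. I will also record two uniformity facts used below. First, \cref{lem:pairwise-rate} writes $\Gamma(\boldsymbol w;\boldsymbol\theta^0)$ as a finite minimum of minima, over the compact interval $[\theta_j^0,\theta_i^0]$, of functions continuous in $(\boldsymbol w,x)$. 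Hence $\Gamma$ is continuous on $\Delta_I$ and bounded there. Second, \cref{lem:posterior-open-set} applies equally to the shrunken open sets $O_\epsilon:=\bigcup_{i\le k<j}\{\boldsymbol x\in\cJ^I:x_j>x_i+\epsilon\}$.

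For the EOC upper bound, take any realized $\boldsymbol\theta$. The $\mathrm{OC}$ term is nonzero only if $S_k(\boldsymbol\theta)\ne S_k(\boldsymbol\mu(t))$, and it is bounded by $k(U-L)$ because $\cJ=(L,U)$ is bounded. Hence $\mathrm{EOC}(\cH_t)\le k(U-L)\,\Pp_t(S_k(\boldsymbol\theta)\ne S_k(\boldsymbol\mu(t)))$. To link $S_k(\boldsymbol\mu(t))$ with $S_k(\boldsymbol\theta^0)$, I split into two cases.
\begin{itemize}
\item If $N_i(t)\to\infty$ for all $i$, then posterior consistency gives $S_k(\boldsymbol\mu(t))=S_k(\boldsymbol\theta^0)$ eventually, and the bound is $k(U-L)\Pp_t(\operatorname{Alt})$. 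The constant factor disappears after dividing by $n_t$.
\item In general no such consistency is available, so I would instead bound $\Pp_t(S_k(\boldsymbol\theta)\ne S)\le \sum$ over the finitely many pairs, using \cref{lem:posterior-open-set} on the open set $\{x_j>x_i\}$ for each relevant pair. One must then check that its rate is at least $\Gamma(\widehat{\boldsymbol w}(t))-o(1)$.
\end{itemize}
The second case is the delicate point. I expect the intended route is that when $S_k(\boldsymbol\mu(t))\ne S_k(\boldsymbol\theta^0)$ the posterior mass is not concentrated, so both sides have rate near $0$. Concretely, I would show that a rate bounded away from $0$ forces posterior concentration near $\boldsymbol\theta^0$ in the relevant coordinates, and hence $\mu$ orders correctly. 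Failing that, I would handle the possibly unsampled items directly via the lemma on sets in which only those coordinates move.

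For the EOC lower bound, on $O_\epsilon$ with the correct selection we have $\mathrm{OC}\ge\epsilon$, because some $j\notin S$ with $\theta_j>\theta_i+\epsilon$ for some $i\in S$ allows a swap. This gives $\mathrm{EOC}\ge\epsilon\,\Pp_t(O_\epsilon)$, so
\[
\liminf\bigl(\Gamma(\widehat{\boldsymbol w}(t))-(-\log\mathrm{EOC}/n_t)\bigr)\ge -\sup_{\boldsymbol w}\bigl|\Gamma_\epsilon(\boldsymbol w)-\Gamma(\boldsymbol w)\bigr|,
\]
where $\Gamma_\epsilon$ is the infimum over $O_\epsilon$. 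This supremum tends to $0$ as $\epsilon\to0$ by uniform continuity of the $d_\ell$ on compacts, since the shift in \cref{lem:pairwise-rate} is only over $x$ versus $x\pm\epsilon$. Letting $\epsilon\downarrow0$ closes the argument. The main obstacle is the uniformity needed to pass from the inexact selection $S_k(\boldsymbol\mu(t))$ to $S_k(\boldsymbol\theta^0)$ without assuming consistency of the sampling rule.
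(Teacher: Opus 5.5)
Your posterior-error argument and your $O_\epsilon$ approximation step match the paper. The EOC part, however, has a genuine gap exactly where you flagged it.

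\textbf{Upper bound on EOC.} At times with $S_k(\boldsymbol\mu(t))\ne S_k(\boldsymbol\theta^0)$, the event $\{S_k(\boldsymbol\theta)\ne S_k(\boldsymbol\mu(t))\}$ contains a neighbourhood of $\boldsymbol\theta^0$. By \cref{lem:posterior-open-set}, its posterior probability therefore decays only subexponentially. Your bound $\mathrm{EOC}(\cH_t)\le k(U-L)\,\Pp_t(S_k(\boldsymbol\theta)\ne S_k(\boldsymbol\mu(t)))$, with or without a union over pairs, then certifies only $-\log\mathrm{EOC}(\cH_t)/\sum_iN_i(t)\ge o(1)$. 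To finish, you would have to prove that $\Gamma(\widehat{\boldsymbol w}(t);\boldsymbol\theta^0)\to0$ along such times. You only assert this (``a rate bounded away from $0$ forces $\boldsymbol\mu$ to order correctly''), and your case split on whether every $N_i(t)\to\infty$ does not cover general sampling rules.

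The idea that removes the difficulty is that $S_k(\boldsymbol\mu(t))$ is the Bayes-optimal selection. For any $S$ with $|S|=k$, $\E_t[\mathrm{OC}(S,\boldsymbol\theta)]=\E_t[\sum_{i\in S_k(\boldsymbol\theta)}\theta_i]-\sum_{i\in S}\mu_i(t)$, which is minimized at $S=S_k(\boldsymbol\mu(t))$. Hence $\mathrm{EOC}(\cH_t)\le R_t:=\E_t[\mathrm{OC}(S_k(\boldsymbol\theta^0),\boldsymbol\theta)]\le k(U-L)\,\Pp_t(\boldsymbol\theta\in\operatorname{Alt}(\boldsymbol\theta^0))$ for every sampling rule and every $t$. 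No consistency or case analysis is needed.

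\textbf{Lower bound on EOC.} Your lower bound has the matching hole. The bound $\mathrm{EOC}(\cH_t)\ge\epsilon\,\Pp_t(\boldsymbol\theta\in O_\epsilon)$ is valid only when $S_k(\boldsymbol\mu(t))=S_k(\boldsymbol\theta^0)$, since a wrong selection can coincide with $S_k(\boldsymbol\theta)$ on parts of $O_\epsilon$. The paper handles the wrong-selection case with $V:=\{\boldsymbol x\in\cJ^I:\|\boldsymbol x-\boldsymbol\theta^0\|_\infty<g/4\}$, where $g:=\theta_k^0-\theta_{k+1}^0$. On $V$, any wrong selection has opportunity cost at least $g/2$, and $\Pp_t(\boldsymbol\theta\in V)$ has rate $0$ by \cref{lem:posterior-open-set}. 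So $\mathrm{EOC}(\cH_t)\ge\min\{R_t,(g/2)\Pp_t(\boldsymbol\theta\in V)\}$, which together with $\Gamma\ge0$ gives the upper bound on the rate.

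\textbf{Completing your own route.} Your approach can be made to work, but it needs an argument you did not give. Since $\Gamma(\boldsymbol w;\boldsymbol\theta^0)=0$ whenever some $w_i=0$, compactness shows that $\Gamma(\widehat{\boldsymbol w}(t);\boldsymbol\theta^0)\ge c>0$ forces $\min_i\widehat w_i(t)\ge c'>0$. Then $N_i(t)\to\infty$ for all $i$ along such times, and \cref{lem:rate-posterior-moments} gives $S_k(\boldsymbol\mu(t))=S_k(\boldsymbol\theta^0)$ there. When $\Gamma<c$, the trivial bound $\mathrm{EOC}(\cH_t)\le k(U-L)$ loses at most $c$, and a subsequence argument finishes. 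The Bayes-optimality observation makes all of this unnecessary.
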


\begin{proof}{Proof}
The posterior error statement follows from \cref{lem:posterior-open-set} with $O=\operatorname{Alt}(\boldsymbol\theta^0)$. To prove the EOC statement, we first bound the posterior expected loss of selecting the true top-$k$ set, $R_t:=\E_t[\mathrm{OC}(S_k(\boldsymbol\theta^0),\boldsymbol\theta)].$ We then compare $R_t$ with $\mathrm{EOC}(\cH_t)$.

For $0<\delta<U\!-\!L$, define
\begin{equation*}
    O_\delta:=\bigcup_{i\le k<j}
    \{\boldsymbol x\in\cJ^I:x_j>x_i+\delta\}.
\end{equation*}
On $O_\delta$, the opportunity cost of selecting $S_k(\boldsymbol\theta^0)$ is at least $\delta$. Also, opportunity cost is at most $k(U\!-\!L)$ and is zero outside $\operatorname{Alt}(\boldsymbol\theta^0)$. Therefore,
\begin{equation}\label{eq:app-fixed-selection-bounds}
    \delta\Pp_t(\boldsymbol\theta\in O_\delta)
    \le R_t
    \le k(U-L)\Pp_t(\boldsymbol\theta\in\operatorname{Alt}(\boldsymbol\theta^0)).
\end{equation}
Applying \cref{lem:posterior-open-set} to the probabilities in \eqref{eq:app-fixed-selection-bounds} gives
\begin{equation*}
    \Gamma(\widehat{\boldsymbol w}(t);\boldsymbol\theta^0)-o(1)
    \le -\frac{\log R_t}{\sum_iN_i(t)}\le \inf_{\boldsymbol y\in O_\delta}
        \sum_\ell\widehat w_\ell(t)d_\ell(\theta_\ell^0,y_\ell)
        +o(1).
\end{equation*}
It therefore suffices to show that for every $\varepsilon>0$, there is a $\delta>0$ such that, for every $\boldsymbol w\in\Delta_I$,
\begin{equation}\label{eq:app-separated-alternatives}
    \Gamma(\boldsymbol w;\boldsymbol\theta^0)
    \le\inf_{\boldsymbol y\in O_\delta}
        \sum_\ell w_\ell d_\ell(\theta_\ell^0,y_\ell)
    \le\Gamma(\boldsymbol w;\boldsymbol\theta^0)+\varepsilon.
\end{equation}
The first inequality follows from $O_\delta\subseteq\operatorname{Alt}(\boldsymbol\theta^0)$. For the second, choose $\delta>0$ small enough that $x\!\pm\!\delta\in\cJ$ and
\begin{equation*}
    |d_i(\theta_i^0,x\pm\delta)-d_i(\theta_i^0,x)|\le\varepsilon
    \qquad\text{for every $i$ and $x\in[\theta_I^0,\theta_1^0]$.}
\end{equation*}
Fix $\boldsymbol w$ and choose $i,j,x$ attaining the minimum in \eqref{eq:posterior-rate}. Setting $y_i=x\!-\!\delta$, $y_j=x\!+\!\delta$, and $y_\ell=\theta_\ell^0$ for all other items gives a vector in $O_\delta$. Hence
\begin{align*}
    \inf_{\boldsymbol y\in O_\delta}\sum_\ell w_\ell d_\ell(\theta_\ell^0,y_\ell)
    &\le w_id_i(\theta_i^0,x-\delta)+w_jd_j(\theta_j^0,x+\delta)\\
    &\le w_id_i(\theta_i^0,x)+w_jd_j(\theta_j^0,x)
          +\varepsilon(w_i+w_j)\\
    &\le\Gamma(\boldsymbol w;\boldsymbol\theta^0)+\varepsilon.
\end{align*}
This proves \eqref{eq:app-separated-alternatives}. Combining it with the preceding bounds on $-\log R_t/\sum_iN_i(t)$ and letting $\varepsilon\downarrow0$ gives
\begin{equation}\label{eq:app-fixed-selection-rate}
    -\frac{\log R_t}{\sum_iN_i(t)}
    -\Gamma(\widehat{\boldsymbol w}(t);\boldsymbol\theta^0)
    \longrightarrow0.
\end{equation}
We next compare $R_t$ with $\mathrm{EOC}(\cH_t)$. Let $g:=\theta_k^0\!-\!\theta_{k+1}^0>0$ and $V:=\{\boldsymbol x\in\cJ^I:\|\boldsymbol x\!-\!\boldsymbol\theta^0\|_\infty<g/4\}$ so that for every $\boldsymbol x\in V$, $S_k(\boldsymbol x)=S_k(\boldsymbol\theta^0)$. If $S_k(\boldsymbol\mu(t))\ne S_k(\boldsymbol\theta^0)$, its opportunity cost is at least $g/2$ when $\boldsymbol\theta\in V$, thus $\mathrm{EOC}(\cH_t) \ge (g/2)\Pp_t(\boldsymbol\theta\in V).$ If  $S_k(\boldsymbol\mu(t))=S_k(\boldsymbol\theta^0)$, then $\mathrm{EOC}(\cH_t)=R_t$. Since selecting the largest posterior means minimizes posterior expected opportunity cost, we also have $\mathrm{EOC}(\cH_t)\le R_t$ in both cases. Hence
\begin{equation*}
    \min\{R_t,(g/2)\Pp_t(\boldsymbol\theta\in V)\}
    \le\mathrm{EOC}(\cH_t)\le R_t.
\end{equation*}
Since $\boldsymbol\theta^0\in V$, the infimum in \cref{lem:posterior-open-set} is zero for $O=V$, giving
\begin{equation*}
    -\frac{\log\Pp_t(\boldsymbol\theta\in V)}
          {\sum_iN_i(t)}
    \longrightarrow0.
\end{equation*}
Taking logarithms and using \eqref{eq:app-fixed-selection-rate} proves the EOC statement.
\end{proof}

\begin{proof}{Proof of \cref{prop:optimal-eoc-rate}}
For part (i), $\Gamma$ is continuous in $\boldsymbol w$ by \eqref{eq:posterior-rate}, since each inner minimum is over a fixed compact interval. It therefore attains a finite maximum on $\Delta_I$. If every $w_i>0$, each pair minimum is positive because the two KL terms cannot both vanish when $\theta_i^0>\theta_j^0$. Thus $\Gamma^*(\boldsymbol\theta^0)>0$.

We will also use that every maximizing allocation has positive elements. To see this, take $w_i=0$ and choose some $j$ on the other side of the true top-$k$ boundary. Evaluating the pair at $x=\theta_j^0$ gives
\begin{equation*}
    0\le\Gamma(\boldsymbol w;\boldsymbol\theta^0)
    \le w_id_i(\theta_i^0,\theta_j^0)+w_jd_j(\theta_j^0,\theta_j^0)=0.
\end{equation*}
Such an allocation cannot attain the positive maximum.

For part (ii), \cref{lem:posterior-eoc-rate} and the definition of $\Gamma^*$ give
\begin{equation*}
    \limsup_{t\to\infty}
    -\frac{\log\mathrm{EOC}(\cH_t)}{\sum_iN_i(t)}
    \le\Gamma^*(\boldsymbol\theta^0).
\end{equation*}
For part (iii), if $\widehat{\boldsymbol w}(t)\to\boldsymbol w^*$ for a maximizing allocation $\boldsymbol w^*$, continuity gives
\begin{equation*}
    -\frac{\log\mathrm{EOC}(\cH_t)}{\sum_iN_i(t)}
    \longrightarrow\Gamma(\boldsymbol w^*;\boldsymbol\theta^0)
    =\Gamma^*(\boldsymbol\theta^0).
\end{equation*}
The posterior error result follows from \cref{lem:posterior-eoc-rate}.
\end{proof}

\subsection{The rate of TEAk}\label{app:allocation-proofs}\label{app:posterior-estimates}

We now prove \cref{thm:threshold-allocation,thm:half-optimal-eoc-rate}. We first establish \cref{thm:threshold-allocation} for $q(t)\to q_0\in(\theta_{k+1}^0,\theta_k^0)$. We then prove \cref{rem:unique-threshold-families,lem:rate-threshold-convergence} and apply \cref{thm:threshold-allocation} with $q_0=q^\dagger$. Finally, we compare the resulting rate with $\Gamma^*(\boldsymbol\theta^0)$ to prove \cref{thm:half-optimal-eoc-rate}.

To prove \cref{thm:threshold-allocation}, we first determine the exponential rate of $D_{i,t}(d)$. For this, we need the following lemma on the posterior density, mean, and variance. The mean and variance convergence was already established in \cref{lem:doob-consistency} for prior-almost every true parameter. Here, under the stronger assumptions of \cref{ass:rate-model}, we establish it for every fixed true parameter, together with the additional estimates needed to determine the rate of $D_{i,t}(d)$. The proof relies on \citet[Proposition~4 and Lemmas~EC.5, EC.7, and EC.8]{russo2020}.

\begin{lemma}[Posterior density, mean, and variance]\label{lem:rate-posterior-moments}
Let $\Pi_i^{(n)}$ be the posterior after the first $n$ observations of item $i$, with density $\pi_i^{(n)}$, mean $\mu_{i,n}$, and variance $v_{i,n}$. Thus $\Pi_i^{(0)}=\Pi_{i,0}$ and $\Pi_{i,t}=\Pi_i^{(N_i(t))}$. For every $i$,
\begin{equation}\label{eq:app-posterior-density-rate}
    \varepsilon_{i,n}:=\sup_{x\in\cJ}
    \FitDelims{|}{|}{\frac1n\log\frac{\pi_i^{(n)}(x)}{\pi_{i,0}(x)}+d_i(\theta_i^0,x)}
    \longrightarrow0,
\end{equation}
\begin{equation*}
    \mu_{i,n}\longrightarrow\theta_i^0,
    \qquad v_{i,n}\longrightarrow0,
    \qquad -\frac1n\log v_{i,n}\longrightarrow0.
\end{equation*}
If $N_i(t)\to\infty$, then for every fixed integer $m\ge0$,
\begin{equation}\label{eq:app-finite-observation-variance}
    \frac{\bar v_i(t,m)}{v_i(t)}\longrightarrow1.
\end{equation}
\end{lemma}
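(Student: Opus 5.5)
The plan is to reduce every claim to the single-item posterior analysis of \citet{russo2020}, using the exponential-family form in \cref{ass:rate-model}. First I will prove \eqref{eq:app-posterior-density-rate}. By Bayes' rule,
\[
\pi_i^{(n)}(x)=\frac{\pi_{i,0}(x)\exp\{\sum_{m\le n}\log p_i(Y_{i,m}\mid x)\}}{\int\pi_{i,0}(y)\exp\{\sum_{m\le n}\log p_i(Y_{i,m}\mid y)\}\,dy}.
\]
Dividing the numerator by $\prod_m p_i(Y_{i,m}\mid\theta_i^0)$ turns the exponent into
\[
n\bigl[(\eta_i(x)-\eta_i(\theta_i^0))\bar T_n-A_i(\eta_i(x))+A_i(\eta_i(\theta_i^0))\bigr],
\qquad \bar T_n=\frac1n\sum_{m\le n}T_i(Y_{i,m}).
\]
By the strong law, $\bar T_n\to A_i'(\eta_i(\theta_i^0))$. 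Since $\eta_i$ is bounded on $\cJ$, the convergence of $n^{-1}\times$exponent to $-d_i(\theta_i^0,x)$ is uniform in $x$. For the normalizer, the bounded prior density and a Laplace-type argument (\citet[Proposition~4, Lemma~EC.5]{russo2020}) give $n^{-1}\log(\text{normalizer})\to-\inf_y d_i(\theta_i^0,y)=0$. Together these yield \eqref{eq:app-posterior-density-rate}.

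Next I will deduce the moment statements. Uniform exponential concentration from \eqref{eq:app-posterior-density-rate} shows that, for any $\epsilon>0$, the posterior mass outside $(\theta_i^0-\epsilon,\theta_i^0+\epsilon)$ decays exponentially. Because $\cJ$ is bounded, this gives $\mu_{i,n}\to\theta_i^0$ and $v_{i,n}\to0$.

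For $-\frac1n\log v_{i,n}\to0$, I need a lower bound on $v_{i,n}$ that is subexponential. I would use the fact that the posterior places mass at least $e^{-n\epsilon}$ on each of two disjoint intervals, $(\theta_i^0-2\delta,\theta_i^0-\delta)$ and $(\theta_i^0+\delta,\theta_i^0+2\delta)$. This follows from the density bound \eqref{eq:app-posterior-density-rate}, because $d_i(\theta_i^0,\cdot)$ is small near $\theta_i^0$ when $\delta$ is small. Two such masses force $v_{i,n}\ge c\,\delta^2e^{-2n\epsilon}$, and letting $\epsilon,\delta\downarrow0$ suitably gives the claim. This is the content of \citet[Lemmas~EC.7, EC.8]{russo2020}, which I will cite for the details.

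The main obstacle is \eqref{eq:app-finite-observation-variance}. By the law of total variance,
\[
1-\frac{\bar v_i(t,m)}{v_i(t)}=\frac{\Var_t\bigl(\E[\theta_i\mid\cH_t,Y^{\mathrm{new}}_{1:m}]\bigr)}{v_i(t)},
\]
so it suffices to show that $m$ extra observations move the posterior mean by $o(\sqrt{v_i(t)})$ in $L^2$. Write $n=N_i(t)$. The new posterior is the current one tilted by the factor
\[
\exp\Bigl\{\eta\sum_{l\le m}T_i(Y_l)-mA_i(\eta)\Bigr\}.
\]
Then $\E[\theta\mid\cdot]-\mu_{i,n}=\Cov_n(\theta,w)/\E_n[w]$ with $w=\prod_l p_i(Y_l\mid\theta)$. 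As $\theta$ concentrates, $w$ is close to $w(\theta_i^0)\bigl(1+g(\theta)(\theta-\theta_i^0)\bigr)$ with bounded derivative. By Cauchy--Schwarz this gives $|\Cov_n(\theta,w)|/\E_n[w]\lesssim$ (first-order term of size $v_{i,n}$) $+$ higher-order terms.

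The difficulty is that $Y_l$ ranges over possibly unbounded outcomes. I would handle this by bounding the likelihood-ratio derivative by $|T_i(Y)|+\sup|A_i'|$ and using the finite predictive second moments from \cref{ass:model}. Squaring, taking predictive expectation, and dividing by $v_{i,n}$ then yields a bound of order $v_{i,n}\to0$. This proves \eqref{eq:app-finite-observation-variance}.
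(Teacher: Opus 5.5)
Your arguments for \eqref{eq:app-posterior-density-rate}, for $\mu_{i,n}\to\theta_i^0$ and $v_{i,n}\to0$, and for $-\frac1n\log v_{i,n}\to0$ are sound. They are more self-contained than the paper, which cites \citet{russo2020} Lemmas EC.5, EC.7, EC.8 and uses one interval plus mean convergence where you use two straddling intervals. You should also state the trivial reverse inequality, which follows from $v_{i,n}\le(U-L)^2$.

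The gap is \eqref{eq:app-finite-observation-variance}. Write $w(\theta,y)=\prod_{l\le m}p_i(y_l\mid\theta)$ and let $\bar f(y)=\E_t[w(\theta_i,y)]$ be the predictive density. The quantity you must control is
\[
\frac{\Var_t\bigl(\E[\theta_i\mid\cH_t,Y_{1:m}]\bigr)}{v_i(t)}=\frac{1}{v_i(t)}\int\frac{\Cov_t\bigl(\theta_i,w(\theta_i,y)\bigr)^2}{\bar f(y)}\,dy .
\]
After Cauchy--Schwarz you therefore face integrals with the chi-square-type weight $w(\theta,y)^2/\bar f(y)$.

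There is no bounded derivative to use here. The bound $|\partial_\eta w|\le w\,(\sum_l|T_i(y_l)|+m\sup|A_i'|)$ carries the factor $w$ at intermediate parameter values. For $\theta$ away from $\theta_i^0$, that factor can be exponentially large in $\sum_lT_i(y_l)$ relative to $\bar f(y)$. Finite predictive second moments of $Y$ say nothing about this.

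Separately, your linearization is valid only on a small neighborhood $V$ of $\theta_i^0$. The posterior mass on $V^c$ contributes a term of order $\Pi_{i,t}(V^c)$, which must be shown to be $o(v_i(t))$. Your sketch never addresses it.

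The route can be repaired. Localize to $V$. Control the local term with exponential moments of $T_i(Y)$ under $\theta_i^0$, which exist because $\eta_i(\theta_i^0)$ is interior. Then invoke $\Pi_{i,t}(V^c)/v_i(t)\to0$, which follows from exponential concentration against your subexponential variance bound. As written, though, the decisive step is asserted rather than proved, and the tool you name would not deliver it.

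The paper avoids all of this. Since $\bar v_i(t,m)\le v_i(t)$ is automatic, only $\liminf\ge1$ is needed. Changing measure from the predictive law to the true law gives
\[
\bar v_i(t,m)=\E_{\theta_i^0}\Bigl[\min_{a\in\R}\int_\cJ(x-a)^2\prod_{r=1}^m\frac{p_i(Y_r\mid x)}{p_i(Y_r\mid\theta_i^0)}\,\Pi_{i,t}(dx)\Bigr].
\]
On the event $\{\max_r|T_i(Y_r)|\le M\}$, which has probability at least $1-\varepsilon$, the likelihood ratio is at least $e^{-\varepsilon}$ for $x$ in a small interval $V$. This yields $\bar v_i(t,m)\ge e^{-\varepsilon}(1-\varepsilon)\{v_i(t)-(U-L)^2\Pi_{i,t}(V^c)\}$, and $\Pi_{i,t}(V^c)/v_i(t)\to0$ finishes the proof. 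A one-sided bound on a high-probability event under the true measure needs only tightness of $T_i(Y)$, not integrability of likelihood ratios against $\bar f$. That is exactly what your two-sided perturbation route lacks.
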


\begin{proof}{Proof}
Fix $i$ and let $\widetilde\pi_i^{(n)}$ be the posterior density of $\eta_i(\theta_i)$. By \citet[Lemmas~EC.5, EC.7, and EC.8]{russo2020}, with the latter two applied over $\mathcal E_i$,
\begin{equation*}
    \sup_{x\in\cJ}
    \FitDelims{|}{|}{\frac1n\log\widetilde\pi_i^{(n)}(\eta_i(x))
          +d_i(\theta_i^0,x)}\longrightarrow0.
\end{equation*}
Since $\widetilde\pi_{i,0}$ is bounded above and bounded away from zero, the same limit holds after dividing $\widetilde\pi_i^{(n)}$ by $\widetilde\pi_{i,0}$. The posterior-to-prior density ratio is unchanged by the transformation $\eta_i$, proving \eqref{eq:app-posterior-density-rate}.

We next establish convergence of the posterior mean and variance. By \citet[Proposition~4]{russo2020}, the posterior concentrates at $\theta_i^0$. Since $\cJ$ is bounded, this gives $\mu_{i,n}\to\theta_i^0$ and $v_{i,n}\to0$. His proof of that proposition in Section~EC.6.2 also shows that, for every sufficiently small $\delta>0$, there is a constant $c_{i,\delta}>0$ such that, for all sufficiently large $n$,
\begin{equation}\label{eq:app-posterior-outside-interval}
    \Pi_i^{(n)}(|\theta_i-\theta_i^0|\ge\delta)
    \le e^{-nc_{i,\delta}}.
\end{equation}

We next show that $-n^{-1}\log v_{i,n}\to0$. Fix $\varepsilon>0$ and choose $\delta>0$ so that $V:=(\theta_i^0\!-\!2\delta,\theta_i^0\!-\!\delta)\subset\cJ$ and $d_i(\theta_i^0,x)\le\varepsilon$ for every $x\in V$. Since $\mu_{i,n}\to\theta_i^0$, eventually $\mu_{i,n}\ge\theta_i^0\!-\!\delta/2$, giving
\begin{equation*}
    v_{i,n}
    \ge\int_V(x-\mu_{i,n})^2\Pi_i^{(n)}(dx)
    \ge\frac{\delta^2}{4}\Pi_i^{(n)}(V).
\end{equation*}
Taking logarithms and applying \cref{lem:posterior-open-set},
\begin{equation*}
    \limsup_{n\to\infty}-\frac1n\log v_{i,n}
    \le\inf_{x\in V}d_i(\theta_i^0,x)
    \le\varepsilon.
\end{equation*}
Letting $\varepsilon\downarrow0$ and using that the variance is bounded gives  $-n^{-1}\log v_{i,n}\to0$.

It remains to prove \eqref{eq:app-finite-observation-variance}. The claim is trivial for $m=0$. Fix $m\ge1$. The law of total variance gives $\bar v_i(t,m)/v_i(t)\le1$. We will show that $\liminf_{t\to\infty}\bar v_i(t,m)/v_i(t)\ge 1$.

Let $Y_1,\ldots,Y_m$ denote the new observations of item $i$. The expectation $\E_{\theta_i^0}$ is over these additional observations only. By Bayes' formula and minimization of expected squared error,
\begin{align*}
    \bar v_i(t,m)
    &=\E_t\!\FitDelims{[}{]}{
      \frac{\displaystyle
        \min_{a\in\R}\int_\cJ(x-a)^2
        \prod_{r=1}^m p_i(Y_r\mid x)\Pi_{i,t}(dx)}
      {\displaystyle
        \int_\cJ\prod_{r=1}^m p_i(Y_r\mid z)\Pi_{i,t}(dz)}
      }\\
    &=\E_{\theta_i^0}\!\FitDelims{[}{]}{
      \frac{\displaystyle
        \int_\cJ\prod_{r=1}^m p_i(Y_r\mid z)\Pi_{i,t}(dz)}
      {\displaystyle\prod_{r=1}^m p_i(Y_r\mid\theta_i^0)}
      \times
      \frac{\displaystyle
        \min_{a\in\R}\int_\cJ(x-a)^2
        \prod_{r=1}^m p_i(Y_r\mid x)\Pi_{i,t}(dx)}
      {\displaystyle
        \int_\cJ\prod_{r=1}^m p_i(Y_r\mid z)\Pi_{i,t}(dz)}
      }\\
    &=\E_{\theta_i^0}\!\FitDelims{[}{]}{
      \min_{a\in\R}\int_\cJ(x-a)^2
      \prod_{r=1}^m
      \frac{p_i(Y_r\mid x)}{p_i(Y_r\mid\theta_i^0)}
      \Pi_{i,t}(dx)}.
\end{align*}

We first bound the logarithm of the likelihood product term. We have
\begin{align*}
    \log\prod_{r=1}^m
    \frac{p_i(Y_r\mid x)}{p_i(Y_r\mid\theta_i^0)}
    &=
    \{\eta_i(x)-\eta_i(\theta_i^0)\}
    \sum_{r=1}^m T_i(Y_r)
    -m\{A_i(\eta_i(x))-A_i(\eta_i(\theta_i^0))\}\\
    &\ge
    -|\eta_i(x)-\eta_i(\theta_i^0)|
     \FitDelims{|}{|}{\sum_{r=1}^m T_i(Y_r)}
    -m|A_i(\eta_i(x))-A_i(\eta_i(\theta_i^0))|.
\end{align*}
To bound $\FitDelims{|}{|}{\sum_{r=1}^m T_i(Y_r)}$, fix $\varepsilon\in(0,1)$ and choose $M$ so that
\begin{equation*}
    \Pp_{\theta_i^0}\!
    \FitParens{\max_{1\le r\le m}|T_i(Y_r)|\le M}
    \ge1-\varepsilon.
\end{equation*}
If $\max_{1\le r\le m}|T_i(Y_r)|\le M$, then $\FitDelims{|}{|}{\sum_{r=1}^m T_i(Y_r)}\le mM$. By continuity of $\eta_i$ and $A_i$, choose $V=[\theta_i^0\!-\!\delta,\theta_i^0\!+\!\delta]\subset\cJ$ so that for $x\in V$
\begin{equation*}
    mM|\eta_i(x)-\eta_i(\theta_i^0)|
    +m|A_i(\eta_i(x))-A_i(\eta_i(\theta_i^0))|
    \le\varepsilon.
\end{equation*}
Thus, whenever $\max_{1\le r\le m}|T_i(Y_r)|\le M$, for $x\in V$ we have
\begin{equation*}
    \prod_{r=1}^m
    \frac{p_i(Y_r\mid x)}{p_i(Y_r\mid\theta_i^0)}
    \ge e^{-\varepsilon}.
\end{equation*}
Plugging this into the expression for $\bar v_i(t,m)$ gives
\begin{equation*}
    \bar v_i(t,m)
    \ge e^{-\varepsilon}(1-\varepsilon)
       \min_{a\in\R}\int_V(x-a)^2\Pi_{i,t}(dx)
    \ge e^{-\varepsilon}(1-\varepsilon)
       \bigl\{v_i(t)-(U-L)^2\Pi_{i,t}(V^c)\bigr\}.
\end{equation*}
For the second inequality, the minimizing $a$ belongs to $V$. The integral over $V^c$ is therefore at most $(U\!-\!L)^2\Pi_{i,t}(V^c)$, while the integral over $\cJ$ is at least $v_i(t)$.

By \eqref{eq:app-posterior-outside-interval} and $-n^{-1}\log v_{i,n}\to0$, $\Pi_{i,t}(V^c)/v_i(t)\to0$ when $N_i(t)\to\infty$. Dividing by $v_i(t)$ and taking the liminf gives
\begin{equation*}
    \liminf_{t\to\infty}\frac{\bar v_i(t,m)}{v_i(t)}
    \ge e^{-\varepsilon}(1-\varepsilon).
\end{equation*}
Letting $\varepsilon\downarrow0$ proves \eqref{eq:app-finite-observation-variance}.
\end{proof}

\begin{proof}{Proof of \cref{thm:threshold-allocation}}
We first show that every item is sampled infinitely often. Since $q(t)\to q_0\in(\theta_{k+1}^0,\theta_k^0)$, eventually $q(t)$ belongs to a compact interval $K\subset\cJ$. The proof of \cref{lem:no-starvation} then applies, using \cref{lem:rate-posterior-moments} for items sampled infinitely often and \cref{lem:uniform-improvement} for items that stop receiving samples. Thus $N_i(t)\to\infty$ and $\mu_i(t)\to\theta_i^0$ for every $i$.

For part (i), we first establish the rates of $D_{i,t}(d)$. We will show that
\begin{equation}\label{eq:app-gain-exponent}
    \max_{0\le d<B}
    \FitDelims{|}{|}{-\frac{\log D_{i,t}(d)}{N_i(t)}-d_i(\theta_i^0,q_0)}
    \longrightarrow0.
\end{equation}
We will then use these rates and optimality of each batch to prove $\widehat w_i(t)\to w_i(q_0)$. To prove \eqref{eq:app-gain-exponent}, define
\begin{equation*}
    J_{i,t}(d):=\ell_{i,t}(d)-(q(t)-\mu_i(t))^+.
\end{equation*}
We first establish the bounds
\begin{equation}\label{eq:app-gain-sandwich}
    \frac{\bar v_i(t,B-1)}{2\Var_t(Y)}J_{i,t}(B-1)
    \le D_{i,t}(d)\le J_{i,t}(0),
    \qquad 0\le d<B,
\end{equation}
where $Y$ is an additional observation of item $i$.

By \eqref{eq:affine}, $\widetilde\theta_i(d\!+\!1)$ is a contraction of $\widetilde\theta_i(d)$ about $\mu_i(t)$ with factor $c_i(t,d\!+\!1)/c_i(t,d)$. The convexity argument in the proof of \cref{lem:uniform-improvement} gives
\begin{equation}\label{eq:app-convex-gain-bound}
    0
    \le\FitParens{1-\frac{c_i(t,d+1)}{c_i(t,d)}}J_{i,t}(d)
    \le D_{i,t}(d)
    =J_{i,t}(d)-J_{i,t}(d+1)
    \le J_{i,t}(d)
    \le J_{i,t}(0).
\end{equation}
This proves the upper bound in \eqref{eq:app-gain-sandwich}. For the lower bound, we next bound $1\!-\!c_i(t,d\!+\!1)/c_i(t,d)$ from below.

Let $Z_d=(Y_1,\ldots,Y_d)$ be the next $d$ observations of item $i$, and let $Y=Y_{d+1}$. Applying the variance and covariance identities in the proof of \cref{lem:uniform-improvement} conditional on $(\cH_t,Z_d)$, followed by Cauchy--Schwarz, gives
\begin{equation*}
    \bar v_i(t,d)-\bar v_i(t,d+1)
    \ge\E_t\!\FitDelims{[}{]}{
       \frac{\Var(\theta_i\mid\cH_t,Z_d)^2}
            {\Var(Y\mid\cH_t,Z_d)}}
    \ge\frac{\bar v_i(t,d)^2}
              {\E_t[\Var(Y\mid\cH_t,Z_d)]}
    \ge\frac{\bar v_i(t,d)^2}{\Var_t(Y)}.
\end{equation*}
The last inequality follows from the law of total variance. Therefore,
\begin{equation*}
\begin{aligned}
    1-\frac{c_i(t,d+1)}{c_i(t,d)}
    =\frac{\bar v_i(t,d)-\bar v_i(t,d+1)}
            {\bar v_i(t,d)
             (1+c_i(t,d+1)/c_i(t,d))}
    \ge\frac{\bar v_i(t,d)}{2\Var_t(Y)}
     \ge\frac{\bar v_i(t,B-1)}{2\Var_t(Y)}.
\end{aligned}
\end{equation*}
Also, \eqref{eq:app-convex-gain-bound} gives $J_{i,t}(d)\ge J_{i,t}(B\!-\!1)$ for $d<B$. Substituting into the lower bound in \eqref{eq:app-convex-gain-bound} proves \eqref{eq:app-gain-sandwich}.

We now show that both $J_{i,t}(0)$ and $J_{i,t}(B\!-\!1)$ have exponential rate $d_i(\theta_i^0,q_0)$:
\begin{equation}\label{eq:app-loss-exponent}
    -\frac{\log J_{i,t}(d)}{N_i(t)}
    \longrightarrow d_i(\theta_i^0,q_0),
    \qquad d\in\{0,B-1\}.
\end{equation}
By \cref{lem:rate-posterior-moments},
\begin{equation*}
    c_i(t,d)\to1,
    \qquad
    z_i(t,d)
    =\mu_i(t)+\frac{q(t)-\mu_i(t)}{c_i(t,d)}
    \to q_0.
\end{equation*}
Since $\mu_i(t)\to\theta_i^0$ and $q_0\in(\theta_{k+1}^0,\theta_k^0)$, \eqref{eq:ell-general} gives, for all sufficiently large $t$,
\begin{equation*}
    J_{i,t}(d)=
    \begin{cases}
        c_i(t,d)\E_t[(z_i(t,d)-\theta_i)^+],
            &i\le k,\\
        c_i(t,d)\E_t[(\theta_i-z_i(t,d))^+],
            &i>k.
    \end{cases}
\end{equation*}
Write $n=N_i(t)$. By \eqref{eq:app-posterior-density-rate} and monotonicity of $d_i(\theta_i^0,x)$,
\begin{equation*}
    J_{i,t}(d)
    \le(U-L)e^{-n(d_i(\theta_i^0,z_i(t,d))
                         -\varepsilon_{i,n})}.
\end{equation*}
For a lower bound when $i\le k$, choose a sufficiently small $\delta>0$ and let $V=[q_0\!-\!2\delta,q_0\!-\!\delta]\subset\cJ$. Eventually $|z_i(t,d)\!-\!q_0|<\delta/2$, so
\begin{equation*}
    J_{i,t}(d)
    \ge c_i(t,d)\frac\delta2\Pi_{i,t}(V)
    \ge c_i(t,d)\frac\delta2\Pi_{i,0}(V)
       e^{-n(d_i(\theta_i^0,q_0-2\delta)
                         +\varepsilon_{i,n})}.
\end{equation*}
For $i>k$, take $V=[q_0\!+\!\delta,q_0\!+\!2\delta]$ and obtain the same bound with $q_0\!+\!2\delta$ in place of $q_0\!-\!2\delta$. Since $c_i(t,d)\to1$ and $\Pi_{i,0}(V)>0$, taking logarithms, letting $t\to\infty$, and then letting $\delta\downarrow0$ proves \eqref{eq:app-loss-exponent}.

To conclude \eqref{eq:app-gain-exponent} from \eqref{eq:app-gain-sandwich}, it remains to show that
\begin{equation*}
    -\frac1n\log
    \frac{\bar v_i(t,B-1)}{2\Var_t(Y)}
    \longrightarrow0.
\end{equation*}
The numerator is covered by \cref{lem:rate-posterior-moments}. For the denominator, let $M_i:=\int_\cJ\E_x[Y^2]\Pi_{i,0}(dx)<\infty$, by \cref{ass:model}. Equation \eqref{eq:app-posterior-density-rate} gives
\begin{equation*}
    \Var_t(Y)\le\E_t[Y^2]\le M_i e^{n\varepsilon_{i,n}}.
\end{equation*}
Since $\Var_t(Y)\ge v_i(t)\ge\bar v_i(t,B\!-\!1)$,
\begin{equation*}
    0
    \le-\frac1n\log
       \frac{\bar v_i(t,B-1)}{2\Var_t(Y)}
    \le-\frac1n\log\bar v_i(t,B-1)
          +\varepsilon_{i,n}+\frac{\log(2M_i)}n
      \longrightarrow0.
\end{equation*}
Combining this with \eqref{eq:app-loss-exponent} in \eqref{eq:app-gain-sandwich} proves \eqref{eq:app-gain-exponent}.

We now use these rates and batch optimality to show that
\begin{equation*}
    \frac{d_i(\theta_i^0,q_0)N_i(t)-\min_j d_j(\theta_j^0,q_0)N_j(t)}
         {\sum_jN_j(t)}
    \longrightarrow0
    \qquad\text{for every $i$.}
\end{equation*}
We first compare the sample counts for items sampled in the current round. Write
\begin{equation*}
    \varepsilon_t
    :=\max_{i,\,0\le d<B}
      \frac{|{-\log D_{i,t}(d)}-d_i(\theta_i^0,q_0)N_i(t)|}
           {\sum_jN_j(t)}.
\end{equation*}
By \eqref{eq:app-gain-exponent}, $\varepsilon_t\to0$. If $\Delta_i^t>0$ and $j\ne i$, transferring one sample from $i$ to $j$ is feasible. Optimality of $\boldsymbol\Delta^t$ gives
\begin{equation*}
    0
    \le\ell_{i,t}(\Delta_i^t-1)
        +\ell_{j,t}(\Delta_j^t+1)
        -\ell_{i,t}(\Delta_i^t)
        -\ell_{j,t}(\Delta_j^t)
    =D_{i,t}(\Delta_i^t-1)-D_{j,t}(\Delta_j^t).
\end{equation*}
Since $0\le\Delta_i^t\!-\!1\le B\!-\!1$ and $0\le\Delta_j^t\le B\!-\!1$,
\begin{align*}
    d_i(\theta_i^0,q_0)N_i(t)
    &\le-\log D_{i,t}(\Delta_i^t-1)
          +\varepsilon_t\sum_\ell N_\ell(t)\\
    &\le-\log D_{j,t}(\Delta_j^t)
          +\varepsilon_t\sum_\ell N_\ell(t)\\
    &\le d_j(\theta_j^0,q_0)N_j(t)
          +2\varepsilon_t\sum_\ell N_\ell(t).
\end{align*}
Taking the minimum over $j$ gives
\begin{equation}\label{eq:app-selected-count-comparison}
    \Delta_i^t>0
    \quad\Longrightarrow\quad
    d_i(\theta_i^0,q_0)N_i(t)\le\min_j d_j(\theta_j^0,q_0)N_j(t)+2\varepsilon_t\sum_jN_j(t).
\end{equation}

To extend this analysis to every item, we apply it at the last round in which that item was sampled. Let $\tau_i(t)<t$ be that round. Since $N_i(t)\to\infty$, we have $\tau_i(t)\to\infty$, and boundedness of the batches gives
\begin{equation*}
    N_i(t)
    =N_i(\tau_i(t))+\Delta_i^{\tau_i(t)}
    \le N_i(\tau_i(t))+B.
\end{equation*}
Also, $\min_j d_j(\theta_j^0,q_0)N_j(t)$ is nondecreasing. Applying \eqref{eq:app-selected-count-comparison} at $\tau_i(t)$ therefore gives
\begin{align*}
    0
    &\le\frac{d_i(\theta_i^0,q_0)N_i(t)
              -\min_j d_j(\theta_j^0,q_0)N_j(t)}
             {\sum_jN_j(t)}\\
    &\le\frac{B d_i(\theta_i^0,q_0)}{\sum_jN_j(t)}
       +2\varepsilon_{\tau_i(t)}
        \frac{\sum_jN_j(\tau_i(t))}{\sum_jN_j(t)}\\
    &\le\frac{B d_i(\theta_i^0,q_0)}{\sum_jN_j(t)}
       +2\varepsilon_{\tau_i(t)}
      \longrightarrow0.
\end{align*}
Dividing by $d_i(\theta_i^0,q_0)$, summing over $i$, and using $\sum_i\widehat w_i(t)=1$ gives
\begin{equation*}
    \frac{\min_j d_j(\theta_j^0,q_0)N_j(t)}{\sum_jN_j(t)}
    \longrightarrow
    \FitParens{\sum_i\frac1{d_i(\theta_i^0,q_0)}}^{-1}.
\end{equation*}
Hence
\begin{equation*}
    \widehat w_i(t)
    \longrightarrow
    \frac{1/d_i(\theta_i^0,q_0)}{\sum_j1/d_j(\theta_j^0,q_0)}
    =w_i(q_0),
\end{equation*}
proving part (i). Part (ii) follows from \cref{lem:posterior-eoc-rate} and continuity of $\Gamma$.

For part (iii), $w_i(q_0)d_i(\theta_i^0,q_0)=\FitParens{\sum_{\ell=1}^I1/d_\ell(\theta_\ell^0,q_0)}^{-1}$ for every $i$. For any $i\le k<j$ and $x\in[\theta_j^0,\theta_i^0]$, monotonicity gives
\begin{equation*}
    w_i(q_0)d_i(\theta_i^0,x)
    +w_j(q_0)d_j(\theta_j^0,x)
    \ge
    \begin{cases}
        w_i(q_0)d_i(\theta_i^0,q_0)=\FitParens{\sum_{\ell=1}^I1/d_\ell(\theta_\ell^0,q_0)}^{-1},&x\le q_0,\\
        w_j(q_0)d_j(\theta_j^0,q_0)=\FitParens{\sum_{\ell=1}^I1/d_\ell(\theta_\ell^0,q_0)}^{-1},&x\ge q_0.
    \end{cases}
\end{equation*}
Taking the minimum over $x$ and then over $i\le k<j$ in \eqref{eq:posterior-rate} gives $\Gamma(\boldsymbol w(q_0);\boldsymbol\theta^0)\ge\FitParens{\sum_{\ell=1}^I1/d_\ell(\theta_\ell^0,q_0)}^{-1}$.
\end{proof}

\begin{proof}{Proof of \cref{rem:unique-threshold-families}}
Fix $\boldsymbol x$ with $x_{[k]}>x_{[k+1]}$. The objective $\sum_i1/d_i(x_i,q)$ is continuous on $(x_{[k+1]},x_{[k]})$ and tends to infinity at its endpoints, so it attains a minimum. We prove uniqueness for each family by strict convexity in $q$ or in a strictly increasing function of $q$.

For Gaussian observations, each term is strictly convex in $q$, since
\begin{equation*}
    \frac1{d_i(x_i,q)}=\frac{2\sigma_i^2}{(x_i-q)^2},
    \qquad
    \frac{d^2}{dq^2}\frac1{d_i(x_i,q)}
    =\frac{12\sigma_i^2}{(x_i-q)^4}>0.
\end{equation*}
For lognormal observations, set $z=\log q$. Then
\begin{equation*}
    \frac1{d_i(x_i,e^z)}=\frac{2\sigma_i^2}{(\log x_i-z)^2},
    \qquad
    \frac{d^2}{dz^2}\frac1{d_i(x_i,e^z)}
    =\frac{12\sigma_i^2}{(\log x_i-z)^4}>0.
\end{equation*}

For Bernoulli observations, write $q=e^z/(1\!+\!e^z)$. We show that $1/d_i(x_i,q)$ is strictly convex in $z$. Differentiating gives
\begin{equation*}
    \frac{d^2}{dz^2}\frac1{d_i(x_i,q)}
    =\frac{2(q-x_i)^2-d_i(x_i,q)q(1-q)}
           {d_i(x_i,q)^3}.
\end{equation*}
To show that the numerator is positive, we bound $d_i(x_i,q)$ from above. Using $\log u\le u\!-\!1$,
\begin{equation*}
    d_i(x_i,q)
    =x_i\log\frac{x_i}{q}
      +(1-x_i)\log\frac{1-x_i}{1-q}
    \le x_i\FitParens{\frac{x_i}{q}-1}
       +(1-x_i)\FitParens{\frac{1-x_i}{1-q}-1}
    =\frac{(x_i-q)^2}{q(1-q)}.
\end{equation*}
Substituting gives
\begin{equation*}
    \frac{d^2}{dz^2}\frac1{d_i(x_i,q)}
    \ge\frac{(q-x_i)^2}{d_i(x_i,q)^3}>0,
\end{equation*}
since $q\in(x_{[k+1]},x_{[k]})$ implies $q\ne x_i$.

For Poisson and Gamma observations, write $q=e^z$. Then
\begin{equation*}
    d_i(x_i,q)=
    \begin{cases}
        x_i(e^v-1-v),\quad v=z-\log x_i,
            &\text{Poisson},\\
        \kappa_i(e^v-1-v),\quad v=\log x_i-z,
            &\text{Gamma}.
    \end{cases}
\end{equation*}
Since $x_i,\kappa_i>0$ and $dv/dz=\pm1$, it suffices to show that $1/(e^v\!-\!1\!-\!v)$ is strictly convex for $v\ne0$. Differentiating gives
\begin{equation*}
    \frac{d^2}{dv^2}\frac1{e^v-1-v}
    =\frac{e^v(e^v+2e^{-v}+v-3)}{(e^v-1-v)^3}.
\end{equation*}
The denominator is positive for $v\ne0$. To show that the numerator is positive, use $e^v>1\!+\!v$ and $e^{-v}>1\!-\!v$ to get
\begin{equation*}
    e^v+2e^{-v}+v-3
    >(1+v)+2(1-v)+v-3=0.
\end{equation*}
Thus the second derivative is positive, proving strict convexity in $z$ for both families.
\end{proof}

\begin{proof}{Proof of \cref{lem:rate-threshold-convergence}}
By \cref{lem:rate-posterior-moments} and the proof of \cref{lem:no-starvation}, $\boldsymbol\mu(t)\to\boldsymbol\theta^0$. Define
\begin{equation*}
    H_t(q):=\sum_i\frac1{d_i(\mu_i(t),q)},
    \qquad H(q):=\sum_i\frac1{d_i(\theta_i^0,q)}.
\end{equation*}
We will show that every limit of $q^\dagger(t)$ belongs to $(\theta_{k+1}^0,\theta_k^0)$ and minimizes $H$. Uniqueness then gives convergence to $q^\dagger$.

Since $\mu_{[k]}(t)\to\theta_k^0$ and $\mu_{[k+1]}(t)\to\theta_{k+1}^0$, eventually $\mu_{[k+1]}(t)<q^\dagger<\mu_{[k]}(t).$ Since $q^\dagger(t)$ minimizes $H_t$ on this interval,
\begin{equation}\label{eq:app-threshold-comparison}
    H_t(q^\dagger(t))\le H_t(q^\dagger)\longrightarrow H(q^\dagger)<\infty.
\end{equation}
Let $t_n\to\infty$ satisfy $q^\dagger(t_n)\to\bar q$. Since $\mu_{[k+1]}(t)\le q^\dagger(t)\le\mu_{[k]}(t)$, we have $\bar q\in[\theta_{k+1}^0,\theta_k^0]$. If $\bar q=\theta_k^0$ (and likewise $\bar q=\theta_{k+1}^0$), then
\begin{equation*}
    H_{t_n}(q^\dagger(t_n))
    \ge\frac1{d_k(\mu_k(t_n),q^\dagger(t_n))}\longrightarrow\infty,
\end{equation*}
contradicting \eqref{eq:app-threshold-comparison}. Therefore $\bar q \in(\theta_{k+1}^0,\theta_k^0)$, and continuity gives
\begin{equation*}
    H(\bar q)=\lim_{n\to\infty}H_{t_n}(q^\dagger(t_n))\le H(q^\dagger).
\end{equation*}
Thus $\bar q$ minimizes $H$, and \cref{ass:unique-threshold} gives $\bar q=q^\dagger$. Since $q^\dagger(t)$ is eventually contained in a compact interval and every such limit equals $q^\dagger$, we conclude that $q^\dagger(t)\to q^\dagger$.
\end{proof}

\begin{proof}{Proof of \cref{thm:half-optimal-eoc-rate}}
By \cref{lem:rate-threshold-convergence} and \cref{thm:threshold-allocation}(ii),
\begin{equation*}
    -\frac{\log\mathrm{EOC}(\cH_t)}{\sum_iN_i(t)}
    \longrightarrow\Gamma(\boldsymbol w(q^\dagger);\boldsymbol\theta^0)
    \le\Gamma^*(\boldsymbol\theta^0).
\end{equation*}
For the lower bound, write $H(q):=\sum_i1/d_i(\theta_i^0,q)$. Part (iii) of \cref{thm:threshold-allocation} gives
\begin{equation*}
    \Gamma(\boldsymbol w(q^\dagger);\boldsymbol\theta^0)
    \ge\frac1{H(q^\dagger)}.
\end{equation*}
It therefore remains to show $\Gamma^*(\boldsymbol\theta^0)/2\le1/H(q^\dagger)$.

Let $\boldsymbol w^*$ maximize $\Gamma$. By the proof of \cref{prop:optimal-eoc-rate}, $w_i^*>0$ for every $i$. The functions (of $q$)
\begin{equation*}
    \min_{i\le k}w_i^*d_i(\theta_i^0,q),
    \qquad
    \min_{j>k}w_j^*d_j(\theta_j^0,q)
\end{equation*}
are continuous on $[\theta_{k+1}^0,\theta_k^0]$. The first is positive at $\theta_{k+1}^0$ and zero at $\theta_k^0$, while the second is zero at $\theta_{k+1}^0$ and positive at $\theta_k^0$. Thus they are equal at some $\bar q\in(\theta_{k+1}^0,\theta_k^0)$. Write
\begin{equation*}
    r:= \min_{i\le k}w_i^*d_i(\theta_i^0,\bar q)
      =\min_{j>k}w_j^*d_j(\theta_j^0,\bar q)>0.
\end{equation*}
Choose $i\le k<j$ attaining the two minima. Evaluating \eqref{eq:posterior-rate} at this pair and $x=\bar q$ gives
\begin{equation*}
    \Gamma^*(\boldsymbol\theta^0)
    \le w_i^*d_i(\theta_i^0,\bar q)+w_j^*d_j(\theta_j^0,\bar q)
    =2r.
\end{equation*}
Since $w_i^*d_i(\theta_i^0,\bar q)\ge r$ for every $i$,
\begin{equation*}
    rH(\bar q)=\sum_i\frac r{d_i(\theta_i^0,\bar q)}
    \le\sum_iw_i^*=1.
\end{equation*}
Since $q^\dagger$ minimizes $H$,
\begin{equation*}
    \frac12\Gamma^*(\boldsymbol\theta^0)
    \le r\le\frac1{H(\bar q)}\le\frac1{H(q^\dagger)}
    \le\Gamma(\boldsymbol w(q^\dagger);\boldsymbol\theta^0).
\end{equation*}
The corresponding result for the posterior error follows from \cref{lem:posterior-eoc-rate}.
\end{proof}

\section{Additional simulation results}\label{app:pcs}

This section presents the remaining plots of the experiments in \cref{sec:experiments}. \Cref{app:pcs-medium} gives the EOC plots for the Bernoulli, exponential, and lognormal families and the PCS counterparts of all the EOC plots, which use the middle column in \cref{tab:instances} for distributions of the true means. \Cref{app:pcs-hard} and \cref{app:pcs-easy} give the EOC and PCS plots for the right and left columns, respectively. \Cref{tab:instances} is repeated below for reference.

\begin{table}[!htb]
\centering
\small
\begin{tabular}{l ccc}
\toprule
Likelihood $Y_{i,n}\mid\theta_i$        & \multicolumn{3}{c}{Distribution of true means $\theta_i$}\\
\midrule
$N(\theta_i,1)$                             & $\mathrm{Unif}(-2,2)$       & $\mathrm{Unif}(-1,1)$    & $\mathrm{Unif}(-0.5,0.5)$\\
$\mathrm{Exp}(1/\theta_i)$                  & $\mathrm{Unif}(1,9)$        & $\mathrm{Unif}(1,5)$     & $\mathrm{Unif}(1,3)$\\
$\mathrm{Bern}(\theta_i)$                   & $\mathrm{Unif}(0.05,0.95)$  & $\mathrm{Unif}(0.2,0.8)$ & $\mathrm{Unif}(0.35,0.65)$\\
$\mathrm{Lognormal}(\log\theta_i\!-\!1/2,\,1)$  & $\mathrm{Unif}(1,9)$        & $\mathrm{Unif}(1,5)$     & $\mathrm{Unif}(1,3)$\\
\bottomrule
\end{tabular}
\end{table}

\subsection{Middle column}\label{app:pcs-medium}

\begin{figure}[!htb]
\centering
\includegraphics[width=0.92\textwidth]{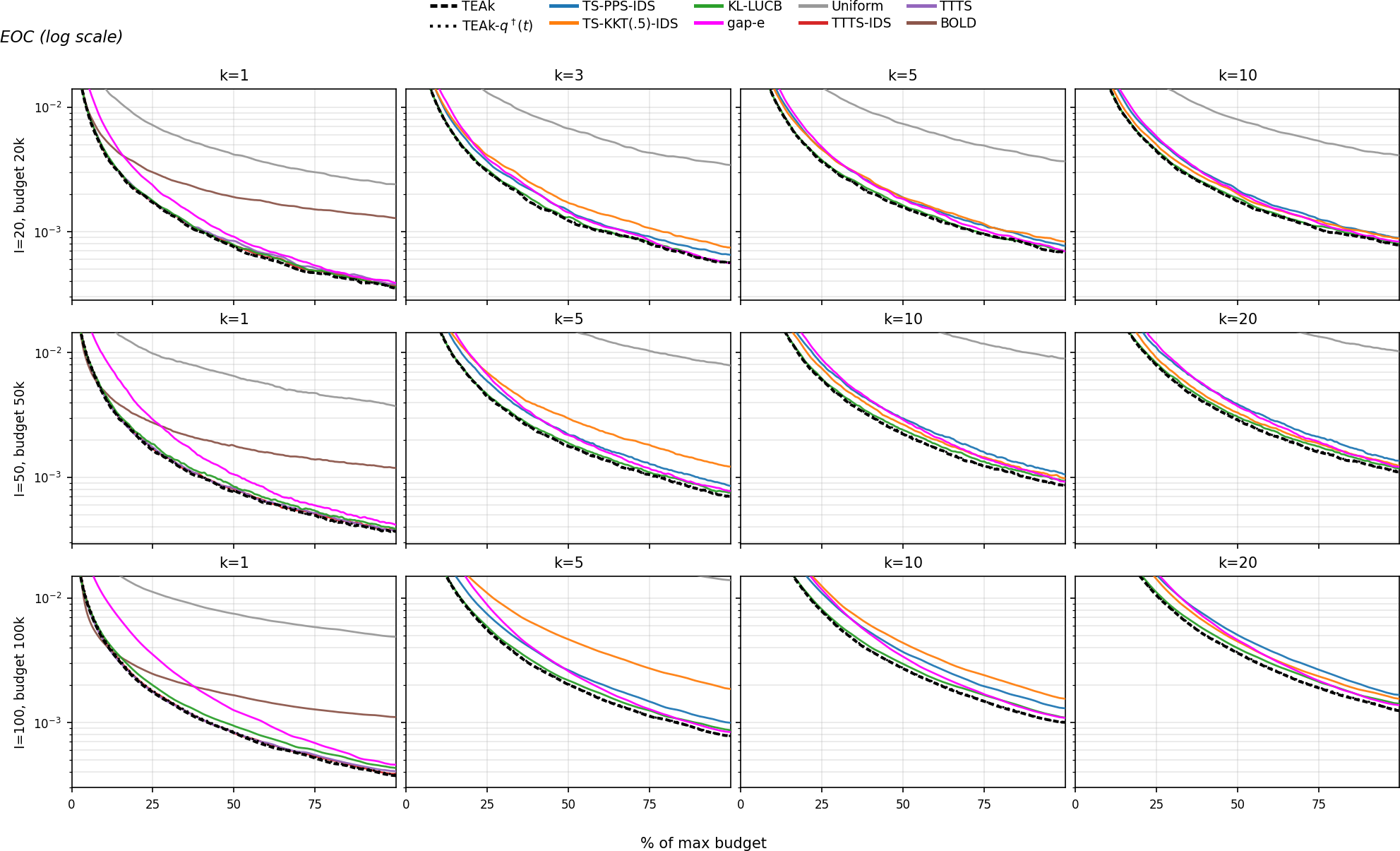}
\caption{Bernoulli: same format as \cref{fig:grid-gauss}.}
\label{fig:grid-bern}
\end{figure}

\begin{figure}[!htb]
\centering
\includegraphics[width=0.92\textwidth]{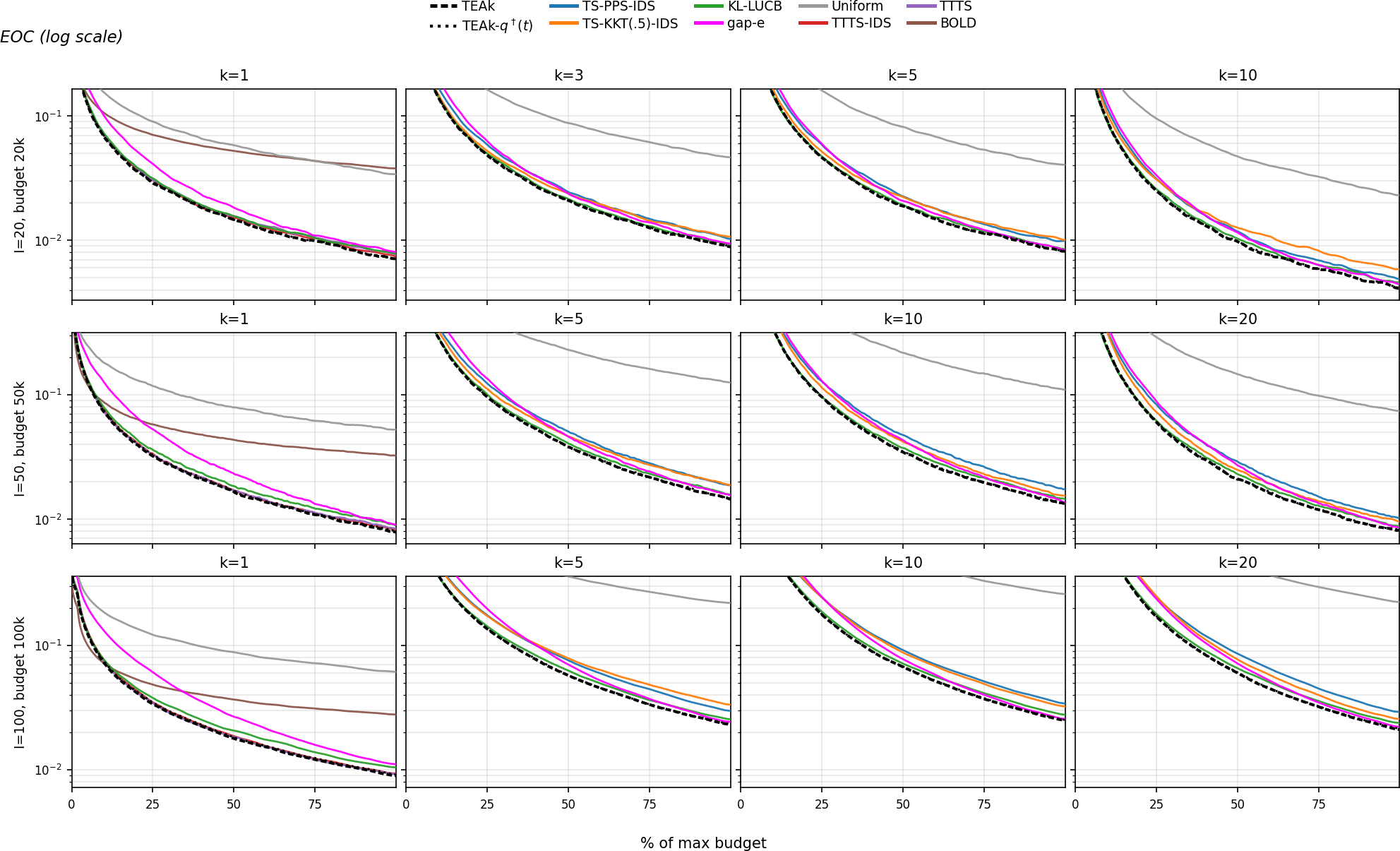}
\caption{Exponential: same format as \cref{fig:grid-gauss}.}
\label{fig:grid-exp}
\end{figure}

\begin{figure}[!htb]
\centering
\includegraphics[width=0.92\textwidth]{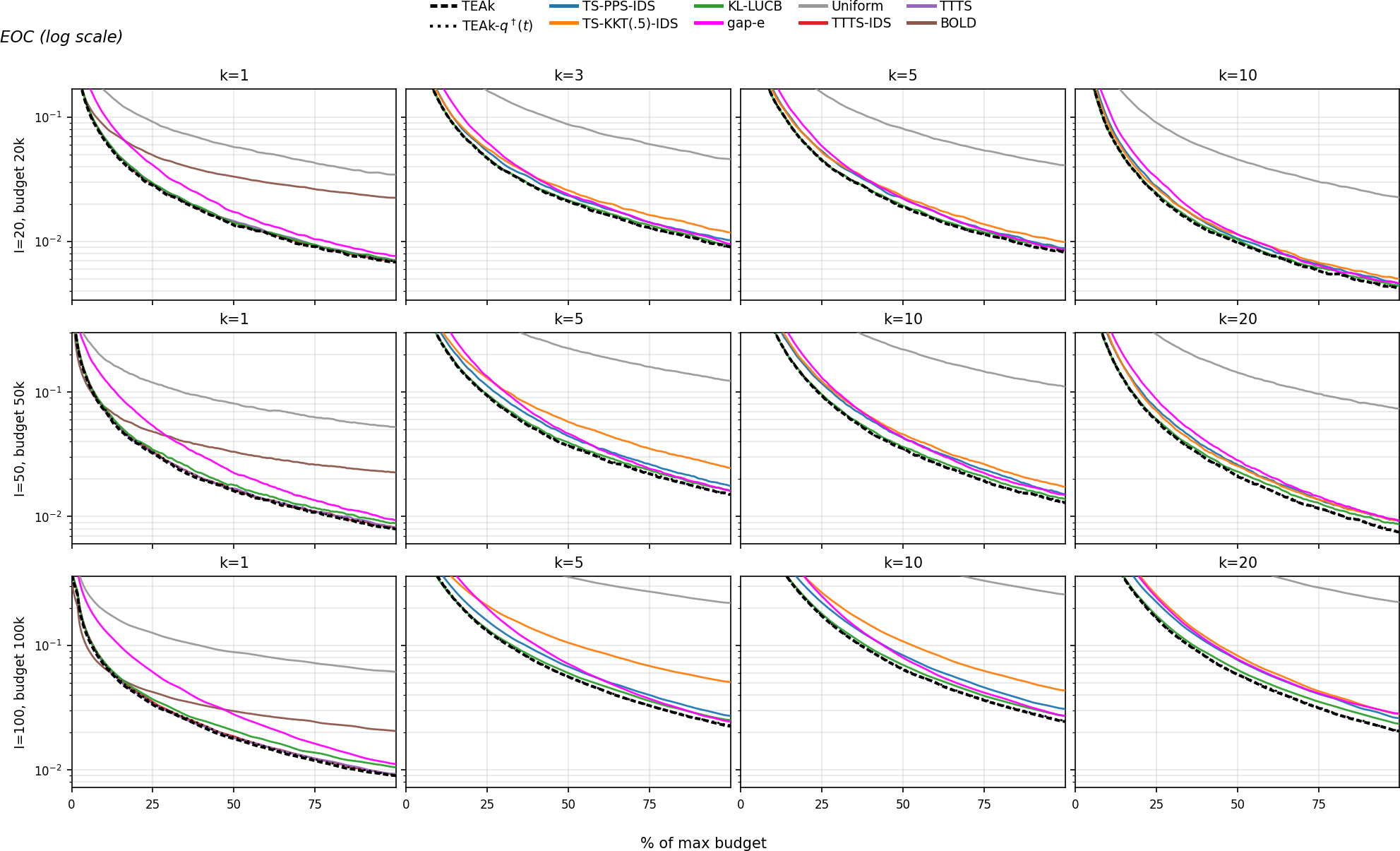}
\caption{Lognormal: same format as \cref{fig:grid-gauss}.}
\label{fig:grid-logn}
\end{figure}

\begin{figure}[!htb]
\centering
\includegraphics[width=0.92\textwidth]{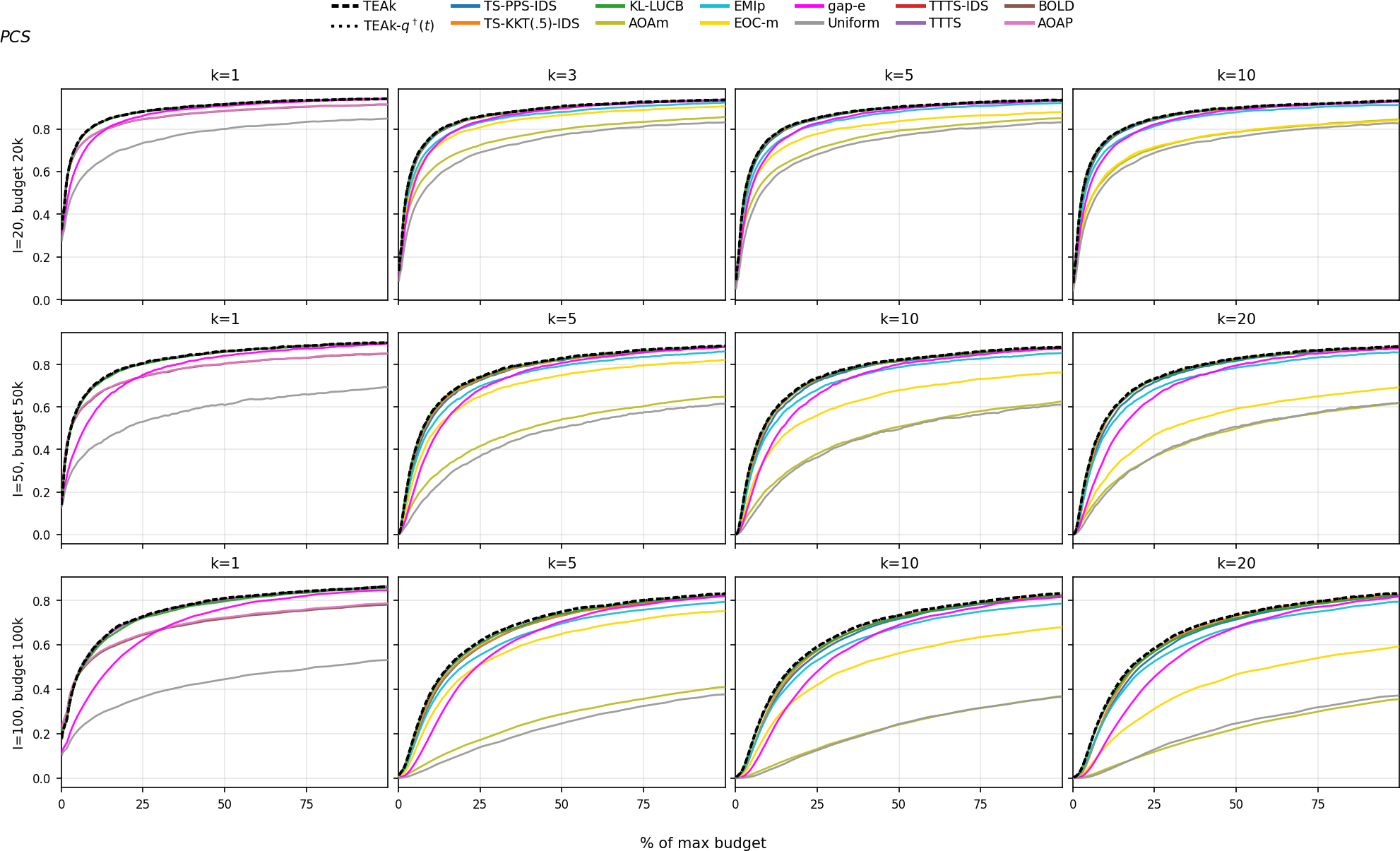}
\caption{Gaussian: PCS.}
\label{fig:grid-pcs-gauss}
\end{figure}

\begin{figure}[!htb]
\centering
\includegraphics[width=0.92\textwidth]{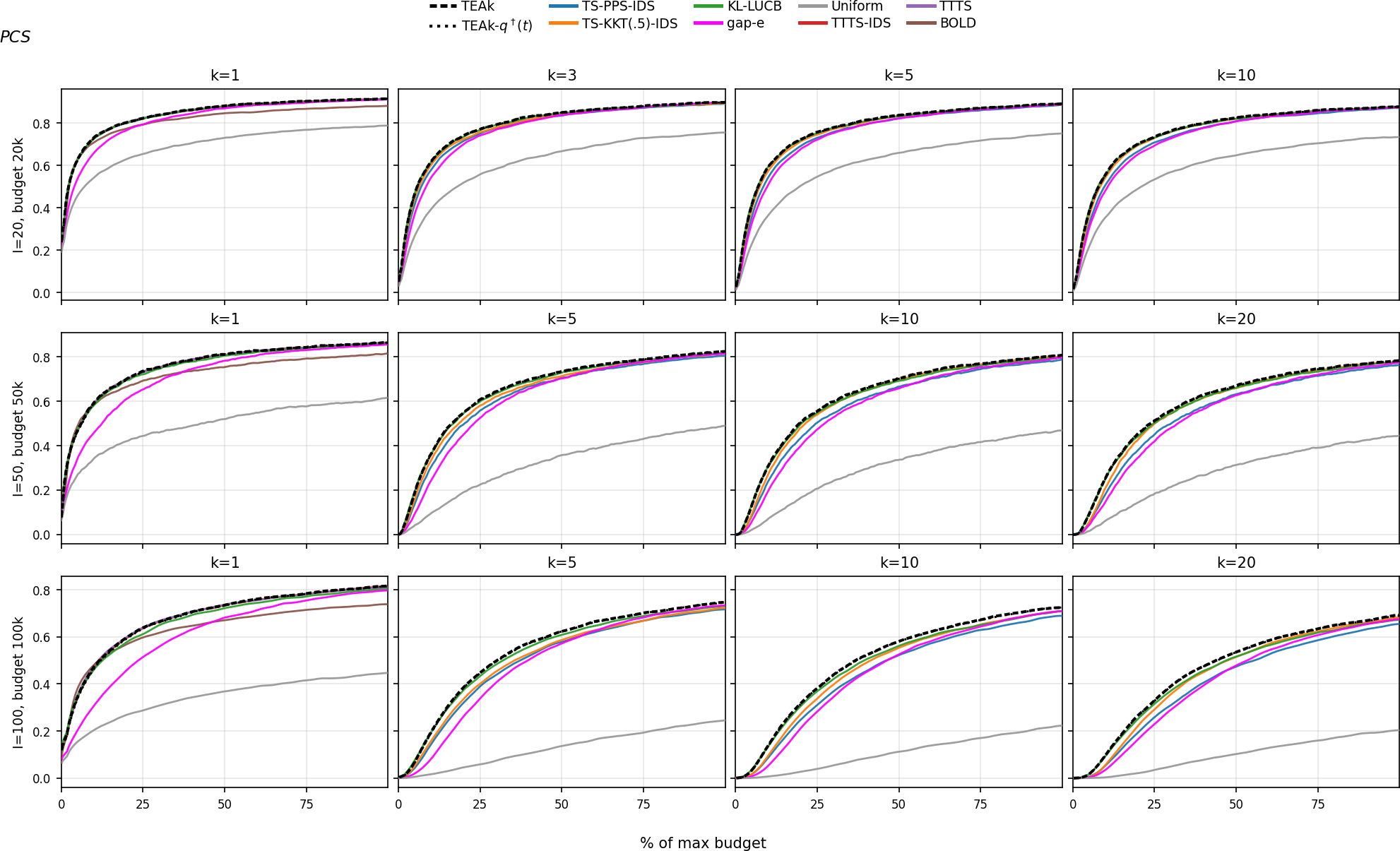}
\caption{Bernoulli: PCS.}
\label{fig:grid-pcs-bern}
\end{figure}

\begin{figure}[!htb]
\centering
\includegraphics[width=0.92\textwidth]{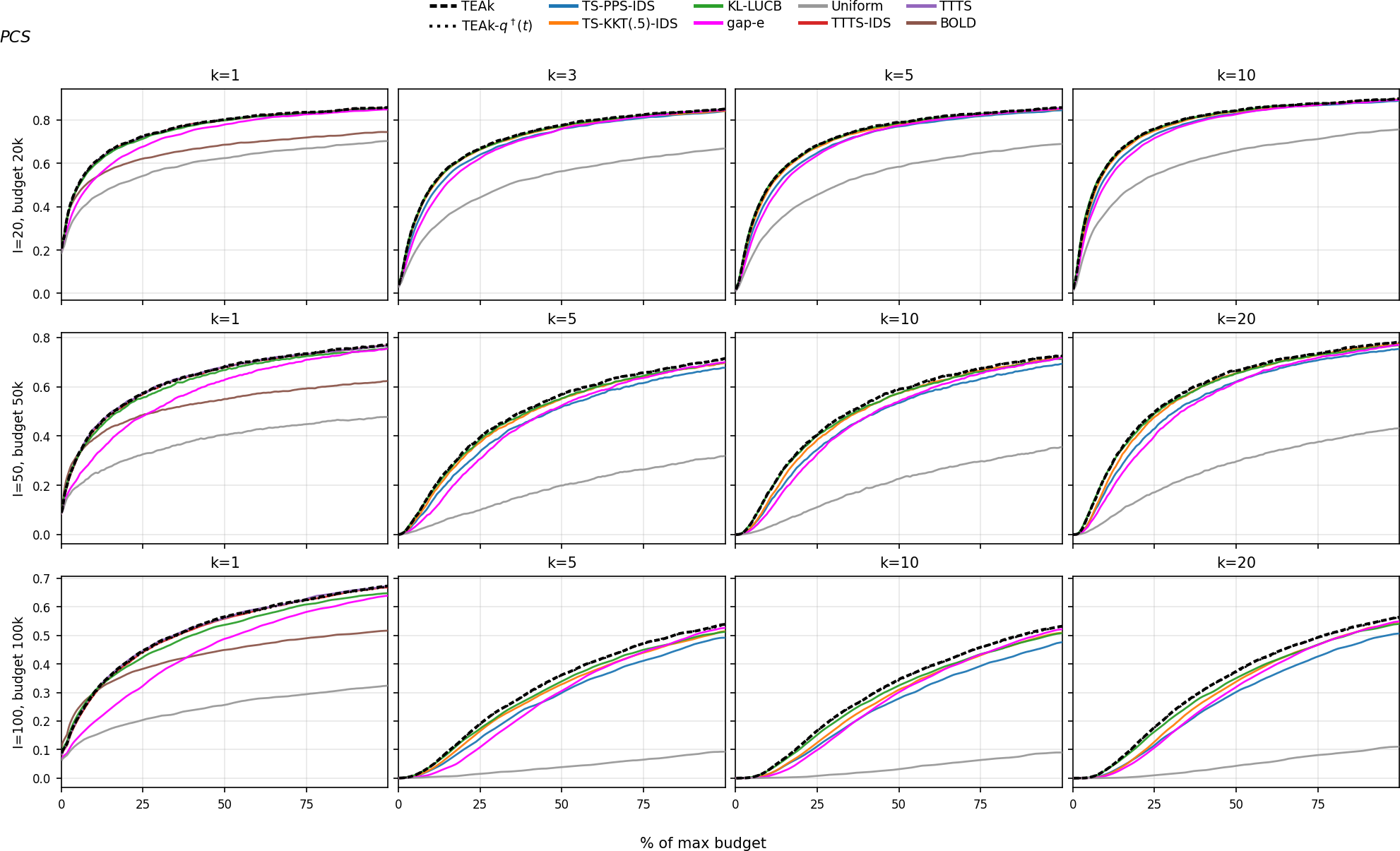}
\caption{Exponential: PCS.}
\label{fig:grid-pcs-exp}
\end{figure}

\begin{figure}[!htb]
\centering
\includegraphics[width=0.92\textwidth]{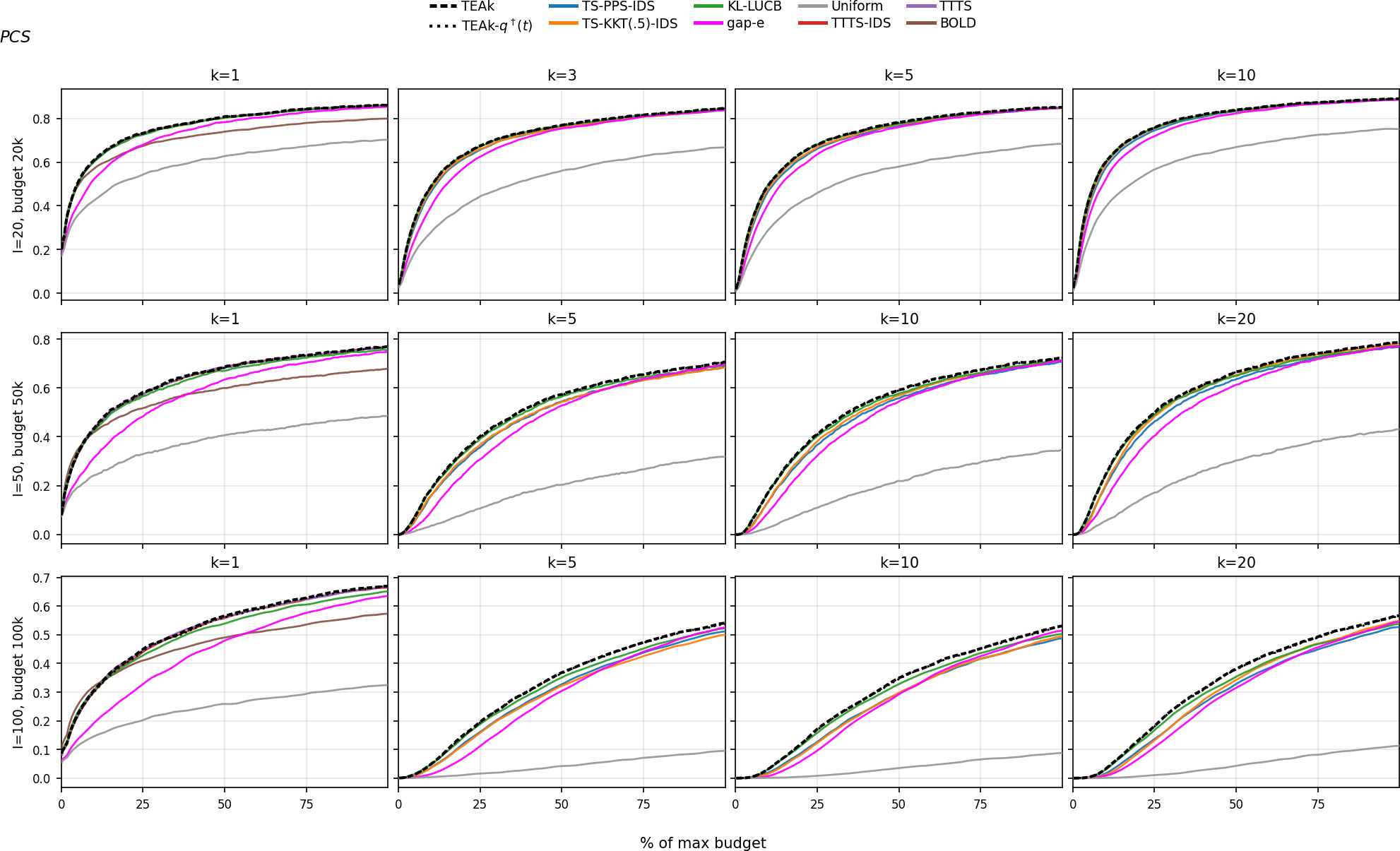}
\caption{Lognormal: PCS.}
\label{fig:grid-pcs-logn}
\end{figure}

\clearpage

\subsection{Right column}\label{app:pcs-hard}

\begin{figure}[!htb]
\centering
\includegraphics[width=0.92\textwidth]{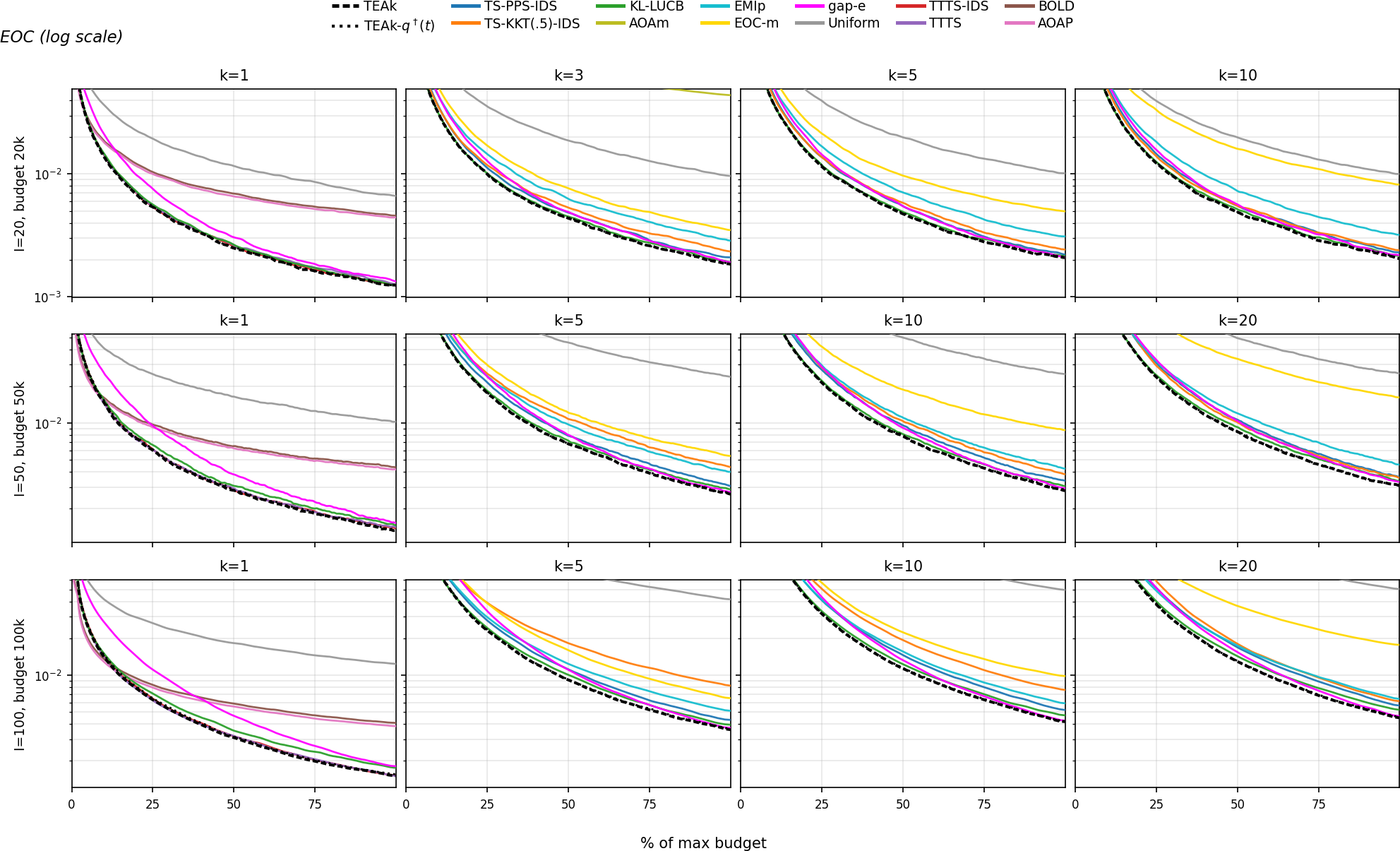}
\caption{Gaussian: EOC.}
\label{fig:grid-eoc-gauss-hard}
\end{figure}

\begin{figure}[!htb]
\centering
\includegraphics[width=0.92\textwidth]{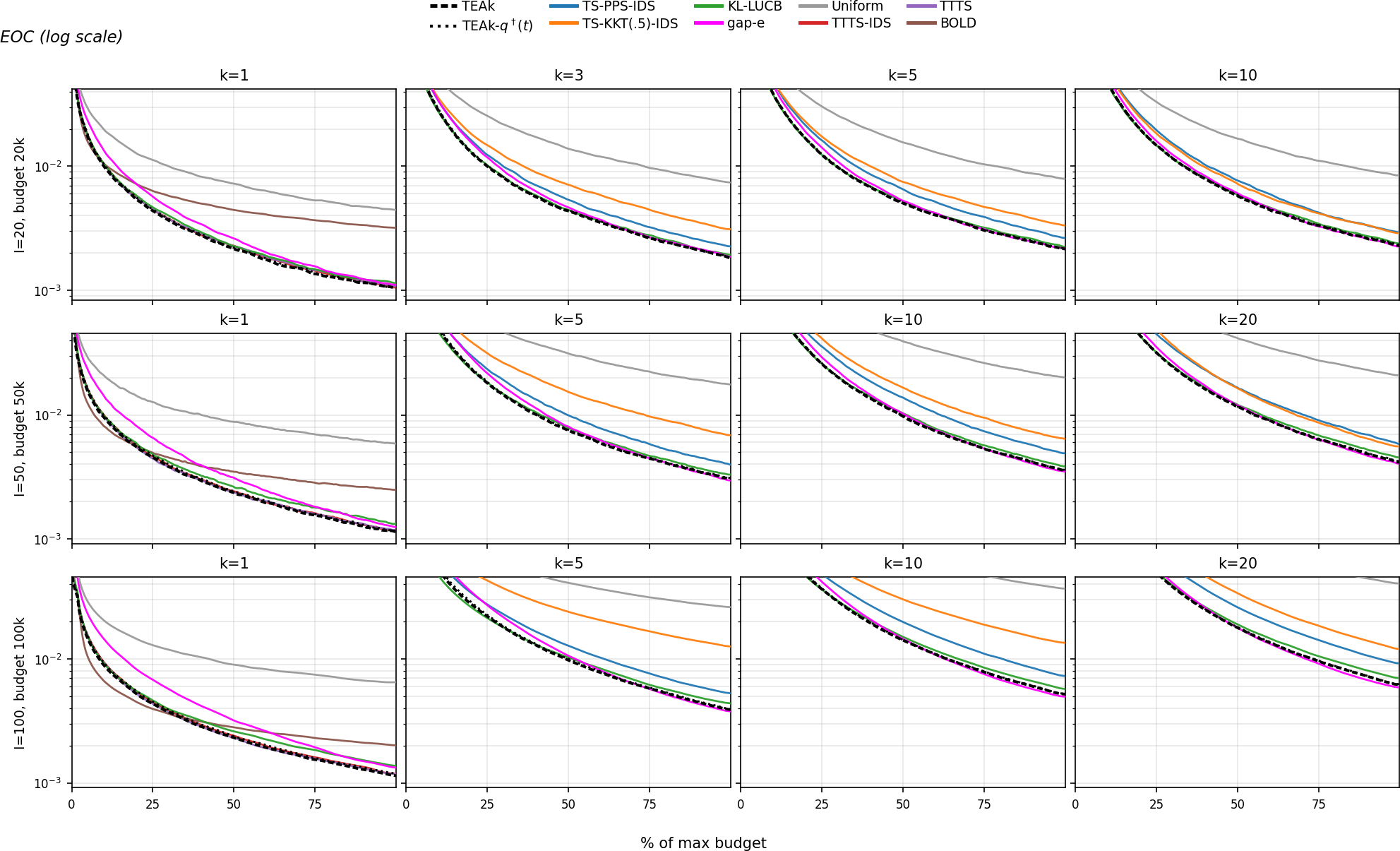}
\caption{Bernoulli: EOC.}
\label{fig:grid-eoc-bern-hard}
\end{figure}

\begin{figure}[!htb]
\centering
\includegraphics[width=0.92\textwidth]{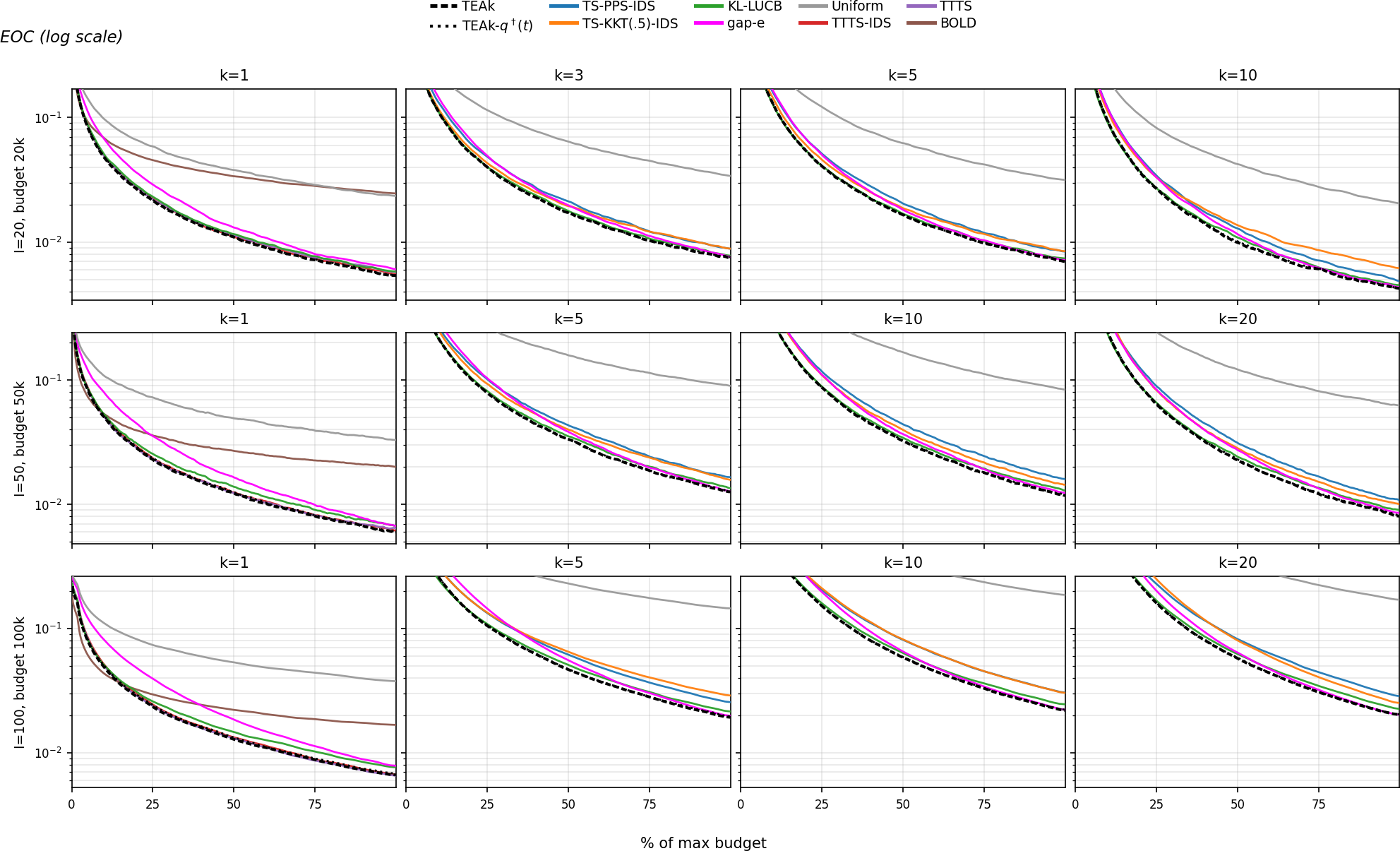}
\caption{Exponential: EOC.}
\label{fig:grid-eoc-exp-hard}
\end{figure}

\begin{figure}[!htb]
\centering
\includegraphics[width=0.92\textwidth]{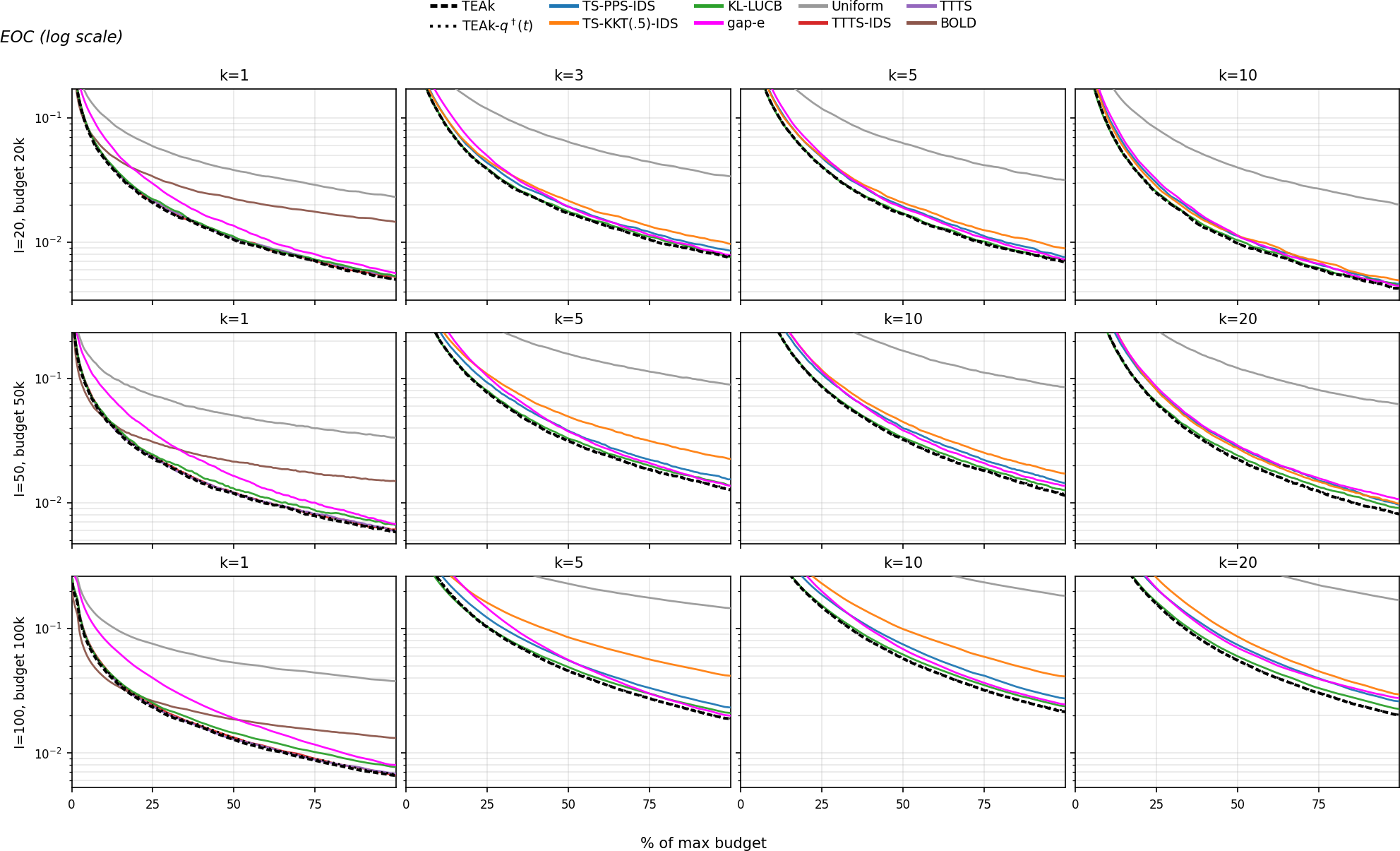}
\caption{Lognormal: EOC.}
\label{fig:grid-eoc-logn-hard}
\end{figure}

\begin{figure}[!htb]
\centering
\includegraphics[width=0.92\textwidth]{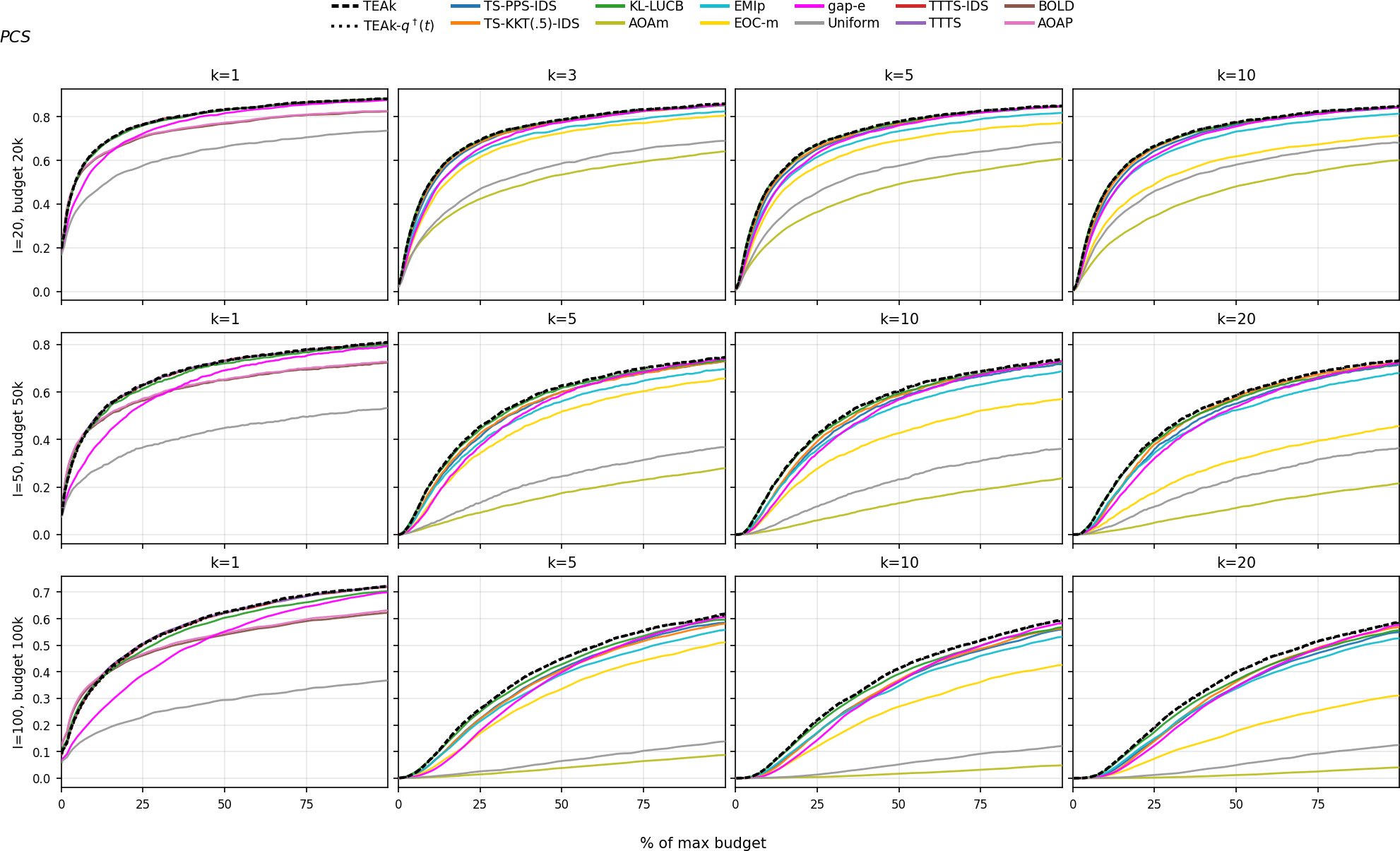}
\caption{Gaussian: PCS.}
\label{fig:grid-pcs-gauss-hard}
\end{figure}

\begin{figure}[!htb]
\centering
\includegraphics[width=0.92\textwidth]{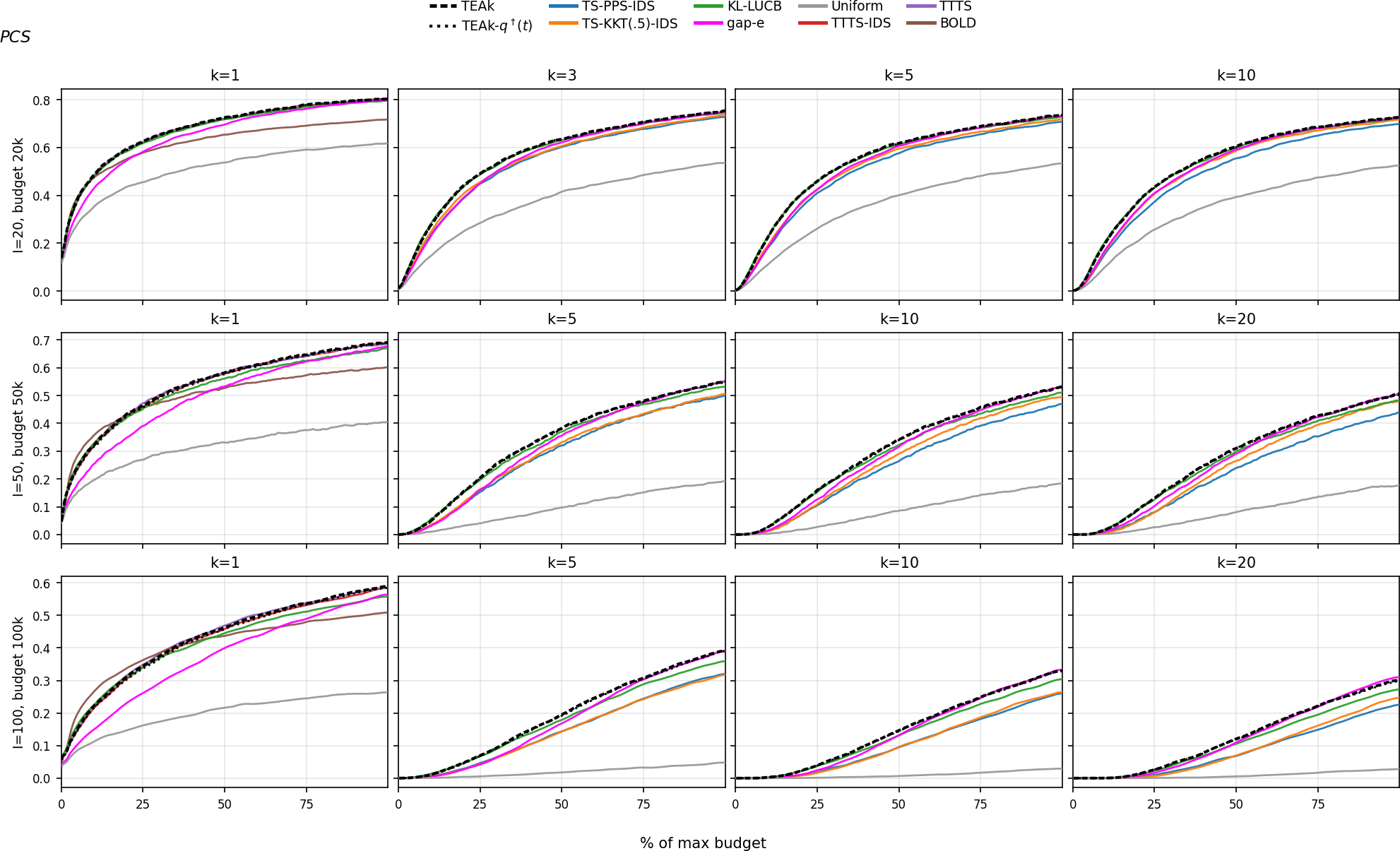}
\caption{Bernoulli: PCS.}
\label{fig:grid-pcs-bern-hard}
\end{figure}

\begin{figure}[!htb]
\centering
\includegraphics[width=0.92\textwidth]{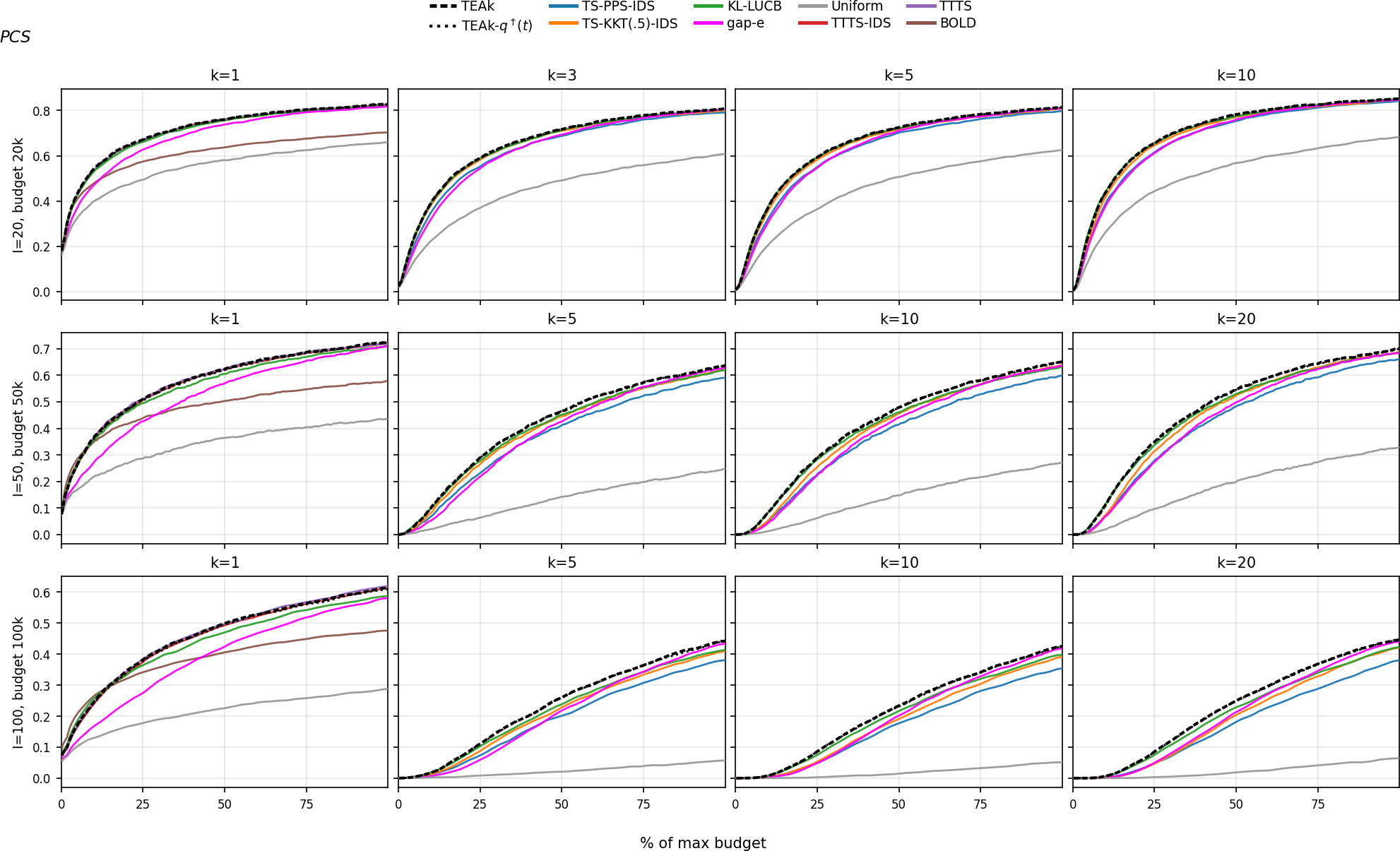}
\caption{Exponential: PCS.}
\label{fig:grid-pcs-exp-hard}
\end{figure}

\begin{figure}[!htb]
\centering
\includegraphics[width=0.92\textwidth]{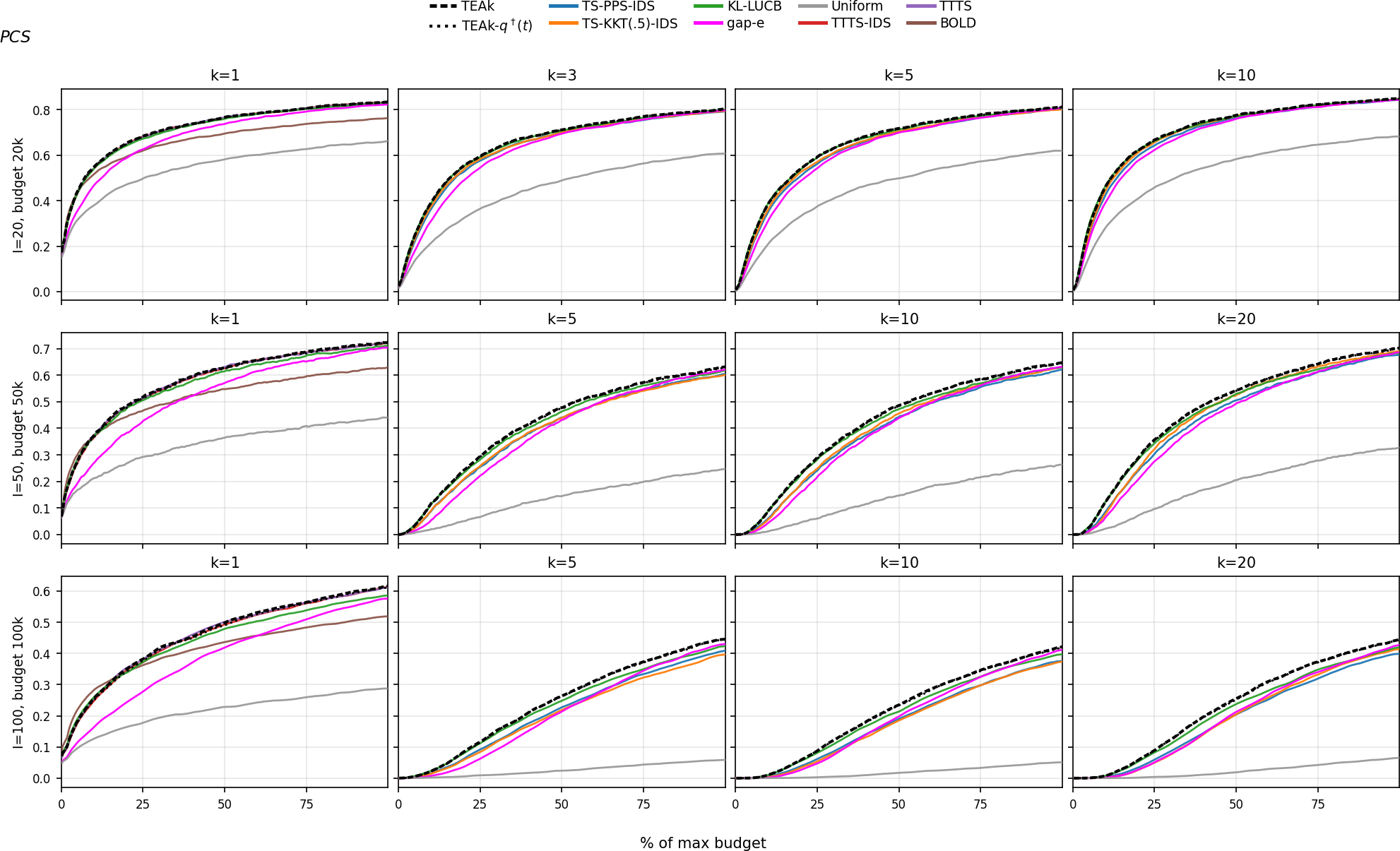}
\caption{Lognormal: PCS.}
\label{fig:grid-pcs-logn-hard}
\end{figure}

\clearpage

\subsection{Left column}\label{app:pcs-easy}

\begin{figure}[!htb]
\centering
\includegraphics[width=0.92\textwidth]{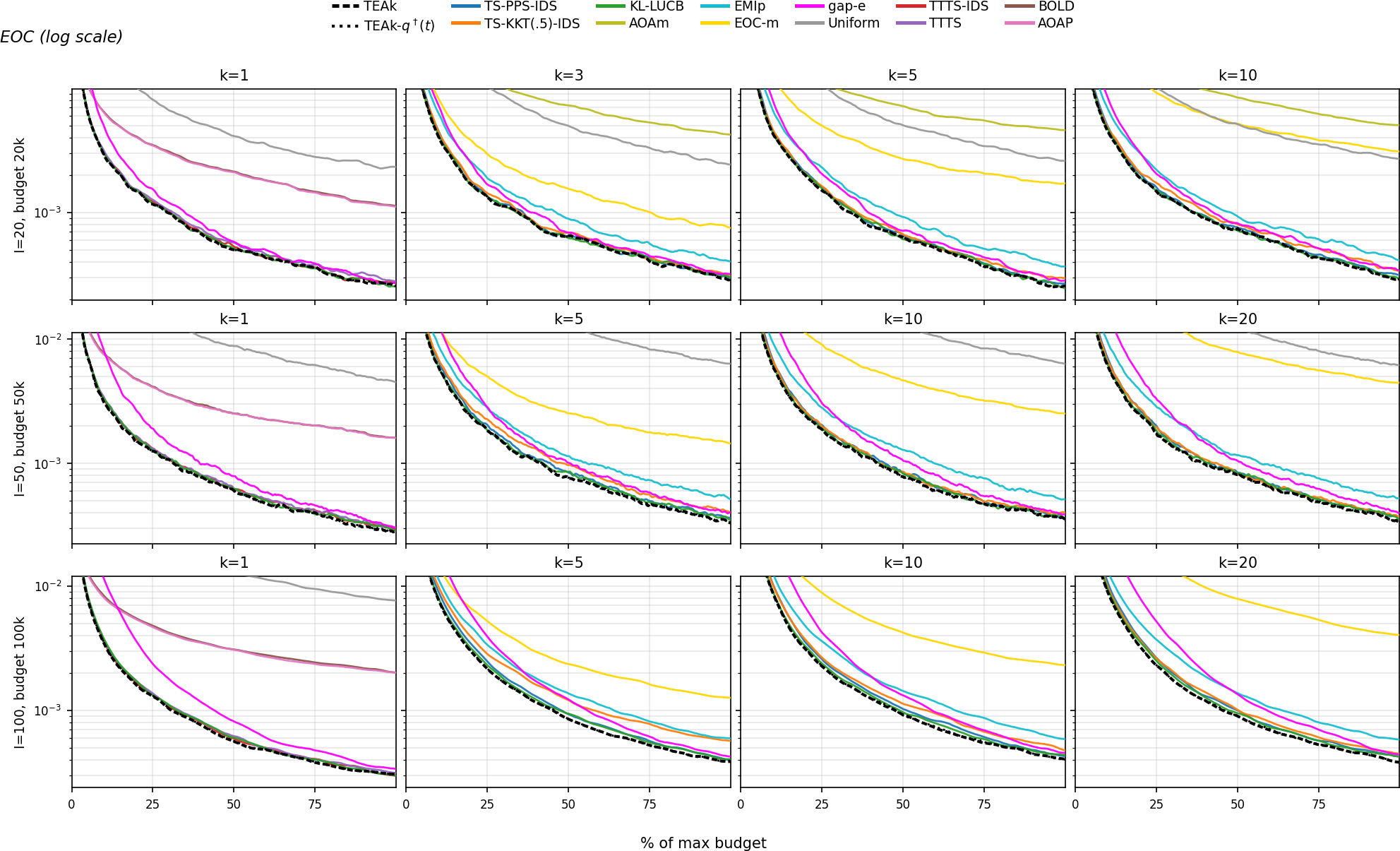}
\caption{Gaussian: EOC.}
\label{fig:grid-eoc-gauss-easy}
\end{figure}

\begin{figure}[!htb]
\centering
\includegraphics[width=0.92\textwidth]{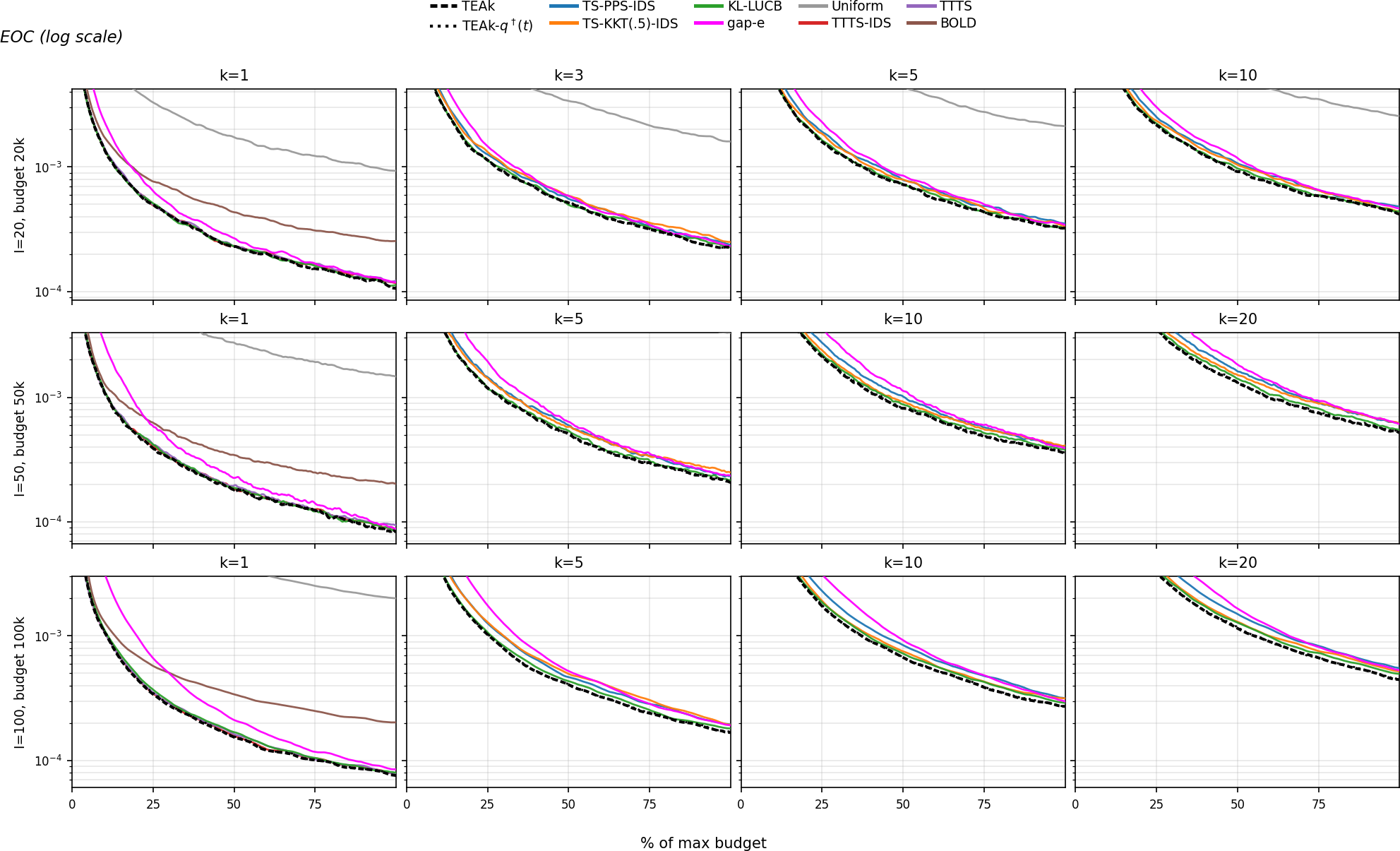}
\caption{Bernoulli: EOC.}
\label{fig:grid-eoc-bern-easy}
\end{figure}

\begin{figure}[!htb]
\centering
\includegraphics[width=0.92\textwidth]{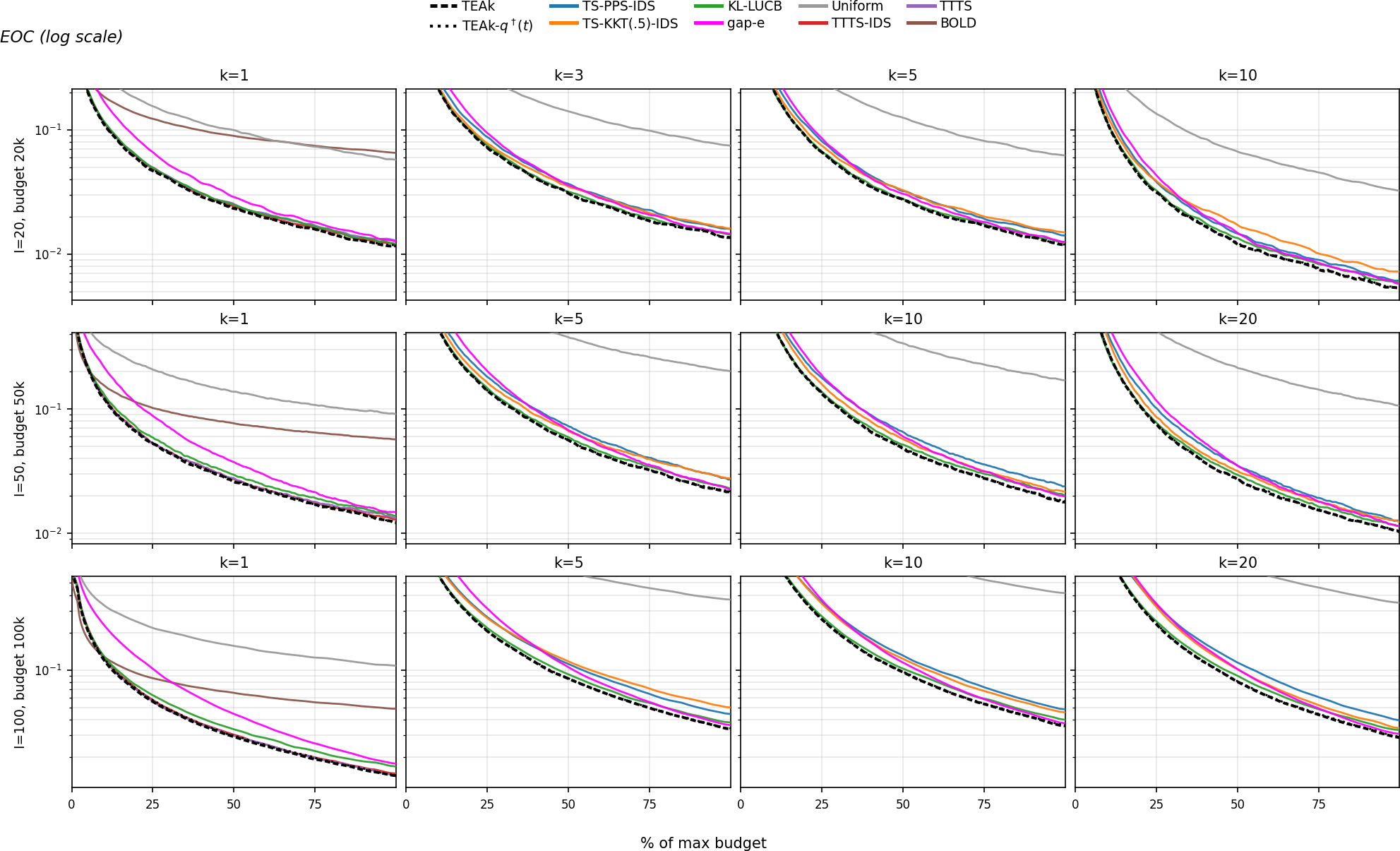}
\caption{Exponential: EOC.}
\label{fig:grid-eoc-exp-easy}
\end{figure}

\begin{figure}[!htb]
\centering
\includegraphics[width=0.92\textwidth]{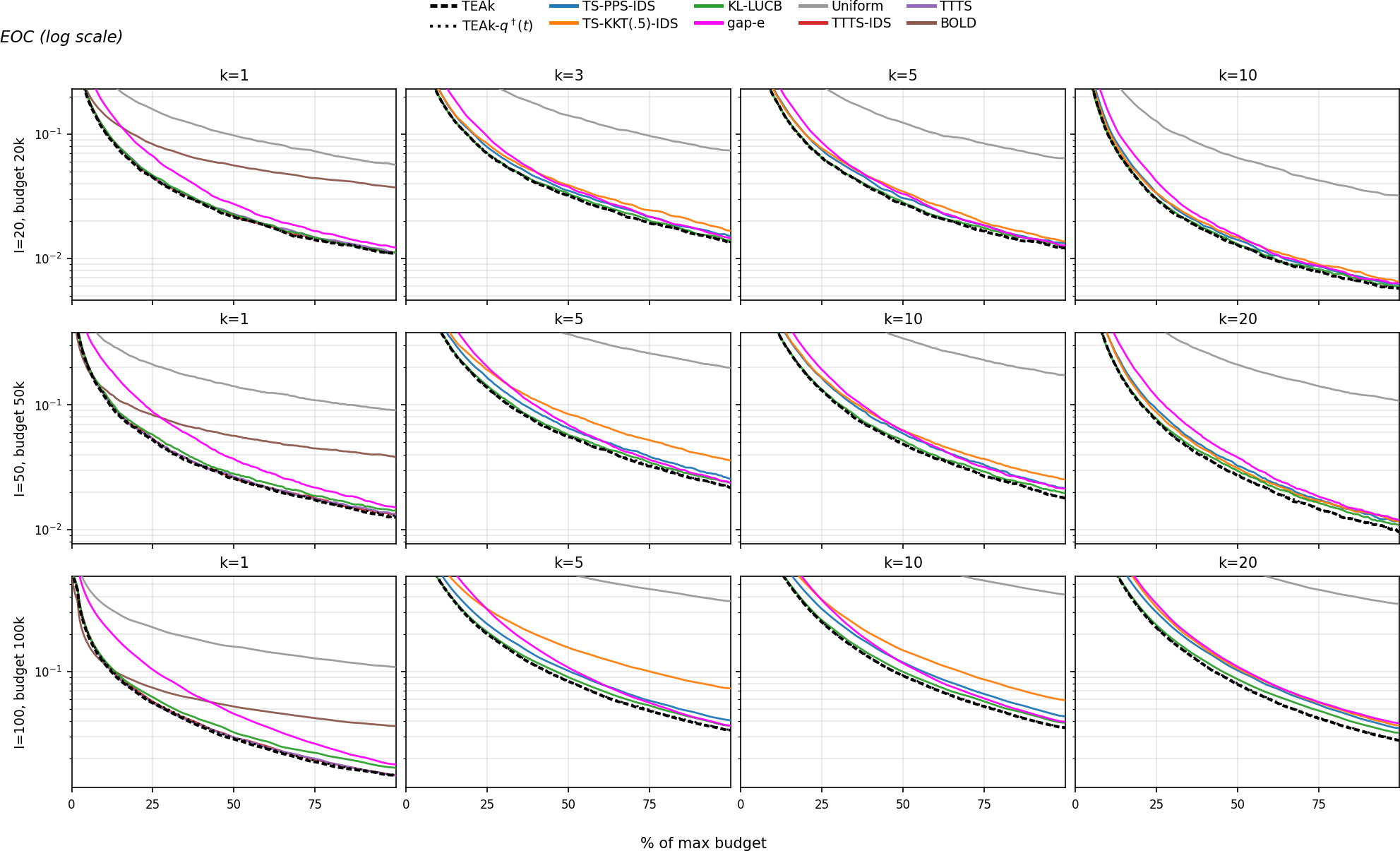}
\caption{Lognormal: EOC.}
\label{fig:grid-eoc-logn-easy}
\end{figure}

\begin{figure}[!htb]
\centering
\includegraphics[width=0.92\textwidth]{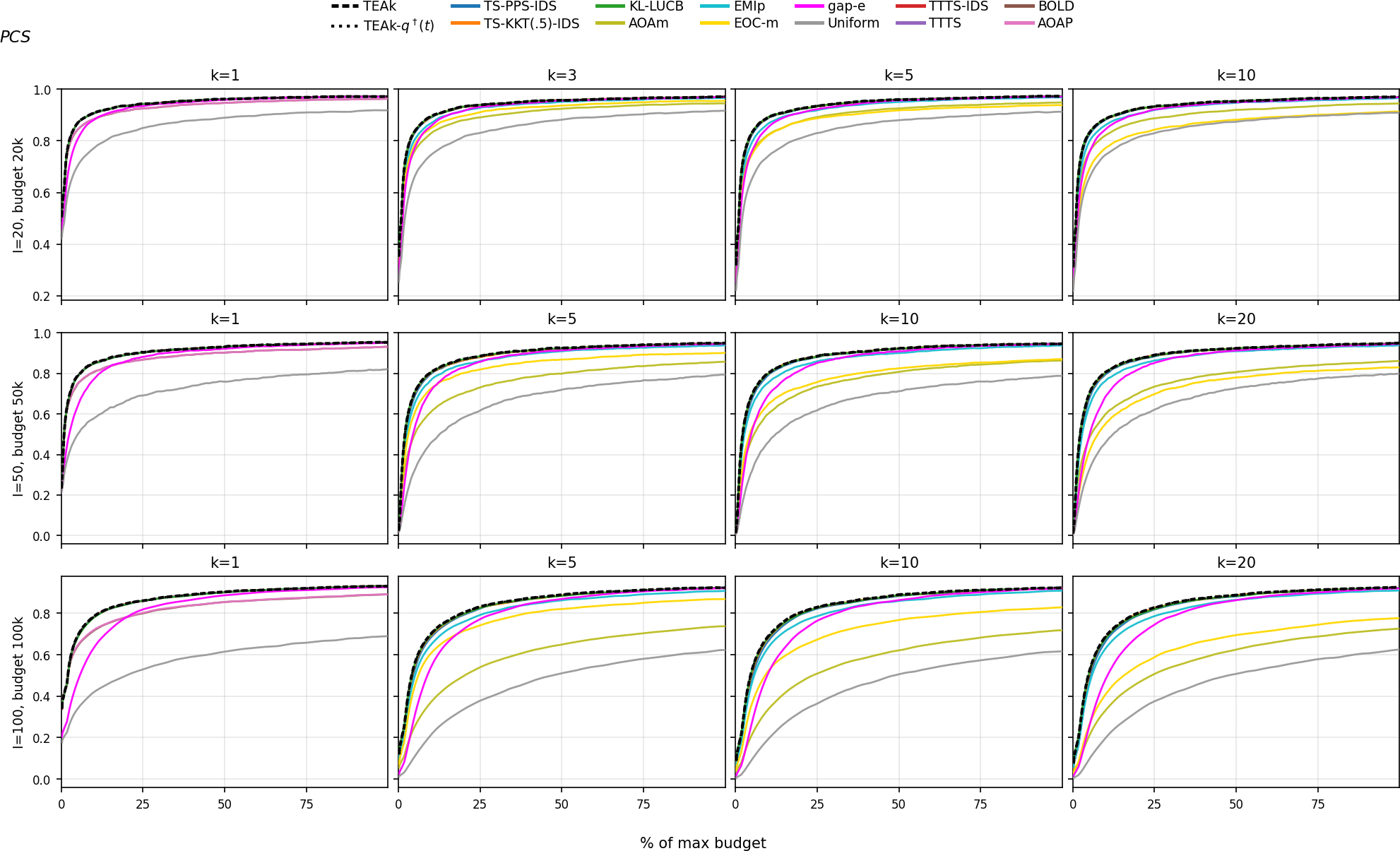}
\caption{Gaussian: PCS.}
\label{fig:grid-pcs-gauss-easy}
\end{figure}

\begin{figure}[!htb]
\centering
\includegraphics[width=0.92\textwidth]{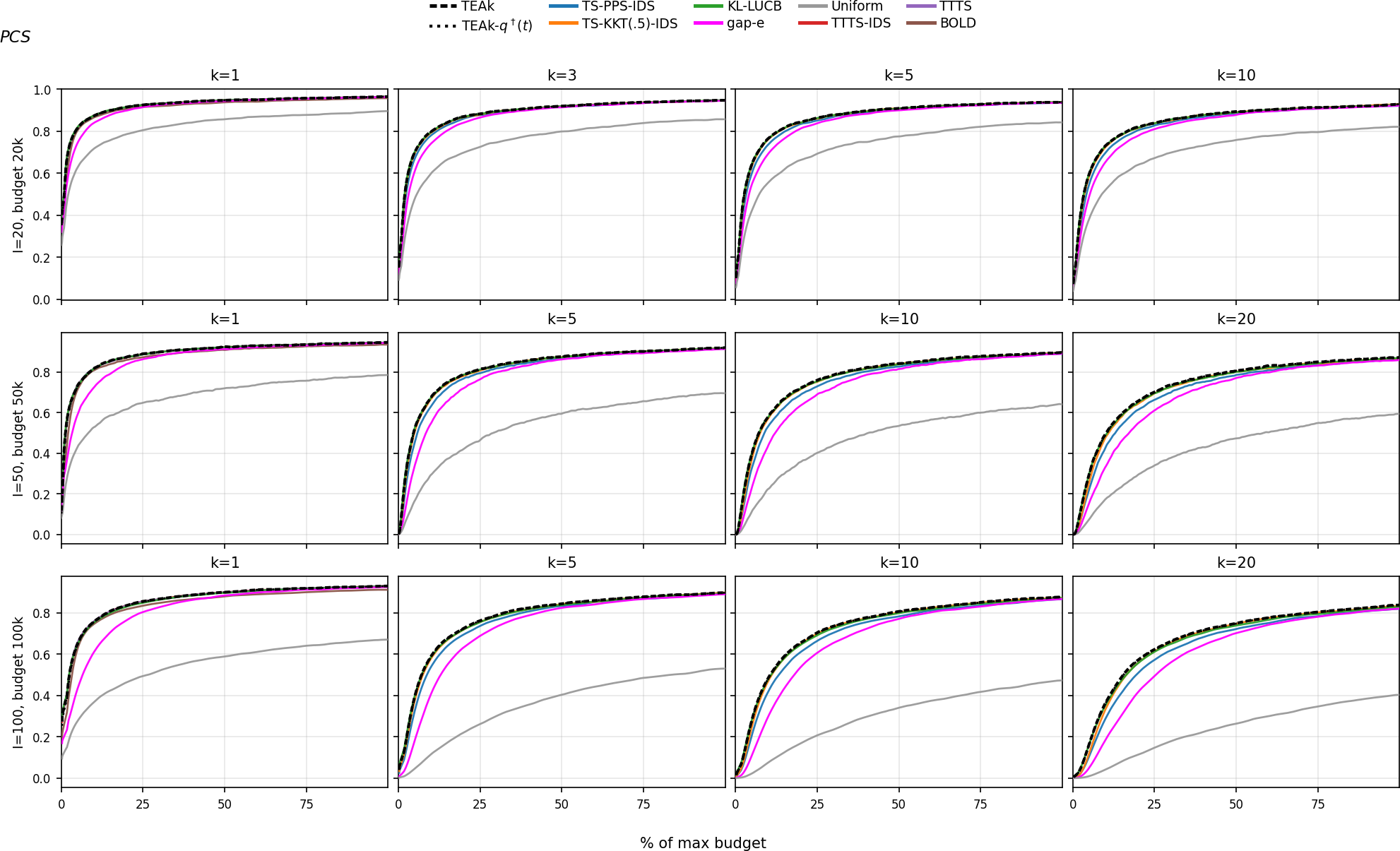}
\caption{Bernoulli: PCS.}
\label{fig:grid-pcs-bern-easy}
\end{figure}

\begin{figure}[!htb]
\centering
\includegraphics[width=0.92\textwidth]{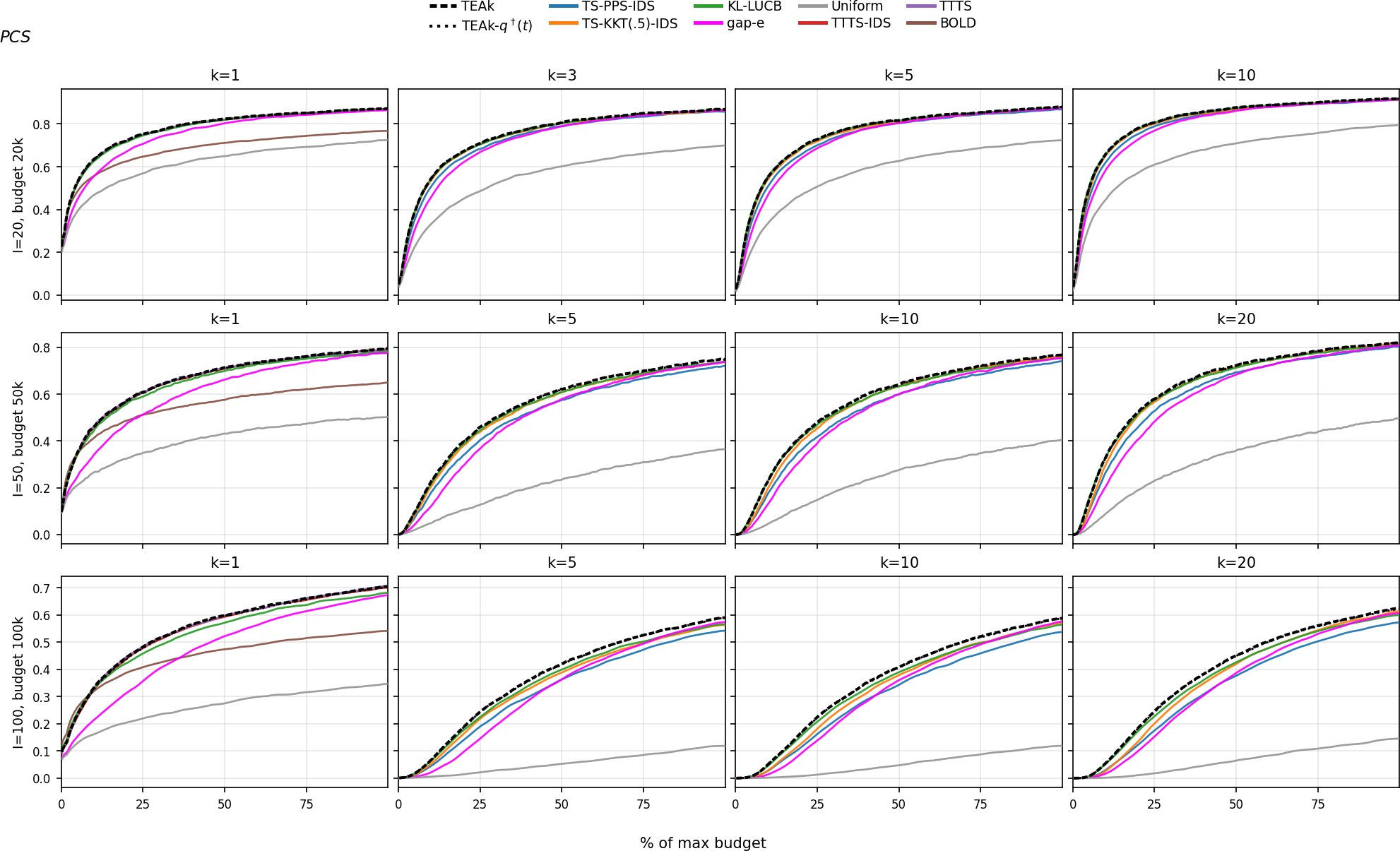}
\caption{Exponential: PCS.}
\label{fig:grid-pcs-exp-easy}
\end{figure}

\begin{figure}[!htb]
\centering
\includegraphics[width=0.92\textwidth]{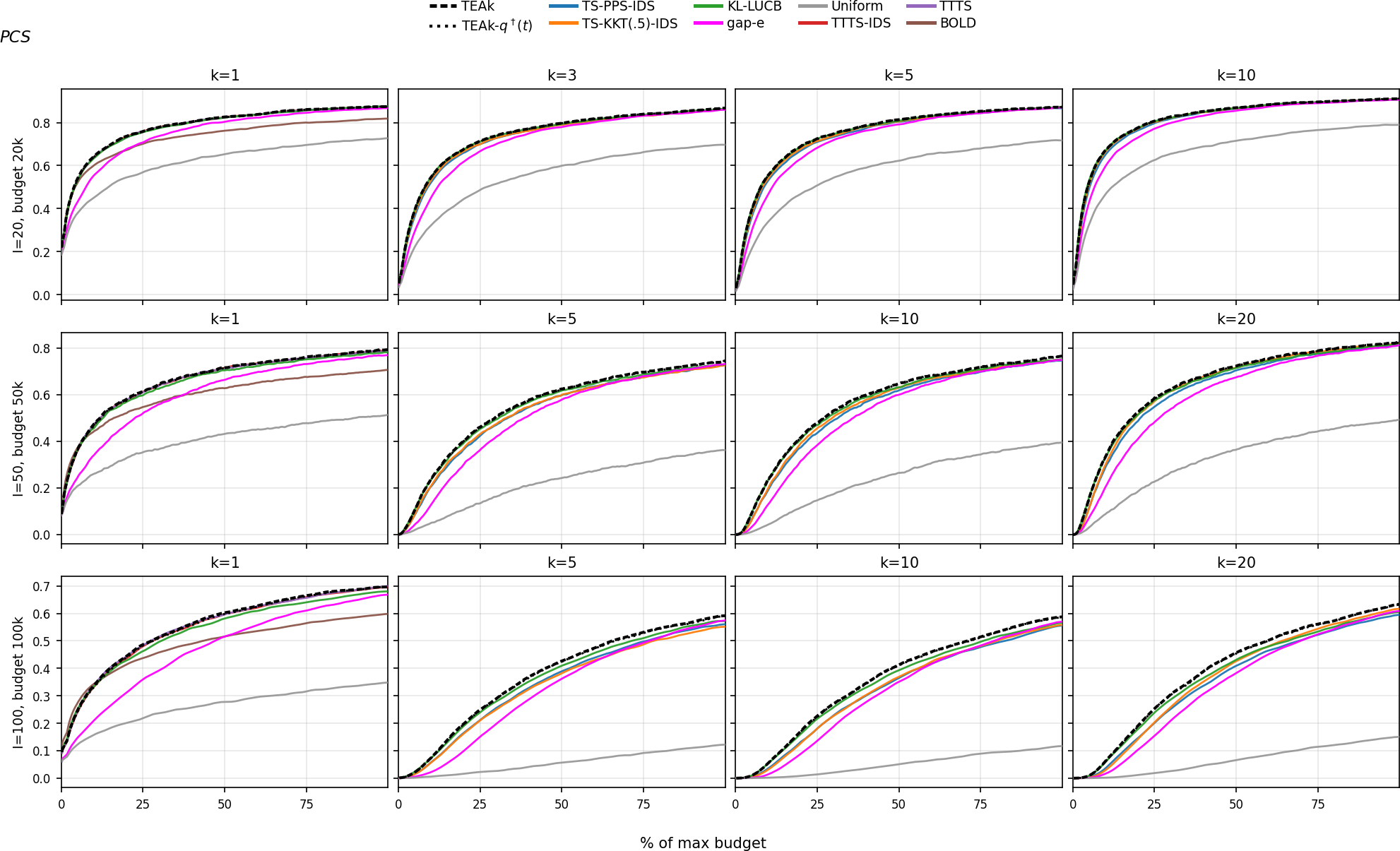}
\caption{Lognormal: PCS.}
\label{fig:grid-pcs-logn-easy}
\end{figure}

\end{APPENDICES}

\end{document}